\documentclass[sigconf]{}

%%
%% \BibTeX command to typeset BibTeX logo in the docs
\AtBeginDocument{%
	\providecommand\BibTeX{{%
			\normalfont B\kern-0.5em{\scshape i\kern-0.25em b}\kern-0.8em\TeX}}}
%\setlength{\textfloatsep}{0.2in}

%my packages and commands
\usepackage{booktabs}
\usepackage{subfigure}
\usepackage{bbm}
\usepackage{graphicx}
\usepackage{multirow}
\usepackage{enumitem}
\usepackage{amsmath}
% \usepackage{caption}
% \captionsetup[figure]{font=scriptsize,labelfont=Large}
\usepackage{stmaryrd}
\usepackage[linesnumbered,ruled]{algorithm2e}
\usepackage{threeparttable}
\usepackage{footmisc}
\usepackage{hyperref}
\usepackage{xspace}
\usepackage{amsmath}

\newcommand{\grand}{GRAND\xspace}
\newcommand{\model}{GRAND+\xspace}

\newcommand{\full}{\textsc{Graph Random Neural Networks}}

\newcommand{\norm}[1]{\left\lVert#1\right\rVert}

\newtheorem{thm}{Theorem} 
\newtheorem{lemma}{Lemma}
\newcommand{\tabincell}[2]{\begin{tabular}{@{}#1@{}}#2\end{tabular}}

\newcommand{\vpara}[1]{\vspace{0.05in}\noindent\textbf{#1}\xspace}

\newcommand{\reminder}[1]{\textbf{\color{red}[** #1 **]}} 

\newcommand{\dong}[1]{\textit{{\color{red}#1 }}}

\makeatletter

\usepackage{footnote}
\usepackage{footmisc}
\makesavenoteenv{tabular}
\makesavenoteenv{table}

\newcommand{\astfootnote}[1]{
\let\oldthefootnote=\thefootnote
\setcounter{footnote}{0}
\footnote{#1}
\let\thefootnote=\oldthefootnote
}

\usepackage{threeparttable}
\theoremstyle{definition}

\theoremstyle{problemstyle}  % <name>
 % Commentout [section] to remove section number dependence

\newcommand{\hide}[1]{} %hide
  % to fix
  % to fix

\usepackage[T1]{fontenc}
\usepackage{aecompl}

\setlist[itemize]{leftmargin=3.5mm}

\sloppy

%\renewcommand{\thefootnote}{\fnsymbol{footnote}}
%\renewcommand{\thefootnote}{\arabic{footnote}}
%\renewcommand\footnotetextcopyrightpermission[1]{}
%\pagestyle{plain}

%\setcopyright{none}

\copyrightyear{2022} 
\acmYear{2022} 
\setcopyright{acmcopyright}\acmConference[WWW '22]{Proceedings of the ACM Web Conference 2022}{April 25--29, 2022}{Virtual Event, Lyon, France}
\acmBooktitle{Proceedings of the ACM Web Conference 2022 (WWW '22), April 25--29, 2022, Virtual Event, Lyon, France}
\acmPrice{15.00}
\acmDOI{10.1145/3485447.3512044}
\acmISBN{978-1-4503-9096-5/22/04}

\begin{document}

%
% The "title" command has an optional parameter, allowing the author to define a "short title" to be used in page headers.
%\title{Graph Random Network: Random Is All You Need For Semi-supervised Learning On Graphs}
\title{GRAND+: Scalable Graph Random Neural Networks}
%\jt{\normalsize can you please move all figure files to a sub-folder, e.g., pics\\
%also change all files names to 0main.tex; 1abstract.tex; 2intro.tex, so that all the files can be ordered and can be easily located.}

% The "author" command and its associated commands are used to define the authors and their affiliations.
% Of note is the shared affiliation of the first two authors, and the "authornote" and "authornotemark" commands
% used to denote shared contribution to the research.
\author{Wenzheng Feng$^1$, Yuxiao Dong$^1$, Tinglin Huang$^1$, Ziqi Yin$^2$, Xu Cheng$^1$$^3$}
\author{Evgeny Kharlamov$^4$, Jie Tang$^1$}
%\affiliation{\institution{$^1$Tsinghua University, $^{2}$Zhejiang University, $^{3}$Beijing Institute of Technology}} 
%\affiliation{\institution{$^4$Tencent Inc., $^{5}$Bosch Center for Artificial Intelligence}} 

 \affiliation{$^1$Tsinghua University \country{} $^{2}$Beijing Institute of Technology  \country{}}
 \affiliation{ $^3$Tencent Inc.  \country{} $^{4}$Bosch Center for Artificial Intelligence  \country{}}
\email{fwz17@mails.tsinghua.edu.cn,{yuxiaod, tl091, jietang}@tsinghua.edu.cn, ziqiyin18@gmail.com}%,  jietang@tsinghua.edu.cn} 
\email{alexcheng@tencent.com, evgeny.kharlamov@de.bosch.com}
% \small{\texttt{tinglin.huang@zju.edu.cn, ericdongyx@gmail.com, dm18@mails.tsinghua.edu.cn, zheny2751@gmail.com}}
\thanks{The code is available at \url{https://github.com/wzfhaha/GRAND-plus}.
\thanks{Jie Tang is the corresponding author.}
}

\renewcommand{\shortauthors}{Feng et al.}
\renewcommand{\authors}{Wenzheng Feng, Yuxiao Dong, Tinglin Huang, Ziqi Yin, Xu Cheng, Evegeny Kharlamov, Jie Tang}

% By default, the full list of authors will be used in the page headers. Often, this list is too long, and will overlap
% other information printed in the page headers. This command allows the author to define a more concise list
% of authors' names for this purpose.

%\renewcommand{\shorttitle}{GRAND+: Scalable Graph Random Neural Networks} 

%\sloppy

%
% The abstract is a short summary of the work to be presented in the article.
\begin{abstract}
%Graph data is a commonplace on the Web where information is naturally interconnected as in social networks or online academic publications.
%The inherit incompleteness of Web data spark interest in the problem of graph-based semi-supervised learning (GSSL) and Graph neural networks (GNNs) are considered by many as the best way to address it.

%However, existing GNNs suffer either from the lack of generalization performance or from weak scalability, while in the context of the Web both are required. 

Graph neural networks (GNNs) have been widely adopted for semi-supervised learning on graphs. 
A recent study shows that the graph random neural network (GRAND) model can generate state-of-the-art performance for this problem. %~\cite{}. \dong{to add} 
%GRAND is a GNN consistency regularization framework 
However, it is difficult for GRAND to handle large-scale graphs since its effectiveness relies on computationally expensive data augmentation procedures.  %operations. 
%the power iteration of the input feature matrix. 
In this work, we present a scalable and high-performance GNN framework \model for semi-supervised graph learning. 
%\model extends the existing framework \grand in several important dimensions. 
To address the above issue, 
%In particular, in order to accelerate model training on large data, 
we develop a generalized forward push (GFPush) algorithm in \model 
%\model\ adopts a novel approximation algorithm called GFPush 
to pre-compute a general propagation matrix and employ it to perform graph data augmentation in a mini-batch manner. 
We show that both the low time and space complexities of GFPush enable \model to efficiently scale to large graphs. 
%are independent of the graph size. 
%could be efficiently applied onto large graphs with detailed complexity analyses of GFPush.
Furthermore, we introduce a confidence-aware consistency loss into the model optimization of \model, facilitating GRAND+'s generalization superiority. 
%Additionally, to further improve the performance of \model\ on large data, we introduce a novel confidence-aware consistency loss into model optimization.
We conduct extensive experiments on seven public datasets of different sizes. 
The results demonstrate that \model\ 1) is able to scale to large graphs and costs less running time than existing scalable GNNs, and 2) can  offer consistent accuracy improvements over both full-batch and scalable GNNs across all datasets. 

%scales well and achieves the best accuracy compared with representative GNNs across all datasets.
\end{abstract}

\hide{%============

Graph data is a commonplace on the Web where information is naturally interconnected as in social networks or online academic publications.
The inherit incompleteness of Web data spark interest in the problem of graph-based semi-supervised learning (GSSL) and Graph neural networks (GNNs) are considered by many as the best way to address it.
However, existing GNNs suffer either from the lack of generalization performance or from weak scalability, while in the context of the Web both are required. In this work we propose \model, 
a scalable and high-performance GNN framework for GSSL.
\model extends the existing framework \grand in several important dimensions. 
In particular, in order to accelerate model training on large data, \model\ adopts a novel approximation algorithm called GFPush to pre-compute a general propagation matrix, and employs it to perform data augmentation and model learning in a mini-batch manner. We show that this method could be efficiently applied onto large graphs with detailed complexity analyses of GFPush.
Additionally, to further improve the performance of \model\ on large data, we introduce a novel confidence-aware consistency loss into model optimization.
We conduct extensive experiments on seven public datasets of different sizes, the results demonstrate that \model\ scales well and achieves the best accuracy compared with representative GNNs across all datasets.
}%===========================

\hide{
%Evgeny: this was the version I replaced

In this paper, we study graph-based semi-supervised learning (GSSL).
%(GSSL), whose objective is to make predictions for graph-structured samples with limited supervision information.  
Graph neural networks (GNNs) have been widely explored and become the most popular solution to this problem.
However, existing GNNs often suffer from bottlenecks in terms of generalization performance and scalability. Previous works in this area often focus on solving one of the two problems while ignoring the other one. Recently, Graph Random Neural Network (GRAND)~\cite{feng2020grand} %has been proposed to improve generalization capability by utilizing graph data augmentation, and 
has achieved state-of-the-art performance for this task by utilizing graph data augmentation and consistency regularization. 
%For example, the recent proposed Graph Random Neural Network (GRAND)~\cite{feng2020grand}  achieves good generalization capability and state-of-the-art performance by utilizing graph data augmentation. 
However, this approach is difficult to be deployed onto large realistic graphs as its data augmentation step relies on an expensive power iteration process to propagate node features. How to efficiently and accurately perform GSSL on large graphs still remains an open challenge.
Therefore, we propose \model,   
a scalable and high-performance GNN framework for GSSL. 
%To address the scalability limitation, 
To accelerate model training, \model\ adopts a novel approximation algorithm called GFPush to pre-compute a general propagation matrix, and employs it to perform data augmentation and model learning in a mini-batch manner. We theoretically show that this paradigm could be efficiently applied onto large graphs with detailed complexity analyses of GFPush.
% the model to perform data augmentation on large graphs. 
Additionally, \model\ introduces a new confidence-aware consistency loss into model optimization, further improving performance over \grand.
%We also theoretically prove the efficiency of GFPush with a detailed complexity analysis. 
% under semi-supervised setting. 
We conduct extensive experiments on seven public datasets of different sizes, the results demonstrate that \model\ scales well and achieves the best accuracy compared with representative GNNs accross all datasets.
%not only achieves better generalization performance over \grand\ on three small benchmark graphs, but also significantly outperforms scalable GNN baselines on four relatively large graphs with good efficiency. 
%As far as we know, we are the first to explore how to maintain both the scalability and generalization capability of GNNs in semi-supervised setting.
}

\hide{
In this paper, we study graph-based semi-supervised learning (GSSL), whose objective is to make predictions for graph-structured samples with limited supervision information.  Recently, graph neural networks (GNNs) have been explored to solve this problem and achieved promising results.
However, existing GNNs often suffer from bottlenecks in terms of generalization and scalability. Recent works in this area often focus on solving one of the two problems while ignoring the other one.
%Recent efforts in this area are often devoted to solve one of the two problems but eliminates the other one.
%1) Conventional GNNs are only trained with supervised objective, which would be inclined to overfit the limited labeled samples.
%2) Most of GNNs heavily rely on the expensive recursive feature propagation and can not handle large graphs. 
%Though several efforts have been devoted to solving one of the two problems, how to design a scalable GNN model with a strong generalization capacity (good performance on unlabeled nodes) for this task is still an open question. 
For example, the recent proposed Graph Random Neural Network (GRAND)~\cite{feng2020grand}  achieves good generalization capability and state-of-the-art performance by utilizing graph data augmentation. However, this approach is difficult to be deployed to large graphs as it employs expensive mixed-order feature propagation in data augmentation. Thus how to design a GNN framework that could avoid the two limitations of generalization and scalability simultaneously is still an open challenge.
%with both strong generalization (good performance on unlabeled nodes)  and scalability for this task is still an open question. 
%  to be effective in improving generalization capacity by utilizing graph data augmentation. 
To fulfill this gap, building upon GRAND, we propose \model\ in this paper. 
%  \model. \model\ builds on Graph Random Neural Networks (GRAND),  a data augmentation based approach to GSSL. 
To address the scalability limitation, we develop an efficient approximation method---GFPush---to approximate a generalized mixed-order sub-matrix for feature propagation, which further serves mini-batch training over large graphs. %, enabling the model to perform data augmentation on large graphs.  
We also devise a confidence-aware consistency training strategy, further improving model's generalization by leveraging massive unlabeled data.
% under semi-supervised setting. 
We conduct extensive experiments on various graphs with different genres and scales. Results show that \model\ not only achieves better generalization performance over \grand\ on three small benchmark graphs, but also significantly outperforms scalable GNN baselines on four relatively large graphs with good efficiency. %As far as we know, we are the first to explore how to maintain GNN's scalability and generalization capability simultaneously for graph-based semi-supervised learning
%As far as we know, this is the first GNN-based framework dedicated to GSSL with both scalability and high generalization capability.
As far as we know, we are the first to explore how to maintain both the scalability and generalization capability of GNNs in semi-supervised setting.
}

%capacity with a confident-based consistency regularization strategy. \model\ achieves
%Motivated by the recent proposed Graph Random Neural Network (GRAND), 
%Recently, Feng et. al.~\cite{feng2020grand} proposes Graph Random Neural Network (GRAND), a highly effective framework for semi-supervised graph learning. GRAND first augments graph data with a randomized mixed-order feature propagation and then optimizes model's parameters with both supervised loss and consistency loss among multiple augmentations of unlabeled data. This paradigm has been proved to be powerful in improving models generalization by fully leveraging the massive unlabeled data. Nevertheless, it can not handle large graphs as it requires 

\hide{
Graph neural networks (GNNs) have generalized deep learning methods into graph-structured data with promising performance on graph mining tasks. 
However, existing GNNs often meet complex graph structures with scarce labeled nodes and suffer from the limitations of 
non-robustness~\cite{zugner2018adversarial,ZhuRobust}, over-smoothing~\cite{chen2019measuring,li2018deeper,Lingxiao2020PairNorm}, and overfitting~\cite{goodfellow2016deep,Lingxiao2020PairNorm}. 
To address these issues, we propose a simple yet effective GNN framework---Graph Random Neural Network (\model). 
Different from the deterministic propagation in existing GNNs, \model\ adopts a random propagation strategy to enhance model robustness. 
This strategy also naturally enables \model\ to decouple the propagation from feature transformation, reducing the risks of over-smoothing and overfitting. 
Moreover, random propagation acts as an efficient method for graph data augmentation. 
Based on this, we propose the consistency regularization for \model\ by leveraging the distributional consistency of unlabeled nodes in multiple augmentations, improving the generalization capability of the model. 
Extensive experiments on graph benchmark datasets suggest that \model\ significantly outperforms state-of-the-art GNN baselines on semi-supervised graph learning tasks. 
Finally, we show that \model\ mitigates the issues of over-smoothing and overfitting, and its performance is married with robustness. }

 %\yd{to prof tang: shall we keep these references? debating}\jie{to me it is useful}

\hide{
Graph neural networks (GNNs) have generalized deep learning methods into graph-structured data, with promising performance on many semi-supervised graph tasks. 
However, existing GNNs often meet complex graph structures with scarce labeled nodes in practice, and suffer from the limitation of 
non-robustness~\cite{zugner2018adversarial,ZhuRobust}, over-smoothing~\cite{chen2019measuring,li2018deeper}, and overfitting~\cite{goodfellow2016deep,Lingxiao2020PairNorm}. 
%robustness, over-smoothing and overfitting. 
%and suffer from the limitation of expressive capacity, robustness and generalization. 
To address these problems, we propose the Graph Random Network (\model), 
a highly effective and robust graph based semi-supervised learning method.
%a general graph based semi-supervised learning framework. 
Different from the deterministic propagation process in existing GNNs~\cite{gori2005new, bruna2013spectral, gilmer2017neural}, \model\ adopts a random propagation strategy to enhance model robustness. \model\ also decouples the propagation from feature transformation, reducing the risks of over-smoothing and overfitting. 
 %explores the input graph in a random way to generate exponentially many data augmentations, leading to the implicit ensemble training style.
 What's more, random propagation acts as an efficient method for graph  data augmentation.  
 \model\ improves the generalization capacity by leveraging the distributional consistency of unlabeled nodes in multiple augmentations. 
 %mitigates the reliance on labeled nodes by leveraging the distributional consistency of unlabeled nodes in multiple samplings. 
%Extensive experiments on GNN benchmark datasets show the performance superiority of the proposed \model\ framework. 
%original GCNs with random modules, i.e., \textit{random propagation layer} and \textit{random consistency loss}, 
Essentially, \model\  significantly improves  the semi-supervised node classification accuracy compared with state-of-the-art GNN baselines.
%Essentially, \model\ can significantly improve  the graph classification accuracy of the simplest prediction model---MLP, outperforming state-of-the-art GNN baselines. 
%, which may inspire the rethinking of GCNs.

Graph convolutional networks (GCNs) have generalized deep learning methods into graph-structured data, with a promising performance improvement on many graph semi-supervised tasks. However, existing GCNs often meet complex graph structures with scarce labeled nodes in practice, and suffer from training insufficiency and even overfitting. To address this problem, we propose Graph Random Networks (\model), a general graph based semi-supervised learning paradigm. The basic idea is to train a GCN in exponentially many random subgraphs to disentangle the co-dependency between node representation and aggregation nodes' features and thus make \model\ expressive and robust for complex graphs. This offers an economic graph data augmentation. \model\ further mitigates the reliance on labeled nodes, leveraging the distributional consistency of unlabeled nodes in random subgraphs. Extensive experiments on different types of graph benchmarks show that in \model\ framework, original GCNs with random modules, i.e., \textit{random propagation layer} and \textit{random consistency loss},  can significantly improve the graph classification accuracy when compared to state-of-the-art baselines.
}

\begin{CCSXML}
	<ccs2012>
	<concept>
	<concept_id>10010147.10010257.10010282.10011305</concept_id>
	<concept_desc>Computing methodologies~Semi-supervised learning settings</concept_desc>
	<concept_significance>500</concept_significance>
	</concept>
	<concept>
	<concept_id>10002951.10003260.10003282.10003292</concept_id>
	<concept_desc>Information systems~Social networks</concept_desc>
	<concept_significance>100</concept_significance>
	</concept>
%	<concept>
%	<concept_id>10010147.10010257.10010321.10010337</concept_id>
%	<concept_desc>Computing methodologies~Regularization</concept_desc>
%	<concept_significance>300</concept_significance>
%	</concept>
%	</ccs2012>
\end{CCSXML}

\ccsdesc[500]{Computing methodologies~Semi-supervised learning settings}
\ccsdesc[100]{Information systems~Social networks}
%\ccsdesc[300]{Computing methodologies~Regularization}

\keywords{Graph Neural Networks; Scalability; Semi-Supervised Learning}

%\footnotetext[*]{Equal Contribution}
%\footnotetext[\S]{Corresponding Author}
%
% Keywords. The author(s) should pick words that accurately describe the work being
% presented. Separate the keywords with commas.
% \keywords{graph convolutional network, network representation learning}

% This command processes the author and affiliation and title information and builds
% the first part of the formatted document.

\maketitle

\section{Introduction}

Graph structure is a commonplace of both our physical and virtual worlds, such as social relationships, chemical bonds, and information diffusion. 
%Graph data is a commonplace on the Web where information is naturally interconnected as in social networks or online academic publications~\cite{giles1998citeseer,tang2008arnetminer}.
The inherit incompleteness of the real-world graph data sparks enormous interests in the problem of semi-supervised learning on graphs~\cite{zhu2003semi,kipf2016semi}.
To date, graph neural networks (GNNs) have been considered by many as the \textit{de facto} way to address this problem~\cite{kipf2016semi,Velickovic:17GAT,wu2019simplifying,ding2018semi,abu2019mixhop}. %\dong{add 4+ refs}
Briefly, GNNs leverage the graph structure among data samples to facilitate model predictions, enabling them to  produce prominent performance improvements over traditional semi-supervised learning methods~\cite{zhou2004learning}.

However, there are remaining challenges for GNN-based semi-supervised learning solutions. 
%often face challenges in real-world applications.  
Notably, the generalization of GNNs usually does not form their strengths, 
%Different from traditional semi-supervised methods, 
as most of them only use a supervised loss to learn parameters~\cite{kipf2016semi,Velickovic:17GAT,wu2019simplifying,chen2020simple}. %\dong{4 refs} 
This setup makes the model prone to overfit the limited labeled samples, thereby degrading the prediction performance over unseen samples. 
%Second, it is relatively difficult for them to handle large-scale graphs. The full-batch GNNs are commonly trained with an expensive recursive feature propagation procedure, inducing enormous time and memory overhead when processing large graphs. 
To overcome this issue, the graph random neural network (\grand)~\cite{feng2020grand} designs graph data augmentation and consistency regularization strategies for GNNs.  
These designs enable it to bring significant performance gains over existing GNNs for semi-supervised learning on Cora, Citeseer and Pubmed. 

Specifically, \grand develops the random propagation operation to generate effective structural data augmentations. 
It is then trained with both the supervised loss on labeled nodes and the consistency regularization loss  on different augmentations of unlabeled nodes. 
To achieve a good graph augmentation, random propagation in \grand proposes to use a mixed-order adjacency matrix %, i.e., the power iteration of the adjacency matrix, 
to propagate the feature matrix. 
The propagation essentially requires the power iteration of the adjacency matrix at every training step, making it computationally challenging to scale \grand to large-scale graphs. 

%Indeed, methods like \grand~\cite{feng2020grand} improve predictive performance, e.g., by data augmentation, 
%and achieves state-of-the-art accuracy on benchmarks, 
%while it does not scale to large graphs, e.g., due to power iterations for feature propagation at every training step. 

%On the other hand, existing scalable GNNs (e.g., GraphSAGE~\cite{hamilton2017inductive} and FastGCN~\cite{FastGCN}) mainly focus on reducing computational complexity with sampling methods, while with little considerations on predictive performance. How to efficiently and accurately perform GSSL on large graphs still remains an open challenge.

To address this issue, we present the \model framework for large-scale semi-supervised learning on graphs. 
\model is a scalable GNN consistency regularization method. 
In \model, we introduce efficient approximation techniques to perform random propagation in a mini-batch manner, addressing the scalability limitation of \grand. 
Furthermore, we improve \grand by adopting a confidence-aware loss for regulating the consistency between different graph data augmentations. 
This design stabilizes the training process and provides \model with good generalization. 
Specifically, \model comprises the following techniques: 

\begin{itemize}[topsep=3pt,parsep=0pt,partopsep=0pt,itemsep=0pt,leftmargin=*]

\item \textit{Generalized feature propagation}: 
We propose a generalized mixed-order matrix to perform random feature propagation. 
Such matrix offers a set of tunable weights to control the importance of different orders of neighborhoods and thus offers a flexible mechanism for dealing with complex real-world graphs. %, that is, this design furthers \grand with flexibility superiority. 

%To handle large graphs in the real world, we develop

\item \textit{Efficient approximation}: Inspired by recent matrix approximation based GNNs~\cite{chen2020scalable,bojchevski2020scaling}, \model adopts an approximation method---{Generalized Forward Push} ({GFPush})---to efficiently calculate the generalized propagation matrix. 
This enables \model to perform random propagation and model learning in a mini-batch manner, offering the model with significant scalability. 
%We show that both the time and space complexities of GFPush are irrelevant to the graph size, making \model scalable to large graphs.  

%redesigns the consistency loss used in \grand\ 

\item \textit{Confidence-aware loss}: We design a confidence-aware loss for the \model regularization framework. This helps filter out potential noises during the consistency training by ignoring highly uncertain unlabeled samples, thus improving the generalization performance of \model. 
\end{itemize}

%Note the generalized feature propagation gives \model superiority over \grand in terms of flexibility, approximation methods---in terms of scalability, and confidence-aware loss---in terms of generalization performance.\looseness=-1

We conduct comprehensive experiments on seven public graph datasets with different genres and scales to demonstrate the performance of \model. 
Overall, \model yields the best classification results compared to ten GNN baselines on three benchmark datasets and surpasses five representative scalable GNNs on the other four relatively large datasets with  efficiency benefits. 
For example, \model\ achieves a state-of-the-art accuracy of 85.0\% on Pubmed. % which is a significant improvement over \grand (2.3\%). % and 6 GNNs (4.4\%-7.4\%)
On MAG-Scholar-C with 12.4 million nodes, \model\ is about 10 fold faster than FastGCN and GraphSAINT, and offers a 4.9\% accuracy gain over PPRGo---previously the fastest method on this dataset---with a comparable running time. 
%, with a comparable run time. 

%\vpara{Orgnization.} In Section~\ref{sec:preliminaries} we define the GSSL problem and review related work. In Section~\ref{sec:method} we present our solution \model. In Section~\ref{sec:exp} we evaluate \model. In Section~\ref{sec:conclusion} we conclude.  

\hide{%---------------------------------------------------------------------
\section{Introduction}

Graph data is a commonplace on the Web where information is naturally interconnected as in social networks or online academic publications~\cite{giles1998citeseer,tang2008arnetminer}.
The inherit incompleteness of Web data spark interest in the problem of graph-based semi-supervised learning (GSSL)~\cite{zhu2003semi,kipf2016semi}.
Graph neural networks (GNNs) are considered by many as the best way to address GSSL since they employ the intrinsic graph structure among data samples to facilitate model prediction and they have prominent performance improvements over traditional methods~\cite{zhou2004learning}.

However, existing GNN-based semi-supervised learning methods often face two challenges in real-world applications. First, they lack \textit{generalization performance}: Conventional GNNs only use supervised loss to learn parameters, making the model prone to overfit the limited labeled samples, thereby degrading the prediction performance over unseen samples. Moreover, they have a weak \textit{scalability}: Most GNNs adopt full-batch training methods with an expensive recursive feature propagation procedure, inducing enormous time and memory overhead when processing large graphs. 

Recently, graph random neural network (\grand)~\cite{feng2020grand} has brought significant performance gains on GSSL benchmarks by using graph data augmentation and consistency regularization, 
%Indeed, methods like \grand~\cite{feng2020grand} improve predictive performance, e.g., by data augmentation, 
%and achieves state-of-the-art accuracy on benchmarks, 
while it does not scale to large graphs, e.g., due to power iterations for feature propagation at every training step. On the other hand, existing scalable GNNs (e.g.,
GraphSAGE~\cite{hamilton2017inductive} and FastGCN~\cite{FastGCN}) 
%GraphSAINT~\cite{zeng2020graphsaint} 
mainly focus on reducing computational complexity with sampling methods, while with little considerations on predictive performance. How to efficiently and accurately perform GSSL on large graphs still remains an open challenge.
%As identified by previous works~\cite{chen2018stochastic}, these methods often attain worse performance than full-batch GNNs due to the sampling variance. 
%Indeed, as we have witnessed in our evaluation (see details in Section~\ref{sec:exp},  Table~\ref{tab:small_graph}) %Figure~\ref{fig:intro} (a)),
%the prediction performance of all these methods on, for example, Cora, Citeseer and Pubmed does not exceed 83.9, 72.9, and 80.6 respectively, which is lower than many full-batch methods. 
%
%, as depicted in Figure~\ref{fig:intro} (a).
%This raises a research question: \textit{Can a GNN framework be with both good scalability and generalization capability for GSSL?}

In order to fulfill this gap, %achieve both good generalization performance and scalability for GSSL
we propose \model in this paper. \model does the former analogously to \grand, while significantly extends it in order to achieve scalability. %, thus giving us a positive answer to the research question above.
In particular, \grand adopts a random propagation strategy to generate multiple graph data augmentations at each epoch, and uses an additional consistency regularization loss to enforce neural network model to give similar predictions among multiple augmentations of unlabeled data.
While \model extends and improves \grand in three dimensions:

\begin{itemize}[topsep=3pt,parsep=0pt,partopsep=0pt,itemsep=0pt,leftmargin=*]

\item \textit{Generalized Feature Propagation}: \model relies on a generalized mixed-order matrix to perform random feature propagation. Such matrix offers a set of tunable weights to control the importances of different orders of neighborhoods and thus offers a flexible mechanism for dealing with complex real-world graphs. 

%To handle large graphs in the real world, we develop

\item \textit{Approximation Method}: Inspired by recent matrix approximation based GNNs~\cite{chen2020scalable,bojchevski2020scaling}, \model adopts a novel approximation method---\textit{Generalized Forward Push} (\textit{GFPush})---to efficiently calculate generalized propagation matrices. 
This enables \model to perform random propagation and model learning in a mini-batch manner. We theoretically show that such approach allows to deal with large graphs by studying the complexity of GFPush. 

%redesigns the consistency loss used in \grand\ 

\item \textit{Confidence-aware Loss}: \model relies on a novel confidence-aware loss, which filters out potential noises during consistency training by ignoring highly uncertain unlabeled samples.
\end{itemize}
Note the generalized feature propagation gives \model superiority over \grand in terms of flexibility, approximation methods---in terms of scalability, and confidence-aware loss---in terms of generalization performance.\looseness=-1

We conduct comprehensive experiments on 7 graph datasets with different genres and scales to demonstrate the benefits of \model. Overall, \model achieves the best classification performance over 10 GNNs on 3 benchmark datasets and surpasses 5 representative scalable GNNs on the other 4 relatively large datasets with a good efficiency. 
Remarkably, \model\ achieves a new state-of-the-art accuracy of 85.0\% on Pubmed which is a significant improvement over \grand (2.3\%). % and 6 GNNs (4.4\%-7.4\%)
On MAG-Scholar-C with 12.4 million nodes, \model\ is about 10 fold faster than FastGCN and GraphSAINT, and achieves 4.9\% improvement in accuracy over PPRGo, the previous fastest method on this data, with a comparable running time. 
%, with a comparable run time. 

\vpara{Orgnization.} In Section~\ref{sec:preliminaries} we define the GSSL problem and review related work. In Section~\ref{sec:method} we present our solution \model. In Section~\ref{sec:exp} we evaluate \model. In Section~\ref{sec:conclusion} we conclude.  

}%-------------------------------------------------------

\hide{
%Evgeny: this is the version that I replaced

The target of semi-supervised learning (SSL)~\cite{zhu2005semi} 
%serves as a label efficient learning paradigm with the target of 
is to leverage massive unlabeled data to improve model's generalization performance.
%As a label efficient learning paradigm, semi-supervised learning attracts tremendous attention from both academic and industry with the target of leveraging massive unlabeled data to improve model's generalization capacity~\cite{zhu2005semi}.
%Semi-supervised learning aims to leverage
%It has attracted tremendous attention from both academic and industry as its superiority in saving label acquisition costs.
As an important branch of SSL,
graph-based semi-supervised learning (GSSL)~\cite{zhu2003semi,kipf2016semi} employs the intrinsic graph structure among data samples to facilitate model prediction.
Recently, graph neural networks (GNNs) are considered as recent breakthroughs for GSSL because of their prominent performance improvements over traditional methods~\cite{zhou2004learning}.
However, existing GNN-based semi-supervised learning methods always face two challenges in realistic application. The first one is the lack of \textit{generalization performance}. Conventional GNNs only use supervised loss to learn parameters, making the model prone to overfit the limited labeled samples, thereby degrading the prediction performance over unseen samples. The other one is weak \textit{scalability}, most of GNNs adopt full-batch training method with an expensive recursive feature propagation procedure, inducing enormous time and memory overhead when processing large graphs.

Though many efforts are devoted to addressing the two limitations, they always tackle one of the two problem while ignoring the other one. 
To address the limitation of generalization, 
%As a typical example, 
Feng et al.~\cite{feng2020grand} recently propose Graph Random Neural Network (\grand), in which they adopt a random propagation strategy to generate multiple graph data augmentations at each epoch, and use an additional consistency regularization loss to enforce neural network model to give similar predictions among multiple augmentations of unlabeled data.
%In random propagation, feature vectors are randomly dropped and then propagated over graph with a mixed-order adjacency matrix. With this strategy, model could generate multiple different data augmentations from original features for each node. 
%After that the augmented features are fed into an MLP for predictions. In the training period, besides supervised classification loss, \grand\ adopts an additional consistency loss to enforce model to give similar predictions among multiple augmentations of unlabeled data. 
In this way, \grand\ could fully leverage the unlabeled samples to improve performance, and achieves state-of-the-art accuracy on several benchmarks. However, it is hard to scale to large graphs as it requires to perform time-consuming power iterations for random propagation at every training step. On the other hand,  existing scalable GNNs only focus on reducing computational complexity with little considerations on prediction performance. As shown in Figure~\ref{fig:intro} (a), typical scalable GNNs---including GraphSAGE~\cite{hamilton2017inductive}, FastGCN~\cite{FastGCN}, GraphSAINT~\cite{zeng2019graphsaint}, SGC~\cite{wu2019simplifying},  PPRGo~\cite{bojchevski2020scaling} and GBP~\cite{chen2020scalable}---have much lower classification accuracy than advanced full-batch GNNs, e.g., GRAND~\cite{feng2020grand}. This raises an interesting question: \textit{Is it possible to design a GNN framework with both good scalability and generalization capability for GSSL?}

Aiming at answering this question, based on \grand, we propose \model\ in this paper. \model\ inherits the basic idea of \grand, while improves it from three aspects: 1) \textit{Flexibility}. In \model, we use a generalized mixed-order matrix to perform random feature propagation. This form of matrix offers a set of tunable weights to control the importances of different orders of neighborhoods, which is more flexible for dealing with various complex graphs in real world. 2) \textit{Scalability}. To handle large graphs in the real world, we develop a novel approximation method---\textit{Generalized Forward Push} (\textit{GFPush})---to efficiently calculate the generalized propagation matrix. %Before training, \model\ firstly pre-computes a sub-matrix with GFPush, and then utilizes the obtained sub-matrix 
With the matrix approximation, \model\ is enabled
to perform random propagation and model learning in a mini-batch manner. We theoretically show this paradigm can deal with large graphs with detailed complexity analyses for GFPush. 3) \textit{Better Generalization}. \model\ redesigns the consistency loss used in \grand\ to a confidence-aware loss, which filters out potential noises during consistency training by ignoring highly uncertain unlabeled samples.
%with high uncertainty and adopts dynamic loss weight scheduling strategy to facilitate model convergence. 
This advancement is shown to be effective in improving model's generalization performance.

We conduct comprehensive experiments on \textit{seven} graph datasets with different genres and scales to demonstrate the performance of \model. Overall, \model\ achieves the best classification performance over 15 GNNs on small datasets and surpasses six scalable GNNs on the other four relative large datasets with good efficiency. 
Remarkably, \model\ achieves a new state-of-the-art accuracy of 85.0\% on Pubmed dataset, which gets 2.3\% significant improvement over \grand\ and 4.4\%-7.4\% improvements over six typical scalable GNNs (Cf. Figure~\ref{fig:intro} (a)). In terms of efficiency, \model\ is 50$\times$ faster than \grand\ on AMiner-CS dataset that has 593,486 nodes and 6.2 million edges, 11$\times$ faster than FastGCN and 15$\times$ faster than GraphSAINT on MAG-Scholar-C dataset which has 1.2 million nodes and 173 million edges. Compared with the previous fastest model PPRGo on MAG-Scholar-C, \model\ has comparable running time while significantly outperforms it in accuracy with 3.6\% improvement (Cf. Figure~\ref{fig:intro} (b)).

\begin{figure}[t]
	\centering
	\mbox
	{
    	\hspace{-0.1in}
		%\hfill
		\begin{subfigure}[Classification Accuracy on Pubmed.]{
				\centering
				\includegraphics[width = 0.5 \linewidth]{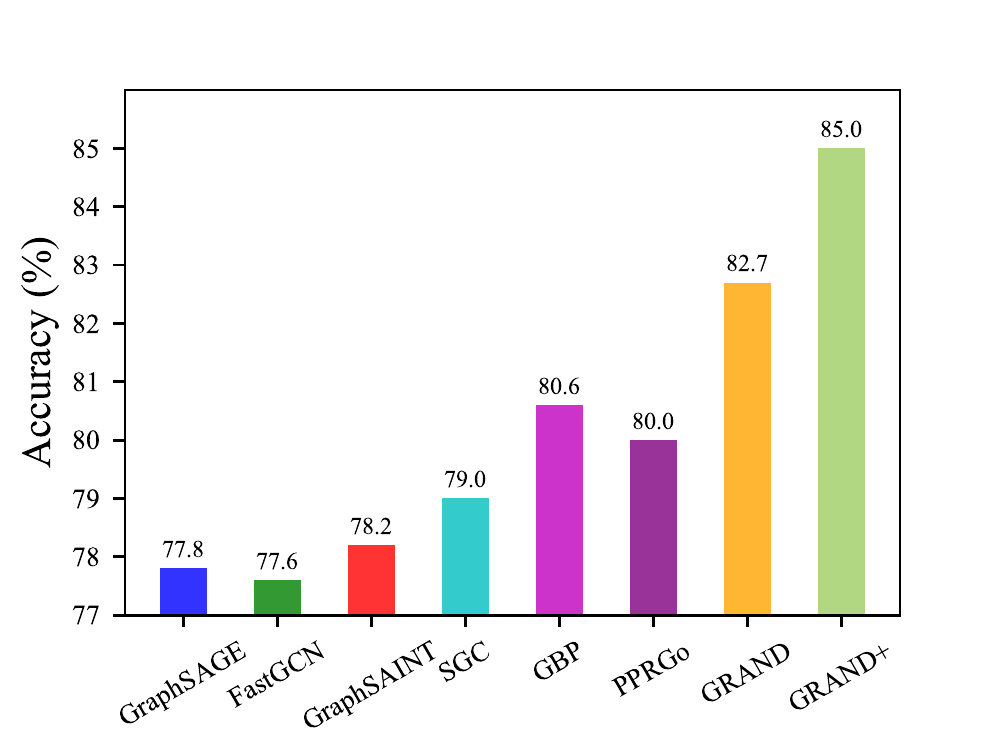}
			}
		\end{subfigure}
		\hspace{-0.1in}
		\begin{subfigure}[Results on MAG-Scholar-C.]{
				\centering
				\includegraphics[width = 0.5 \linewidth]{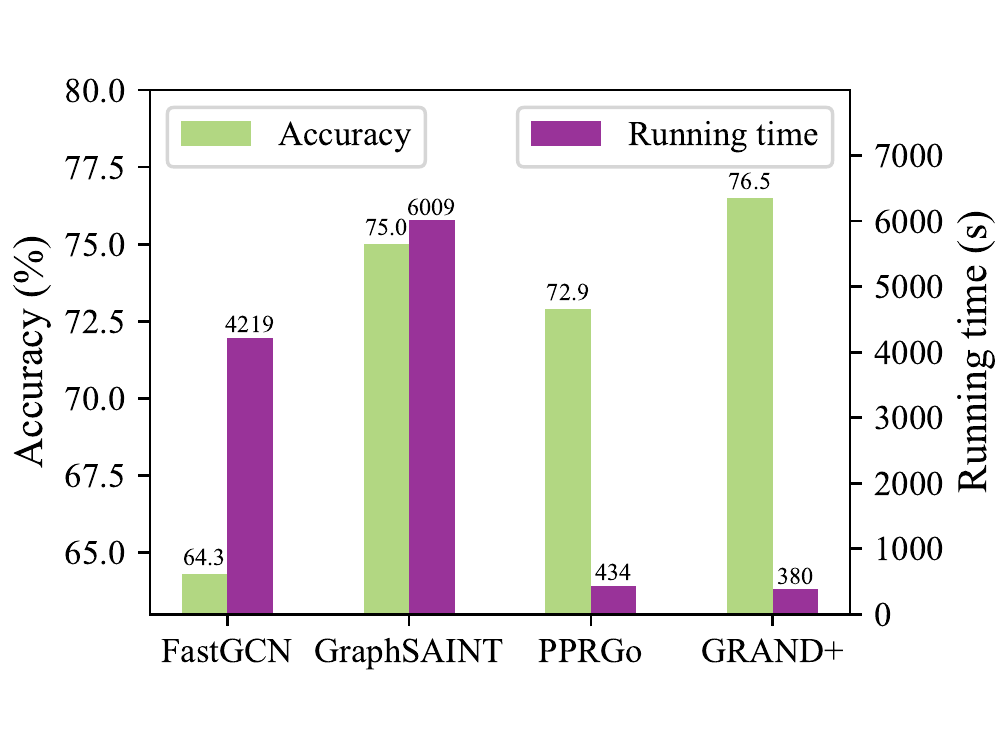}
			}
		\end{subfigure}
	}
	\vspace{-0.15in}
	\caption{Performance and Efficiency of \model.}
	\label{fig:intro}
	\vspace{-0.15in}
\end{figure}
}

\hide{
\begin{itemize}
    \item \textit{Flexibility}. In \model, we use a generalized mixed-order matrix to perform random feature propagation. This form of matrix offers a set of tunable weights to control the importances of different orders of neighborhoods, which is more flexible for dealing with various complex graphs in real world.
    %that could  (sub-)matrix approximation method---RwPush to enable the random propagation to be applied on large graphs. RwPush is insp 
    \item \textit{Scalability}. To enable the model to handle large graphs, we develop an efficient approximation method---\textit{Generalized Forward Push} (\textit{GFPush})---for the generalized mixed-order matrix. Before training, \model\ firstly pre-computes an approximation sub-matrix with GFPush, and then utilizes the obtained sub-matrix to perform random propagation and model learning in a mini-batch manner. We also theoretically prove the efficiency and effectiveness of GFPush with a detailed analyses for  computational complexity and approximation error. 
    \item \textit{Effectiveness}. 
    %Different from the consistency loss used in \grand, 
    \model\ redesigns the consistency loss used in \grand\ to a confidence-aware loss, which filters out potential noises during training by ignoring unlabeled samples with high uncertainty. We also adopts dynamic loss weight scheduling to facilitate model convergence. These advances are shown to be effective in improving model's generalization performance.
    \end{itemize}
}
%employs a proposed matrix approximation method---RwPush---to make it able to handle large graphs. 

%enable it to scale up to large graphs with a proposed matrix approximation method---RwPush. Specifically.
%---a scalable semi-supervised GNN framework with strong generalization capability. 
%avoiding the limitation of 

%large performance gap between their  . Compared with the two state-of-the-art full-batch GNNs, GRAND and GCNII, all the existing scalable GNNs have inferior classification results on the three benchmark datasets (Cf. Table~\ref{tab:small_graph}). 

%Comparing with full-batch GNN methods, scalable GNNs are all inferior than state-of-the-art full batch 
%both supervised loss and consistency loss among multiple predictions of unlabeled data augmentations are used for optimization.

%with learning hidden 

%Due to its effectiveness in practice, GSSL has been widely applied in many real applications, such as social networks~\cite{}, academic graph~\cite{akujuobi2020recurrent} and E-commerce~\cite{chen2019semi}.

%build models with very few labeled data and a huge amount of unlabeled data, which  applications as its superiority in saving label acquisition costs~\cite{zhu2005semi}. As a typical SSL paradigm, graph-based semi-supervised learning (GSSL) models the pairwise relationships between data samples by leveraging a intrinsic graph structure. 

% value in saving % in practical applications where labels are hard to acquire~\cite{}.

\hide{
Graph-structured data is ubiquitous and exists in many real applications, such as social networks, the World Wide Web, and the knowledge graphs. 
With the remarkable success of deep learning~\cite{lecun1995convolutional}, 
graph neural networks (GNNs)~\cite{bruna2013spectral,henaff2015deep, defferrard2016convolutional,kipf2016semi,duvenaud2015convolutional,niepert2016learning, monti2017geometric}  generalize neural networks to graphs and offer powerful graph representations for a wide variety of applications, such as node
classification~\cite{kipf2016semi}, link prediction~\cite{zhang2018link}, graph classification~\cite{ying2018hierarchical} and knowledge graph completion~\cite{schlichtkrull2018modeling}.

As a pioneer work of GNNs, Graph Convolutional Network (GCN)~\cite{kipf2016semi} first formulate the convolution operation on a graph as feature propagation in neighborhoods, and achieves promising performance on semi-supervised learning over graphs. 
They model a node's neighborhood as a receptive field and enable a recursive neighborhood propagation process by stacking multiple graph convolutional layers, and thus information from multi-hop neighborhoods is utilized. 
After that, much more advanced GNNs have been proposed by designing more complicated propagation patterns, e.g., the multi-head self-attention in GAT~\cite{Velickovic:17GAT}, differentiable cluster in DIFFPOOL~\cite{ying2018hierarchical} and Neighborhood Mixing~\cite{abu2019mixhop}. Despite strong successes achieved by these models, existing GNNs still suffer from some inherent issues:
}
\section{Semi-Supervised Graph Learning}
\label{sec:preliminaries}
%\subsection{Semi-Supervised Graph Learning}
\subsection{Problem}

In graph-based semi-supervised learning, the data samples are organized as a graph $G=(V, E)$, where each node $v \in V$ represents a data sample and $E \in V \times V$ is a set of edges that denote the relationships between each pair of nodes. 
We use $\mathbf{A}\in \{0,1\}^{|V| \times |V|}$ to represent $G$'s adjacency matrix, with each element $\mathbf{A}(s,v) = 1$ indicating that there exists an edge between $s$ and $v$, otherwise $\mathbf{A}(s,v) = 0$. 
$\mathbf{D}$ is the diagonal degree matrix where $\mathbf{D}(s,s) = \sum_v \mathbf{A}(s,v)$. 
$\widetilde{G}$ is used to denote the graph $G$ with added self-loop connections. 
The corresponding adjacency matrix is $\widetilde{\mathbf{A}} = \mathbf{A} + \mathbf{I}$ and the degree matrix is $\widetilde{\mathbf{D}}= \mathbf{D} + \mathbf{I}$.
%\vpara{Semi-Supervised Node Classification.} 

In this work, we focus on the classification problem, in which each sample $s$ is associated with 
1) a feature vector $\mathbf{X}_s \in \mathbf{X} \in \mathbb{R}^{|V|\times d_f}$ 
and 
2) a label vector $\mathbf{Y}_s \in \mathbf{Y} \in \{0,1\}^{|V| \times C}$ with $C$ representing the number of classes. 
In the semi-supervised setting, only limited nodes $L\in V$ have observed labels ($ 0 < |L| \ll |V|$), and the labels of remaining nodes $U = V - L$ are unseen.
The objective of semi-supervised graph learning is to infer the missing labels $\mathbf{Y}_{U}$ for unlabeled nodes $U$ based on graph structure $G$, node features $\mathbf{X}$, and the observed labels $\mathbf{Y}_{L}$\footnote{For a matrix $\mathbf{M}\in \mathbb{R}^{a\times b}$, we use $\mathbf{M}_i\in\mathbb{R}^b$ to denote its $i$-th row vector and let $\mathbf{M}(i,j)$ represent the element of the $i$-th row and the $j$-th column.}.
%to infer the missing labels $\mathbf{Y}^U$ for unlabeled nodes. 

\subsection{Related Work}
\label{sec:related}

Graph neural networks (GNNs) have been widely adopted for addressing the semi-supervised graph learning problem. 
In this part, we review the progress of GNNs with an emphasis on their large-scale solutions to semi-supervised graph learning.

\vpara{Graph Convolutional Network.}
%Graph neural networks (GNNs)~\cite{gori2005new, scarselli2009graph, kipf2016semi} generalize neural techniques into graph-structured data. In general, GNNs update node representations with two core operations: feature propagation with a predefined proximity matrix and non-linear transformation.
The graph convolutional network (GCN)~\cite{kipf2016semi} generalizes the convolution operation into graphs. 
Specifically, the $l$-th GCN layer is defined as:
%adopts the following propagation rule:
\begin{equation}
\small
\label{equ:gcn}
\mathbf{H}^{(l+1)} = \sigma (\hat{\mathbf{A}}\mathbf{H}^{(l)}\mathbf{W}^{(l)}),
\end{equation}
where $\mathbf{H}^{(l)}$ denotes the hidden node representations of the $l$-th layer with $\mathbf{H}^{(0)}=\mathbf{X}$, $\hat{\mathbf{A}}=\widetilde{\mathbf{D}}^{-\frac{1}{2}}\widetilde{\mathbf{A}}\widetilde{\mathbf{D}}^{-\frac{1}{2}}$ is the symmetric normalized adjacency matrix of $\widetilde{G}$, $\mathbf{W}^{(l)}$ denotes the weight matrix of the $l$-th layer, and $\sigma(\cdot)$ denotes the activation function. 
In practice, this graph convolution procedure would be repeated multiple times, and the final representations are usually fed into a logistic regression layer for classification.

\vpara{Simplified Graph Convolution.} 
By taking a closer look at Equation~\ref{equ:gcn}, we can observe that graph convolution consists of two operations: \textit{feature propagation} $\hat{\mathbf{A}} \mathbf{H}^{(l)}$ and \textit{non-linear transformation} $\sigma(\cdot)$. %To improve model's efficiency, 
%And the feature propagation procedure $\hat{\mathbf{A}} \mathbf{H}^{(l)}$ can be explained as low-pass spectral filtering over $\mathbf{H}^{(l)}$.
Wu et al.~\cite{wu2019simplifying} simplify this procedure %propose simplified graph convolution (SGC) %, which simplifies graph convolution (Cf. Equation~\ref{equ:gcn})
by removing the non-linear transformations in hidden layers. 
The resulting simplified graph convolution (SGC) is formulated as:
%performs feature propagation on feature matrix with $T-$order symmetric normalized adjacency matrix, followed by a logistic regression classifier. The resulting predicting model is:
%the feature propagation and feature transformation, and generates predictions via:
\begin{equation}
\small
    \hat{\mathbf{Y}} = \text{softmax}(\hat{\mathbf{A}}^N\mathbf{X}\mathbf{W}),
\end{equation}
where $\hat{\mathbf{A}}^N\mathbf{X}$ is considered as a simplified $N$-layer graph convolutions %(Cf. Equation~\ref{equ:gcn}) 
on $\mathbf{X}$, $\mathbf{W}$ refers to the learnable weight matrix for classification,
 and $\hat{\mathbf{Y}}$ denotes the model's predictions.

%Where features are first propagated with
%can be explained as a special form of laplacian smoothing over hidden representations~\cite{li2018deeper}. This indicates that stacking many layers of GCN will result in over-smoothing problem, i.e., node representations of different classes become indistinguishable with two much focus on high-order neighborhoods.

%which hinders the model aggregating information from high order neighborhoods. 

\vpara{GNNs with Mixed-Order Propagation.} As pointed by Li et al.~\cite{li2018deeper}, $\hat{\mathbf{A}}^{N}\mathbf{X}$ will converge to a fix point as $N$ increases %\rightarrow +\infty$
according to the Markov chain convergence theorem, namely, the over-smoothing issue. 
%, which indicates that directly increasing propagation order will make GCN and SGC lose expressive ability by ignoring local information.
To address it, a typical kind of methods~\cite{bojchevski2020scaling,klicpera2018predict,klicpera2019diffusion} suggest to use a more complex mixed-order matrix for feature propagation. 
For example, APPNP~\cite{klicpera2018predict} adopts the truncated personalized PageRank (ppr) matrix $\mathbf{\Pi}^{\text{ppr}}_{\text{sym}} = \sum_{n=0}^{N}\alpha (1 - \alpha)^n\hat{\mathbf{A}}^{n}$, where the hyperparameter $\alpha \in (0, 1]$ denotes the teleport probability, 
%for multi-order propagation matrix
allowing the model to preserve the local information even when $N \rightarrow +\infty$. 
%In addition, APPNP  decouples feature transformation and propagation as done in SGC. 
\hide{
In particular, it first uses a multi-layer perceptron (MLP) to perform prediction, and then propagates the prediction logits using $\mathbf{\Pi}_{\text{sym}}^{\text{ppr}}$ to get the final predictions:
\begin{equation}
\small
\label{equ:appnp}
    \hat{\mathbf{Y}} = \text{softmax}(\mathbf{\Pi}^{\text{ppr}}_{\text{sym}}\cdot \text{MLP}(\mathbf{X})).
\end{equation}

\dong{possible to add a new more related work here?}
}

\vpara{Scalable GNNs.} %In full-batch GNNs, the recursive neighborhood expansion process causes severe time and memory challenges for learning on giant graphs. 
%To address this challenge, there are three lines of ideas to scale GNNs.
Broadly, there are three categories of methods proposed for making GNNs scalable:
%In order to make GNNs scalable, three categories of methods have been proposed recently: 
1) The {node sampling methods} employ sampling strategies to speed up the recursive feature aggregation procedure. 
The representative methods include GraphSAGE~\cite{hamilton2017inductive}, FastGCN~\cite{FastGCN}, and LADIES~\cite{zou2019layer}; 
2) The {graph partition methods} attempt to divide the original large graph into several small sub-graphs and run GNNs on sub-graphs. 
This category consists of Cluster-GCN~\cite{Chiang2019ClusterGCN} and GraphSAINT~\cite{zeng2020graphsaint}; 
3) The {matrix approximation methods} follow the design of SGC~\cite{wu2019simplifying} to decouple  feature propagation and non-linear transformation, and to utilize some approximation methods to accelerate feature propagation. 
The proposed \model framework is highly related to matrix approximation based methods such as PPRGo~\cite{bojchevski2020scaling} and GBP~\cite{chen2020scalable}. 
We will analyze their differences in Section~\ref{sec:model_analysis}.

\hide{%=====================

\section{Preliminary}
\label{sec:preliminaries}
\subsection{Problem Definition}
In graph-based semi-supervised learning, the data samples are organized as a graph $G=(V, E)$, where each node $v \in V$ refers to a data sample, and $E \in V \times V$ is a set of edges denoting relationships of node pairs. We use 
$\mathbf{A}\in \{0,1\}^{|V| \times |V|}$ to represent $G$'s adjacency matrix, with each element $\mathbf{A}(s,v) = 1$ indicating there exists an edge between $s$ and $v$, otherwise $\mathbf{A}(s,v) = 0$. $\mathbf{D}$ is diagonal degree matrix where $\mathbf{D}(s,s) = \sum_v \mathbf{A}(s,v)$. We further use $\widetilde{G}$ to denote graph $G$ with added self-loop connections. The corresponding adjacency matrix is $\widetilde{\mathbf{A}} = \mathbf{A} + \mathbf{I}$ and degree matrix is $\widetilde{\mathbf{D}}= \mathbf{D} + \mathbf{I}$.
%\vpara{Semi-Supervised Node Classification.} 

This work focuses on classification problem, in which each sample $s$ is associated with 
1) a feature vector $\mathbf{X}_s \in \mathbf{X} \in \mathbb{R}^{|V|\times d_f}$ 
and 
2) a label vector $\mathbf{Y}_s \in \mathbf{Y} \in \{0,1\}^{|V| \times C}$ with $C$ representing the number of classes. In the semi-supervised setting, only limited nodes $L\in V$ have observed labels ($ 0 < |L| \ll |V|$), and the labels of remaining nodes $U = V - L$ are unseen.
The objective of graph-based semi-supervised learning is to infer the missing labels $\mathbf{Y}_{U}$ for unlabeled nodes $U$ based on graph structure $G$, node features $\mathbf{X}$ and the observed labels $\mathbf{Y}_{L}$\footnote{For a matrix $\mathbf{M}\in \mathbb{R}^{a\times b}$, we use $\mathbf{M}_i\in\mathbb{R}^b$ to denote its $i$-th row vector and let $\mathbf{M}(i,j)$ represent the element of the $i$-th row and the $j$-th column.}.
%to infer the missing labels $\mathbf{Y}^U$ for unlabeled nodes. 

\subsection{Related Work}
\label{sec:related}
\vpara{Graph Convolutional Network.}
%Graph neural networks (GNNs)~\cite{gori2005new, scarselli2009graph, kipf2016semi} generalize neural techniques into graph-structured data. In general, GNNs update node representations with two core operations: feature propagation with a predefined proximity matrix and non-linear transformation.
Graph convolutional network (GCN)~\cite{kipf2016semi} generalizes convolution into graphs, where the $l$-th layer is defined as:
%adopts the following propagation rule:
\begin{equation}
\small
\label{equ:gcn}
\mathbf{H}^{(l+1)} = \sigma (\hat{\mathbf{A}}\mathbf{H}^{(l)}\mathbf{W}^{(l)}),
\end{equation}
where $\mathbf{H}^{(l)}$ denotes hidden node representations of the $l$-th layer 
with $\mathbf{H}^{(0)}=\mathbf{X}$, $\hat{\mathbf{A}}=\widetilde{\mathbf{D}}^{-\frac{1}{2}}\widetilde{\mathbf{A}}\widetilde{\mathbf{D}}^{-\frac{1}{2}}$ is the symmetric normalized adjacency matrix of $\widetilde{G}$, $\mathbf{W}^{(l)}$ denotes weight matrix of the $l$-th layer, and $\sigma(.)$ denotes activation function. In practice, this procedure would be repeated multiple times, the final representations are fed into a logistic regression layer for classification.

\vpara{Simplified Graph Convolution.} Taking a closer look at Equation~\ref{equ:gcn}, we could observe that graph convolution consists of two operations: \textit{feature propagation} $\hat{\mathbf{A}} \mathbf{H}^{(l)}$ and \textit{non-linear transformation} $\sigma(\cdot)$. %To improve model's efficiency, 
%And the feature propagation procedure $\hat{\mathbf{A}} \mathbf{H}^{(l)}$ can be explained as low-pass spectral filtering over $\mathbf{H}^{(l)}$.
Wu et al.~\cite{wu2019simplifying} simplify this procedure %propose simplified graph convolution (SGC) %, which simplifies graph convolution (Cf. Equation~\ref{equ:gcn})
by removing the non-linear transformations in hidden layers, the resulting simplified graph convolution (SGC) is formulated as:
%performs feature propagation on feature matrix with $T-$order symmetric normalized adjacency matrix, followed by a logistic regression classifier. The resulting predicting model is:
%the feature propagation and feature transformation, and generates predictions via:
\begin{equation}
\small
    \hat{\mathbf{Y}} = \text{softmax}(\hat{\mathbf{A}}^N\mathbf{X}\mathbf{W}),
\end{equation}
where $\hat{\mathbf{A}}^N\mathbf{X}$ is considered as a simplified $N-$layer graph convolutions (Cf. Equation~\ref{equ:gcn}) on $\mathbf{X}$, $\mathbf{W}$ refers to learnable weight matrix for classification and $\hat{\mathbf{Y}}$ denotes model's predictions.

%Where features are first propagated with
%can be explained as a special form of laplacian smoothing over hidden representations~\cite{li2018deeper}. This indicates that stacking many layers of GCN will result in over-smoothing problem, i.e., node representations of different classes become indistinguishable with two much focus on high-order neighborhoods.

%which hinders the model aggregating information from high order neighborhoods. 
\vpara{GNNs with Mixed-order Propagation.} As pointed by Li et al.~\cite{li2018deeper}, $\hat{\mathbf{A}}^{N}\mathbf{X}$ will converge to a fix point as $N$ increases %\rightarrow +\infty$
according to markov chain convergence theorem. This problem is called over-smoothing.
%, which indicates that directly increasing propagation order will make GCN and SGC lose expressive ability by ignoring local information.
To address it, a typical kind of methods~\cite{bojchevski2020scaling,klicpera2018predict,klicpera2019diffusion} suggest to use a more complex mixed-order matrix for feature propagation. For example, APPNP~\cite{klicpera2018predict} adopts truncated personalized pagerank (PPR) matrix: $\mathbf{\Pi}^{\text{ppr}}_{\text{sym}} = \sum_{n=0}^{N}\alpha (1 - \alpha)^n\hat{\mathbf{A}}^{n}$, here the hyperparameter $\alpha \in (0, 1]$ denotes the teleport probability 
%for multi-order propagation matrix
which allows model to preserve local information even when $N \rightarrow +\infty$. APPNP also decouples feature transformation and propagation as done in SGC. In particular, it first uses a Multi-layer Perceptron (MLP) to perform prediction, and then propagates the prediction logits using $\mathbf{\Pi}_{\text{sym}}^{\text{ppr}}$ to get final predictions:
\begin{equation}
\small
\label{equ:appnp}
    \hat{\mathbf{Y}} = \text{softmax}(\mathbf{\Pi}^{\text{ppr}}_{\text{sym}}\cdot \text{MLP}(\mathbf{X})).
\end{equation}
\vpara{Graph Random Neural Network.}
Recently, Feng et al.~\cite{feng2020grand} exploits the mixed-order propagation to achieve graph data augmentation. They develop \textit{random propagation}, in which node features $\mathbf{X}$ are first randomly dropped with a variant of dropout---DropNode. Then the resultant corrupted feature matrix is propagated over the graph with a mixed-order matrix. Instead of ppr matrix, they use an average pooling matrix %from order $0$ to $N$:
$\mathbf{\Pi}^{\text{avg}}_{\text{sym}} = \sum_{n=0}^{N} \hat{\mathbf{A}}^n/(N+1)$ for propagation.
%And the random propagation is defined as
%Instead of the propagation rule used in GCN, a mixed-order proximity matrix $\mathbf{\Pi}^{\text{mix}}= \sum_{k=0}^{K}\frac{1}{K+1}(\mathbf{D}^{-\frac{1}{2}}\mathbf{A}\mathbf{D}^{-\frac{1}{2}})^k$ is adopted to preserve both local and global information. 
More formally, the random propagation strategy is formulated as:
\begin{equation}
\small
\label{equ:randprop}
    \overline{\mathbf{X}} = \mathbf{\Pi}^{\text{avg}}_{\text{sym}} \cdot \text{diag}(\mathbf{z}) \cdot \mathbf{X}, \quad \mathbf{z}_i \sim \text{Bernoulli}(1 - \delta),
\end{equation}
where $\mathbf{z} \in \{0,1\}^{|V|}$ denotes random DropNode masks drawn from Bernoulli($1 - \delta$), $\delta$ represents DropNode probability.
%, the detail of \text{DropNode} is described in Algorithm~\ref{alg:dropnode}.
In doing so, the dropped information of each node is compensated by its neighborhoods. Under the homophily assumption of graph data, the resulting matrix $\overline{\mathbf{X}}$ can be seen as an effective data augmentation of the original feature matrix $\mathbf{X}$. Owing to the randomness of DropNode, this method could generate exponentially many augmentations for each node theoretically. 

Based on random propagation, the authors further propose graph random neural network (GRAND).
%, a simple and effective framework for semi-supervised learning on graphs.  
In each training step of \grand, the random propagation procedure is performed for $M$ times, leading to $M$ augmented feature matrices $\{\overline{\mathbf{X}}^{(m)}|1\leq m \leq M\}$. Then all the augmented feature matrices are fed into an MLP to get $M$ predictions. In optimization, \grand\ is trained with both standard classification loss on labeled data and an additional \textit{consistency regularization} loss~\cite{berthelot2019mixmatch} on the unlabeled node set $U$:
\begin{equation}
\label{equ:grand_consis}
\small
    \frac{1}{M\cdot|U|}\sum_{s\in U}\sum_{m=1}^M \norm{\hat{\mathbf{Y}}_s^{(m)} - \overline{\mathbf{Y}}_s}_2^2, \quad \overline{\mathbf{Y}}_s = \sum_{m=1}^M\frac{1}{M}\hat{\mathbf{Y}}_s^{(m)},%\footnote{For brevity, here we omit the sharpening trick adopted in \grand. For the complete formulation of GRAND's consistency loss, please refer to Equation 3 in \cite{feng2020grand}.}
\end{equation}
where $\hat{\mathbf{Y}}_s^{(m)}$ is MLP's prediction probability for node $s$ when using $\overline{\mathbf{X}}^{(m)}$ as input. The consistency loss provides an additional regularization effect by enforcing neural network to give similar predictions for different augmentations of unlabeled data.
%to minimize the discrepancy among $M$ predictions of each unlabeled node's augmentations
With random propagation and consistency regularization, \grand\ achieves better generalization capability over other conventional GNNs.

\vpara{Scalable GNNs.} %In full-batch GNNs, the recursive neighborhood expansion process causes severe time and memory challenges for learning on giant graphs. 
%To address this challenge, there are three lines of ideas to scale GNNs.
There have been three categories of methods proposed for making GNNs scalable:
%In order to make GNNs scalable, three categories of methods have been proposed recently: 
1) \textit{Node sampling methods} employ sampling strategy to speed up the recursive feature aggregation procedure, the representative methods include GraphSAGE~\cite{hamilton2017inductive}, FastGCN~\cite{FastGCN} and LADIES~\cite{zou2019layer}; 2) \textit{Graph partition methods} divide the original large graph into several small sub-graphs and run GNNs on sub-graphs, this category consists of Cluster-GCN~\cite{Chiang2019ClusterGCN} and GraphSAINT~\cite{zeng2020graphsaint}; 3) \textit{Matrix approximation methods} follow the design of SGC~\cite{wu2019simplifying} which decouples feature propagation and non-linear transformation, and utilize some approximation methods to accelerate feature propagation. \model is highly related to matrix approximation methods including PPRGo~\cite{bojchevski2020scaling} and GBP~\cite{chen2020scalable}, we will analyze their differences in Section~\ref{sec:model_analysis}.

}%==========================

\hide{
\begin{algorithm}[h]
\small
\caption{DropNode}
\label{alg:dropnode}
\begin{algorithmic}[1]
\REQUIRE ~~\\
Feature matrix $\mathbf{X} \in \mathbb{R}^{n \times d}$, DropNode probability 
$\delta \in (0,1)$. \\
\ENSURE ~~\\
Perturbed feature matrix  $\mathbf{\widetilde{X}}\in \mathbb{R}^{n \times d}$.
%\IF{mode == Inference}
%\STATE $\widetilde{\mathbf{X}} = \mathbf{X}$.
%\ELSE
\STATE Randomly sample $n$ masks: $\{\epsilon_i \sim Bernoulli(1-\delta)\}_{i=0}^{n-1}$.
 %$\mathcal{L}^l + \lambda \mathcal{L}^u$
%\STATE 
\STATE Obtain deformity feature matrix by  multiplying each node's feature vector with the corresponding  mask: $\widetilde{\mathbf{X}}_{i} = \epsilon_i \cdot \mathbf{X}_{i} $.
\STATE Scale the deformity features: $\widetilde{\mathbf{X}} = \frac{\widetilde{\mathbf{X}}}{1-\delta}$.
%\ENDIF
\end{algorithmic}
\end{algorithm}
}

\hide{
\vpara{Complexity Analysis for \grand.}  
As for Random Propagation (Cf. Equation~\ref{equ:randprop}), directly calculating the dense matrix $\mathbf{\Pi}^{\text{avg}}_{\text{sym}}$ is rather time-consuming. Hence the authors propose to calculate $\overline{\mathbf{X}}$ with power iteration, i.e., iteratively calculating and summing up the product of sparse matrix $\hat{\mathbf{A}}$ and $\hat{\mathbf{A}}^t \cdot \text{diag}(\mathbf{z}) \cdot {\mathbf{X}}$ ($0 \leq t \leq T-1$). This procedure has time complexity $\mathcal{O}(M\cdot(|V|+ |E|))$ and memory complexity $\mathcal{O}(|V| + |E|)$. This makes \grand\  cannot scale to large graphs for two reasons: 1) The random propagation process is conducted with the full batch of nodes and the time and memory complexity are positively correlated with the size of graph, which is costly when dealing with large graphs; 2) This time-consuming procedure needs be performed for $M$ times at each training step to obtain multiple augmented features with random DropNode masks. 
}

\hide{

\vpara{Graph Neural Networks.}
Graph neural networks (GNNs)~\cite{gori2005new, scarselli2009graph, kipf2016semi} generalize neural techniques into graph-structured data.
The core operation in GNNs is graph propagation, in which information is propagated from each node to its neighborhoods with some deterministic propagation rules. 
%which is usually performed by some deterministic propagation rules derived from the graph structures. 
For example, the graph convolutional network (GCN) \cite{kipf2016semi} adopts the following propagation rule:
\begin{equation}
\small
\label{equ:gcn_layer}
	\mathbf{H}^{(l+1)} = \sigma (\mathbf{\hat{A}}\mathbf{H}^{(l)}\mathbf{W}^{(l)}),
\end{equation}
\noindent where $\mathbf{\hat{A}}$ is the symmetric normalized adjacency matrix,  $\mathbf{W}^{(l)}$ is the weight matrix of the $l$-th layer, and $\sigma(.)$ denotes ReLU function. $\mathbf{H}^{(l)}$ is the hidden node representation in the $l$-th layer % with the hidden representation dimension $D^{(l)}$. 
with $\mathbf{H}^{(0)}=\mathbf{X}$. The propagation in Eq. \ref{equ:gcn_layer} could be explained via 1) an %first-order 
approximation of the spectral graph convolutional operations~\cite{bruna2013spectral,henaff2015deep, defferrard2016convolutional}, 2) neural message passing~\cite{gilmer2017neural}, 
and 3) convolutions on direct neighborhoods~\cite{monti2017geometric, hamilton2017inductive}. 
%Hence, in the rest of the paper, we will not distinguish between spectral or non-spectral convolutional neural networks, and between graph convolutions and neural message passing, strictly, for the  convenience of narration.
Recent attempts to advance this architecture include GAT~\cite{Velickovic:17GAT}, GMNN~\cite{qu2019gmnn}, MixHop~\cite{abu2019mixhop},  GraphNAS~\cite{gao2019graphnas}, and so on. 
Often, these models face the challenges of overfitting and over-smoothing due to the deterministic graph propagation process~\cite{li2018deeper,chen2019measuring,Lingxiao2020PairNorm}. 
%\yd{to add}
Differently, we propose  random propagation 
%process 
for GNNs, which decouples  feature propagation and non-linear transformation in Eq. \ref{equ:gcn_layer}, reducing the risk of over-smoothing and overfitting. 
Recent efforts have also been devoted to performing node sampling for fast and scalable GNN training, such as GraphSAGE~\cite{hamilton2017inductive}, FastGCN~\cite{FastGCN}, AS-GCN~\cite{huang2018adaptive}, and LADIES~\cite{ladies}. 
%To improve the efficiency and scalability of GNNs, efforts have been devoted to performing node sampling for fast and scalable GNN training, such as GraphSAGE~\cite{hamilton2017inductive}, FastGCN~\cite{FastGCN}, AS-GCN~\cite{huang2018adaptive}, and LADIES~\cite{ladies}. 
Different from these work, in this paper, a new sampling strategy DropNode, is proposed for improving the robustness and generalization of GNNs for semi-supervised learning.
Compared with GraphSAGE's node-wise sampling, DropNode 1) enables the decoupling of feature propagation and transformation, and 2) is more efficient as it does not require recursive sampling of neighborhoods for every node. Finally, it drops each node based an i.i.d. Bernoulli distribution, differing from the importance sampling in FastGCN, AS-GCN, and LADIES. 
%In Section \ref{sec:theory}, we also provide a theoretical explanation for DropNode from the perspective of regularization. 
%\yd{Modify the writing of this section, since DropNode has not been introduced.}

%\yd{need to add the difference between dropnode and sampling based gnn, such as graphsage} 

%\subsection
\vpara{Regularization Methods for GCNs.}
Broadly, a popular regularization method in deep learning is data augmentation, which expands the training samples by applying some transformations or injecting noise into input data \cite{devries2017dataset,goodfellow2016deep, vincent2008extracting}. 
%A general kind of augmentation method is injecting noises into the input data~\cite{goodfellow2016deep, vincent2008extracting}. 
Based on data augmentation, we can further leverage consistency regularization~\cite{berthelot2019mixmatch,laine2016temporal} for semi-supervised learning, which enforces the model to output the same distribution on different augmentations of an example.
%the same unlabeled example under different augmentations in semi-supervised learning.
%for semi-supervised learning. 
%Roughly speaking, it enforces the model to output the same class distribution for the same unlabeled example under different augmentations \cite{berthelot2019mixmatch,laine2016temporal}. 
 %VBAT~\cite{deng2019batch}, $\text{G}^3$NN~\cite{ma2019flexible},  GraphMix~\cite{verma2019graphmix} and Dropedge~\cite{YuDropedge}. 
Following this idea, a line of work has aimed to design powerful regularization methods for GNNs, such as 
VBAT~\cite{deng2019batch},
$\text{G}^3$NN~\cite{ma2019flexible}, 
GraphMix~\cite{verma2019graphmix}, 
and 
DropEdge~\cite{YuDropedge}. 
%VBAT~\cite{deng2019batch} applies virtual adversarial training~\cite{miyato2015distributional} into GCNs to encourage model invariant to adversarial attacks.  
%$\text{G}^3$NN~\cite{ma2019flexible} proposes a generative framework for semi-supervised learning on graphs, where utilize an additional link prediction task to perform regularization. 
For example, GraphMix~\cite{verma2019graphmix} introduces the MixUp~\cite{zhang2017mixup} for training GCNs. Different from \model, GraphMix augments graph data by performing linear interpolation between two samples in hidden space, and regularizes the GCNs by encouraging the model to predict the same interpolation of corresponding labels. 
DropEdge~\cite{YuDropedge} aims to militate over-smoothing by randomly removing some edges during training but does not bring significant performance gains for semi-supervised learning task. 
%Our 
%Our DropNode could be similar if we try to drop all out-coming edges of dropped nodes. 
However, DropNode is designed to 1) enable the separation of feature propagation and transformation for random feature propagation and 2) further augment graph data augmentations and facilitate the consistency regularized training. 

%random feature propagation and 

% for tackling the inherent issues of GNNs. % (Cf. Figure~\ref{fig:grand_intro}).
%challenges faced by GNNs (Cf. Figure ~\ref{fig:grand_intro}).
%, including 
% \yd{Modify the writing of this section, since DropNode and \model\ have not been introduced.}

%the issues of 
%non-robustness, over-smoothing,  overfitting, and weak generalization (Cf. Figure ~\ref{fig:grand_intro}).

%\yd{to wz: add ref the Figure in Introduction}).

}

\hide{

\section{Definition and Preliminaries}
\label{sec:problem}

In this section, we first present the problem formalization of graph-based semi-supervised learning. Then we briefly introduce the concept of graph convolutional networks and consistency regularization. 

%\vpara{Definition.} We represent 
%Given a set of labeled data samples $L=\{(x_1, y_1),(x_2, y_2),...,(x_{|L|}, y_{|L|})\}$, where $x_i \in \mathbb{R}^d$ and $y_i$ are the feature and label of the $i^{th}$ sample, as well as unlabeled samples $U=\{x_{|L|+1}, x_{|L|+2},...,x_{|L|+|U|}\}$. We further denote $X \in $
Let $G=(V, E)$ denote a graph, where $V$ is a set of $|V|=n$ nodes, $E \subseteq V \times V$ is a set of $|E|=k$ edges between nodes. Let $A\in \{0,1\}^{n \times n}$ denote the adjacency matrix of the graph $G$, with an element $A_{ij}>0$ indicating node $v_i \in V$ has a link to $v_j \in V$. We further assume each node $v_i$ is associated with a $d-$dimensional feature vector $x_i$ and a class label $y_i \in \mathcal{Y} = \{1,...,C\}$. 
%Each node $v_i \in V$ is also associated with a $d-$dimensional feature vector $x_i$ and a label $y_i$. 
%We represent a partially labeled graph $G$ as $(A, V, E, X)$, where $A\in \{0,1\}^{n \times n}$ denotes the adjacency matrix, $V=L\cup U$ is a set of $n$ nodes, consisting of labeled node set $L=\{v_0,  v_1,..., v_{|L|-1}\}$ and unlabeled node set $U=\{v_{|L|},v_{|L|+1}, ..., v_{|L|+|U|-1}\}$, E is the edge set, and $X \in R^{n \times d}$ is the feature matrix assuming node $v_i$ has a $d$-dimensional feature $X_i$. Labeled node $v_i$ has a label $y_i \in \mathcal{Y}=\{0, ..., M-1\}$. For convenience, we use $v_i$ and $i$ interchangeably to represent a node if there is no ambiguity

 %Given a partially labeled graph $G$ with labeled data $\{(X_i, y_i): i\in L\}$ and unlabeled data $\{X_i, i\in U\}$, the goal of semi-supervised node classification is to learn a mapping $f_{(A, X)}:V \mapsto \mathcal{Y}$.

%the input node feature matrix with the feature dimension $d$. 
%and $X \in R^{n \times d}$ is a feature matrix recording $d$ features associated with each node. 
In this work, we focus on semi-supervised learning on graph. In this setting, only a part of nodes have observed their labels and the labels of other samples are missing. Without loss of generality
, we use $V^L=\{v_1, v_2, ..., v_{m}\}$ to represent the set of labeled nodes, and $V^U=\{v_{m+1}, v_{m+2}, ..., v_{n}\}$ refers to unlabeled nodes set, where $ 0 < m \ll n$. Denoting the feature matrix of all nodes as $X \in \mathbf{R}^{n\times d}$, the labels of labeled nodes as $Y^L$, 
the goal of semi-supervised learning on graph is to infer the missing labels $Y^U$ based on $(X, Y^L, G)$. 
%Given a partially labeled graph $G=(V^L, V^U, E, X, Y^L)$, where $V^L$ is a set of nodes with labels $Y^L$ and $V^U$ is the remaining set of unlabeled nodes, the objective is to learn a function $f:G=(V^L, V^U, E, X, Y^L)\mapsto Y^U$ to infer unlabeled nodes' labels. 
%When talking about representation learning for nodes, our objective is to improve the node classification performance with better representations.

%with node features as $(A, X)$, where $A\in \{0,1\}^{n \times n}$ is the adjacency matrix, and $X \in R^{n \times d}$ is the feature matrix with feature dimension $d$. 
%Given a node set of labeled  nodes $V^L = \{(v_1, y_1),(v_2, y_2), ...\}$ where $y_i \in \mathcal{Y} $ is the corresponding label and usually  $|V^L| \ll |V|$, the goal of graph node classification is to learn a mapping $f_{(A, F)}:V \mapsto \mathcal{Y}$. 

%Different from the standard supervised learning for i.i.d data, graph node classification has much less but co-dependent labeled data. Hence, we should find a way to represent nodes in a graph so that massive unlabeled data structured by the graph can be leveraged.

\subsection{Graph Propagation}
%\reminder{Graph Propagation}
Almost all the graph models, whether based on probabilistic graph theory or neural networks, have a very important phase, i.e., graph propagation, which is usually performed by some deterministic rules derived from the graph structures. Here we take the propagation mechanism in GCN \cite{kipf2016semi}, a popular graph neural model, as an example to introduce it.
%In this work, our node representation framework is built upon graph convolutional networks.
%Here, we refer to the definition of graph convolution in spectral methods and it can be fast approximated   %fast in the original GCNs~\cite{kipf2016semi} 
%by the multi-layer graph convolution networks (GCNs)~\cite{kipf2016semi} with the following layer-wise propagation rule:
GCN performs information propagation by the following rule:
\begin{equation}
\label{equ:gcn_layer}
	H^{(l+1)} = \sigma (\hat{A}H^{(l)}W^{(l)}),
\end{equation}
\noindent where $\hat{A}=\widetilde{D}^{-\frac{1}{2}}\widetilde{A}~\widetilde{D}^{-\frac{1}{2}}$ is the symmetric normalized adjacency matrix, $\widetilde{A} = A + I_{n}$ is the adjacency matrix with augmented self-connections, $I_{n}$ is the identity matrix, $\widetilde{D}$ is the diagonal degree matrix with $\widetilde{D}_{ii} = \sum_j \widetilde{A}_{ij}$, and $W^{(l)}$ is a layer-specific learnable weight matrix. Function $\sigma_l(.)$ denotes a nonlinear activation function, e.g., $\mathrm{ReLu}(.)=\max(0,.)$. $H^{(l)}\in R^{n \times D^{(l)}}$ is the matrix of activations, or the $D^{(l)}$-dimensional hidden node representation in the $l^{th}$ layer % with the hidden representation dimension $D^{(l)}$. 
with $H^{(0)}=X$. 

The form of propagation rule in Equation \ref{equ:gcn_layer} could be motivated via a first-order approximation of the spectral graph convolutional operations~\cite{bruna2013spectral,henaff2015deep, defferrard2016convolutional}, and it also obeys neural message passing~\cite{gilmer2017neural}, similar to convolutions on direct neighborhoods~\cite{monti2017geometric, hamilton2017inductive}. Hence, in the rest of the paper, we will not distinguish between spectral or non-spectral convolutional neural networks, and between graph convolutions and neural message passing, strictly, for the  convenience of narration.

\subsection{Regularization  in Semi-supervised Learning.} 
\subsubsection{Data Augmentation}
A widely used regularization method in deep learning is data augmentation, which expands the training samples by applying some transformations on input data \cite{devries2017dataset}. In practical, the transformations used here can be some domain-specific methods, such as rotation, shearing on images. What's more, a general kind of data augmentation method is injecting noise to the input data~\cite{goodfellow2016deep}, this idea has been adopted in a variety of deep learning methods, e.g., denoising auto-encoders~\cite{vincent2008extracting}. In this way, the model is encouraged to be invariant to transformations and noises, leading to better generalization and robustness. 
\subsubsection{Consistency Regularization}
%A widely used regularization method in supervised learning is data augmentation, which expands the training dataset by performing small perturbations and transformation on input data \cite{devries2017dataset}. 
%Roughly speaking, this method requires 
%by performing small perturbations in input feature space~.

Consistency regularization is based on data augmentation and applies it into semi-supervised learning. 
Roughly speaking, it enforces the model to output the same class distribution for the same unlabeled example under different augmentations \cite{berthelot2019mixmatch}. Let $f_{aug}(x)$ denote a stochastic augmentation function for unlabeled data $x$, e.g., randomly transformation or adding noise. One alternative method to perform consistency regularization is adding a regularization loss term \cite{laine2016temporal}:
\begin{equation}
\label{eq_consis}
    \|P(y|\widetilde{x}^{(1)}, \theta) - P(y|\widetilde{x}^{(2)}, \theta) \|^2,
\end{equation}
where $\theta$ refers to model parameters, $\widetilde{x}^{(1)}$ and $\widetilde{x}^{(2)}$ are two random augmentations of $x$.

}%end of hide 

\begin{figure*}[ht]
	\centering
	\includegraphics[width=0.95\linewidth]{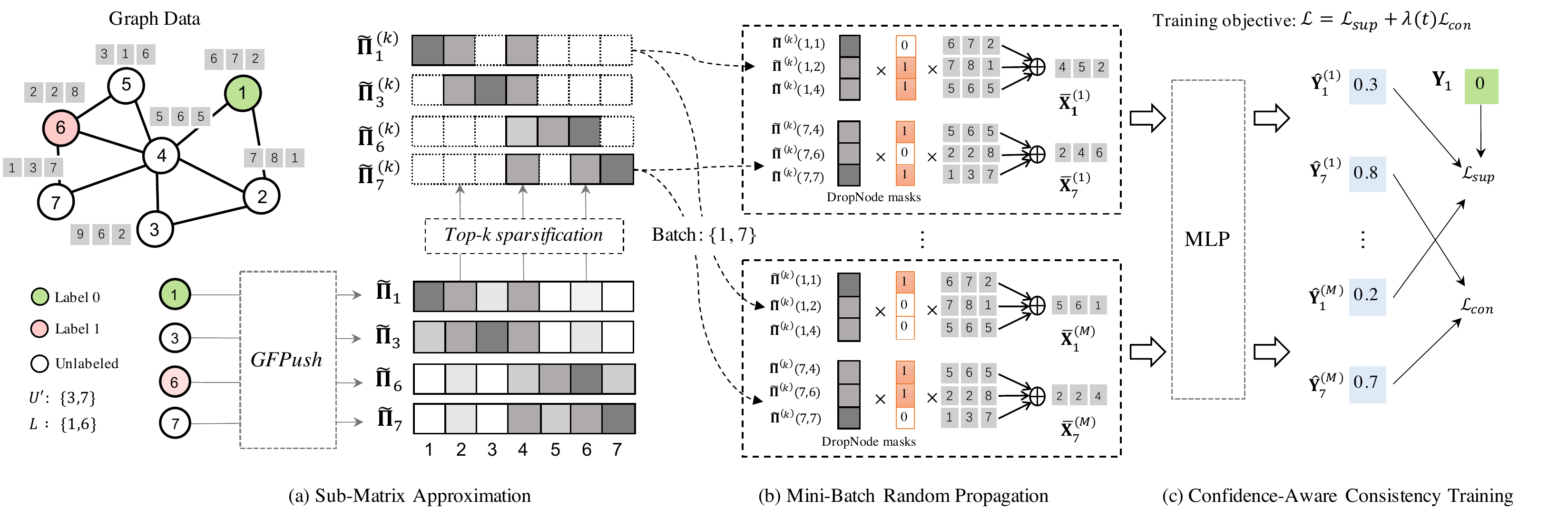}
	\vspace{-0.15in}
	\caption{Illustration of \model.  \textmd{\small (a) \model adopts \textit{Generalized Forward Push} (\textit{GFPush}) and \textit{Top-k sparsification} to approximate the corresponding rows of propagation matrix $\mathbf{\Pi}$ for nodes in $L \cup U'$. (b) The obtained sparsified row approximations are then used to perform mini-batch random propagation to generate augmentations for nodes in the batch. (c) Finally, the calculated feature augmentations are fed into an MLP to conduct confidence-aware consistency training, which employs both supervised loss $\mathcal{L}_{sup}$ and confidence-aware consistency loss $\mathcal{L}_{con}$  for model optimization.}}
		% \textmd{In this example, the graph has seven nodes indicated with 1 to 7, where only node 1 and node 6 are labeled with two different classes, and other nodes' labels are missing. The sampled unlabeled subset $U'=\{3,7\}$. In top-$k$ sparsification, the maximum neighborhood size $k$ is set to 3. The batch size is set to 2.}}
	\label{fig:arc}
	\vspace{-0.06in}
\end{figure*}

\section{The \model Framework}
\label{sec:method}

In this section, we briefly review the graph random neural network (GRAND) and present its scalable solution \model for large-scale semi-supervised graph learning. 

\subsection{The Graph Random Neural Network}

Recently, Feng et al.~\cite{feng2020grand} introduce the graph neural neural network (GRAND) for semi-supervised node classification. 
GRAND is a GNN consistency regularization framework that optimizes the prediction consistency of unlabeled nodes in different augmentations.

Specifically, it designs 
%it exploits the mixed-order propagation to achieve graph data augmentation. 
%They develop 
 \textit{random propagation}---a mixed-order propagation strategy---to achieve graph data augmentations. 
%, in which 
First, the node features $\mathbf{X}$ are  randomly dropped with DropNode---a variant of dropout. 
Then the resultant corrupted feature matrix is propagated over the graph with a mixed-order matrix. 
Instead of the PPR matrix, GRAND uses an average pooling matrix %from order $0$ to $N$:
$\mathbf{\Pi}^{\text{avg}}_{\text{sym}} = \sum_{n=0}^{N} \hat{\mathbf{A}}^n/(N+1)$ for propagation.
%And the random propagation is defined as
%Instead of the propagation rule used in GCN, a mixed-order proximity matrix $\mathbf{\Pi}^{\text{mix}}= \sum_{k=0}^{K}\frac{1}{K+1}(\mathbf{D}^{-\frac{1}{2}}\mathbf{A}\mathbf{D}^{-\frac{1}{2}})^k$ is adopted to preserve both local and global information. 
Formally, the random propagation strategy is formulated as: 
\begin{equation}
\small
\label{equ:randprop}
    \overline{\mathbf{X}} = \mathbf{\Pi}^{\text{avg}}_{\text{sym}} \cdot \text{diag}(\mathbf{z}) \cdot \mathbf{X}, \quad \mathbf{z}_i \sim \text{Bernoulli}(1 - \delta),
\end{equation}
where $\mathbf{z} \in \{0,1\}^{|V|}$ denotes the random DropNode masks drawn from Bernoulli($1 - \delta$), and $\delta$ represents DropNode probability. 
%, the detail of \text{DropNode} is described in Algorithm~\ref{alg:dropnode}.
In doing so, the dropped information of each node is compensated by its neighborhoods. 
Under the homophily assumption of graph data, the resulting matrix $\overline{\mathbf{X}}$ can be seen as an effective data augmentation of the original feature matrix $\mathbf{X}$. 
Owing to the randomness of DropNode, this method can in theory generate exponentially many augmentations for each node.

%Based on random propagation, the authors further propose graph random neural network (GRAND).
%, a simple and effective framework for semi-supervised learning on graphs.  
In each training step of \grand, the random propagation procedure is performed for $M$ times, leading to $M$ augmented feature matrices $\{\overline{\mathbf{X}}^{(m)}|1\leq m \leq M\}$. 
Then all the augmented feature matrices are fed into an MLP to get $M$ predictions. 
During optimization, \grand\ is trained with both the standard classification loss on labeled data and an additional \textit{consistency regularization} loss~\cite{berthelot2019mixmatch} on the unlabeled node set $U$, that is, 
\begin{equation}
\label{equ:grand_consis}
\small
    \frac{1}{M\cdot|U|}\sum_{s\in U}\sum_{m=1}^M \norm{\hat{\mathbf{Y}}_s^{(m)} - \overline{\mathbf{Y}}_s}_2^2, \quad \overline{\mathbf{Y}}_s = \sum_{m=1}^M\frac{1}{M}\hat{\mathbf{Y}}_s^{(m)},%\footnote{For brevity, here we omit the sharpening trick adopted in \grand. For the complete formulation of GRAND's consistency loss, please refer to Equation 3 in \cite{feng2020grand}.}
\end{equation}
where $\hat{\mathbf{Y}}_s^{(m)}$ is MLP's prediction probability for node $s$ when using $\overline{\mathbf{X}}^{(m)}_s$ as input. 
The consistency loss provides an additional regularization effect by enforcing the neural network to give similar predictions for different augmentations of unlabeled data.
%to minimize the discrepancy among $M$ predictions of each unlabeled node's augmentations
With random propagation and consistency regularization, \grand\ achieves better generalization capability over conventional GNNs~\cite{feng2020grand}.

 %As for Random Propagation (Cf. Equation~\ref{equ:randprop}), directly calculating the dense matrix $\mathbf{\Pi}^{\text{avg}}_{\text{sym}}$ is rather time-consuming. Hence the authors propose to calculate $\overline{\mathbf{X}}$ with power iteration, i.e., iteratively calculating and summing up the product of sparse matrix $\hat{\mathbf{A}}$ and $\hat{\mathbf{A}}^t \cdot \text{diag}(\mathbf{z}) \cdot {\mathbf{X}}$ ($0 \leq t \leq T-1$). 
\vpara{Scalability  of \grand.} 
In practice, the $n$-th power of the adjacency matrix $\hat{\mathbf{A}}^n$ is computationally infeasible when $n$ is large~\cite{qiu2018network}. 
To avoid this issue, \grand adopts the power iteration to directly calculate the entire augmented feature matrix $\overline{\mathbf{X}}$ (in Equation~\ref{equ:randprop}), i.e., iteratively calculating and summing up the product of $\hat{\mathbf{A}}$ and $\hat{\mathbf{A}}^n\cdot\text{diag}(\mathbf{z})\cdot{\mathbf{X}}$ for $0\leq n < N$. % {\mathbf{X}}$ ($0 \leq t \leq T-1$). 
 %The corresponding time and memory complexity are $\mathcal{O}(|V| + |E|)$. 
 %This procedure has time complexity $\mathcal{O}(M\cdot(|V|+ |E|))$ and memory complexity $\mathcal{O}(|V| + |E|)$. =
This procedure is implemented with the sparse-dense matrix multiplication and has a linear time complexity of $\mathcal{O}(|V| + |E|)$. However, it needs to be performed for $M$ times at \textit{every} training step to generate different feature augmentations. 
Thus the total complexity of $T$ training steps becomes $\mathcal{O}(T\cdot M\cdot (|V|+|E|))$, which is prohibitively expensive when dealing with large graphs.
\subsection{Overview of \model}
\label{sec:overview}

We present \model to 
%To 
achieve both scalability and accuracy for graph based semi-supervised learning. 
 %in solving GSSL. %, we propose \model. 
It follows the general consistency regularization principle of \grand and  
comprises techniques to make it scalable to large graphs 
%significantly 
%extends \grand %in order to achieve 
%to have a good scalability 
while maintaining %and even exceeding
\grand's flexibility and generalization capability. 
%It follows the general framework of \grand, while significantly refining its key components with several novel techniques: 
%Compared with \grand\, the novelty of \model\ lies in two folds: 

Briefly, instead of propagating features with power iteration, 
we develop an efficient approximation algorithm---generalized forward push (GFPush)---in \model to pre-compute the required row vectors of propagation matrix and perform random propagation in a mini-batch manner. 
The time complexity of this procedure is controlled by a predefined hyperparameter, % (Cf. Section~\ref{sec:model_analysis}), 
avoiding the scalability limitation faced by \grand. 
  % and adopts a novel matrix approximation approach---GFPush to accelerate this procedure, reducing the training complexity to be
   % irrelevant to graph size (Cf. Section~\ref{sec:model_analysis}); 
Furthermore, \model\ adopts a new confidence-aware loss for consistency regularization, which makes the training process more stable and leads to better generalization performance than \grand.

\vpara{Propagation Matrix.} In \model, we propose the following generalized mixed-order matrix for feature propagation:
\begin{equation}
\small
\label{equ:prop}
\mathbf{\Pi} = \sum_{n=0}^{N} w_n \cdot\mathbf{P}^n, \quad \mathbf{P} = \widetilde{\mathbf{D}}^{-1}\widetilde{\mathbf{A}},
\end{equation}
where $\sum_{n=0}^Nw_n=1$ and $w_n \geq 0$, $\mathbf{P}$ is the row-normalized adjacency matrix.
%Different from the symmetric normalization used in GCN, we use row normalization for $\widetilde{\mathbf{A}}$. 
% Here we use row normalization for $\widetilde{\mathbf{A}}$. 
%And the $n$-order normalized adjacency matrix $\mathbf{P}^n = (\widetilde{\mathbf{D}}^{-1}\widetilde{\mathbf{A}})^n$ is also the \textit{$n$-order random walk reverse transition matrix} of $\widetilde{G}$, where the element $\mathbf{P}^{n}(s,v)$ denotes the probability that a $n-$step random-walk goes from source node $s$ to target node $v$. 
%Thus $\mathbf{\Pi}$ can be seen as a \textit{generalized mixed-order random walk transition matrix}. 
Different from the propagation matrices used in GRAND and other GNNs, the form of $\mathbf{\Pi}$ adopts a set of tunable weights $\{w_n|0\leq n \leq N\}$ to fuse different orders of adjacency matrices. 
By adjusting $w_n$, \model\ can {flexibly} manipulate the importance of different orders of neighborhoods to suit the diverse graphs of distinct structural properties in the real world. 
%We will examine this advantage in Section~\ref{sec:overall}.
%which brings us more \textit{flexibility} to deal with diverse graphs.
%compared with GRAND and other typical GNNs which uses fixed weights for different orders of adjacency matrices.
%In practice, we could flexibly manipulate the importance of different orders of transition matrix by $\{w_t|0 \leq t \leq T\}$. 
%Inspired by APPNP~\cite{klicpera2018predict}, \grand~\cite{feng2020grand} and SGC~\cite{wu2019simplifying}
\hide{
Specifically, we consider three setups for $\mathbf{\Pi}$: 
1) {Truncated personalized PageRank matrix} $\mathbf{\Pi}^{\text{ppr}} =  \sum_{n=0}^{N}\alpha (1-\alpha)^n \mathbf{P}^{n}$; 
2) {Average pooling matrix} $\mathbf{\Pi}^{\text{avg}} = \sum_{n=0}^{N} \mathbf{P}^{n}/(N+1)$; 
3) {Single order matrix} $\mathbf{\Pi}^{\text{single}} =  \mathbf{P}^{N}$.
}

\vpara{Training Pipeline.} 
%Based on previous analysis, this process needs to be performed for multiple times at each training epoch, which is rather time-consuming if calculated using power iteration. 
To achieve fast training, \model\ abandons the power iteration method which directly calculates the entire augmented feature matrix $\overline{\mathbf{X}}$, and instead computes each augmented feature vector separately for each node. 
Ideally, the augmented feature vector $\overline{\mathbf{X}}_s$ of node $s$ is  calculated by:
% matrix $\overline{\mathbf{X}}$
%To accelerate model training, instead of calculating the whole augmented feature matrix $\overline{\mathbf{X}}$ with power iteration as done in GRAND, we compute each augmented feature vector $\overline{\mathbf{X}}_s$ individually using the corresponding row vector $\mathbf{\Pi}_s$ of node $s$:

\begin{equation}
	\label{equ:mini}
\small
    \overline{\mathbf{X}}_{s} = \sum_{v \in \mathcal{N}^{\pi}_s} \mathbf{z}_v \cdot \mathbf{\Pi}{(s,v)} \cdot \mathbf{X}_v, \quad \mathbf{z}_v \sim \text{Bernoulli}(1-\delta).
\end{equation}
Here we use $\mathbf{\Pi}_s$ to denote the row vector of $\mathbf{\Pi}$ corresponding to node $s$, $\mathcal{N}^{\pi}_s$ is used to represent the indices of non-zero elements of $\mathbf{\Pi}_s$, $\mathbf{\Pi}{(s,v)}$ denotes the $v$-th element of $\mathbf{\Pi}_s$. 
This paradigm allows us to generate augmented features for only a batch of nodes in each training step, and thus enables us to use efficient mini-batch gradient descent for optimization. 

However, it is difficult to calculate the exact form of $\mathbf{\Pi}_s$ in practice. 
To address this problem, we develop several efficient methods to approximate $\mathbf{\Pi}_s$ in \model. 
The approximation procedure consists of two stages. 
In the first stage, we propose an efficient method \textit{Generalized Forward Push} (\textit{GFPush}) to compute an error-bounded approximation $\widetilde{\mathbf{\Pi}}_s$ for the row vector $\mathbf{\Pi}_s$. 
In the second stage, we adopt a \textit{top-$k$ sparsification} strategy to truncate $\widetilde{\mathbf{\Pi}}_s$ to only contain the top $k$ largest elements. 
The obtained sparsified row approximation $\widetilde{\mathbf{\Pi}}^{(k)}_s$ is used to calculate $\overline{\mathbf{X}}_s$ as a substitute of $\mathbf{\Pi}_s$ (Eq.~\ref{equ:mini}). 
For efficiency, it is required to pre-compute the corresponding row approximations for all nodes used in training.
 %sparsified row vectors for all nodes used in training. 
 In addition to labeled nodes,
 %different from conventional GNNs only using labeled nodes, 
 \model\ also requires unlabeled nodes to perform consistency regularization during training. 
 To further improve efficiency, instead of using the full set of  $U$,  \model\ samples a smaller subset of unlabeled nodes $U'\subseteq U$ for consistency regularization.
%Instead of using the full set of unlabeled nodes $U$,  we sample a smaller subset of unlabeled nodes $U'\subseteq U$ to serve consistency regularization in \model for efficiency. 
 %As the number of unlabeled nodes $|U|$ could be huge, which might induce enormous computational cost into approximation stage if used all of them.
 %we sample a smaller subset of unlabeled nodes $U'\subseteq U$ to serve consistency regularization. 
%We will empirically examine the effects of $|U'|$ in Appendix A.2.
As illustrated in Figure~\ref{fig:arc}, the training pipeline of \model\ consists of three steps:
\begin{itemize}
    \item \textit{Sub-matrix approximation.} 
    We obtain a sparsified row approximation $\mathbf{\Pi}^{(k)}_s$ for each node $s \in L \cup U'$ through GFPush and top-$k$ sparsification. 
    The resultant sparsified sub-matrix is used to support random propagation. 
    \item \textit{Mini-batch random propagation.} 
    At each training step, we sample a batch of nodes from $L \cup U'$ and generate multiple augmentations for each node in the batch with the approximated row vector.
    \item \textit{Confidence-aware consistency training.} 
    We feed the augmented features into an MLP to get corresponding predictions and optimize the model with both supervised loss and confidence-aware consistency loss.
\end{itemize}

\subsection{Sub-Matrix Approximation}
% We propose
%We first show how to approximate a row vector $\mathbf{\Pi}_s$ of the propagation matrix for a specific source node $s$. To this end, 

\vpara{Generalized Forward Push (GFPush).} It can be observed that the row-normalized adjacency matrix $\mathbf{P}=\widetilde{\mathbf{D}}^{-1}\widetilde{\mathbf{A}}$ is also the reverse random walk transition probability matrix~\cite{chen2020scalable} on $\widetilde{G}$, where row vector $\mathbf{P}_s$ denotes random walk transition probabilities starting from node $s$.
%As for row vector $\mathbf{\Pi}_s = \sum_{n=0}^Nw_n\mathbf{P}^n_s$, each $\mathbf{P}^n_s=(\widetilde{\mathbf{D}}^{-1}\widetilde{\mathbf{A}})^n_s$ 
%denotes the $n$-step random walk transition probabilities starting from node $s$ on graph $\widetilde{G}$. 
Based on this fact, we propose an efficient algorithm called \textit{Generalized Forward Push} (\textit{GFPush}) to approximate row vector $\mathbf{\Pi}_s=\sum_{n=0}^Nw_n\mathbf{P}^n_s$ with a bounded error.
%, an efficient push-flow algorithm that could generate an error-bounded approximation for each row vector $\mathbf{\Pi}_s$. 
GFPush is inspired by the \textit{Forward Push}~\cite{andersen2006local} algorithm for approximating personalized PageRank vector, while has much higher flexibility with the ability to approximate the generalized mixed-order matrix $\mathbf{\Pi}$.
%Inspired by \textit{Forward Push}~\cite{andersen2006local} for approximating personalized pagerank matrix. We propose a push-flow method named GFPush to estimate $\mathbf{\Pi}_s$ with a bounded $L_1$ error. 
%The core idea of GFPush is to simulate an deterministic $T-$step random walk from $s$, however, in each step of which probability masses below a certain threshold are pruned for efficiency.
The core idea of GFPush is to simulate an $N$-step random walk probability diffusion process from $s$ with a series of pruning operations for acceleration.
%on probability masses below a certain threshold.
%at each step.
%are pruned for efficiency.
%, while eliminating the probability masses below a predefined threshold $r_{max}$ at each step. 
To achieve that, we should maintain a pair of vectors at each step $n$ ($0 \leq n \leq N$): 1) \textit{Reserve vector $\mathbf{q}^{(n)} \in \mathbb{R}^{|V|}$}, denoting the probability masses reserved at step $n$; 2) \textit{Residue vector $\mathbf{r}^{(n)}\in \mathbb{R}^{|V|}$}, representing the probability masses to be diffused beyond step $n$.
%Generally speaking, GFPush takes a source node $s$ as input, and generates an underestimation of row vector $\mathbf{\Pi}^{\text{rw}}_s$ by simulating a  deterministic $T-$hop random walk from $s$ with a series of push operations. 

Algorithm~\ref{alg:GFPush} shows the pseudo-code of GFPush. At beginning, $\mathbf{r}^{(0)}$ and $\mathbf{q}^{(0)}$ are both initialized as the indicator vector $\mathbf{e}^{(s)}$ where $\mathbf{e}^{(s)}_s=1$ and  $\mathbf{e}^{(s)}_v=0$ for $v\neq s$, meaning the random walk starts from $s$ with the probability mass of 1. Other reserve and residue vectors (i.e., $\mathbf{r}^{(n)}$ and $\mathbf{q}^{(n)}, 1 \leq n \leq N$) are set to $\vec{0}$.
%that 
%$\mathbf{R}^{(0)}{(s,v)} = 0$ for all $v \neq s$ while
% and $\mathbf{Q}^{(0)}_s=\mathbf{e}_s$, representing that the random walk starts from $s$ with the probability mass of 1. 
Then the algorithm iteratively updates reserve and residue vectors with $N$ steps. 
%At step $t$, $\mathbf{Q}^{(t)}_s$ is firstly assigned as the residue vector $\mathbf{R}^{(t)}_s$ obtained from the former step. 
In the $n$-th iteration, the algorithm conducts a \textit{push} operation (Line 5--9 of algorithm~\ref{alg:GFPush}) for node $v$ which satisfies $\mathbf{r}^{(n-1)}_v>\mathbf{d}_v\cdot r_{max}$. Here $\mathbf{d}_v = \widetilde{\mathbf{D}}(v,v)$ represents the degree of $v$,
% in self-loop augmented graph $\widetilde{G}$, 
$r_{max}$ is a predefined threshold. In the push operation, the residue $\mathbf{r}^{(n-1)}_v$ of $v$ is evenly distributed to its neighbors, and the results are stored into the $n$-th residue vector
%the residue vector of the next step, i.e., 
$\mathbf{r}^{(n)}$. Meanwhile, the reserve vector $\mathbf{q}^{(n)}$ is also updated to be identical with $\mathbf{r}^{(n)}$. After finishing the push operation on $v$, we reset $\mathbf{r}_v^{(n-1)}$ to 0 to avoid duplicate updates.
%updated residue $\mathbf{r}^{(n)}_u$ is assigned to $\mathbf{q}^{(n)}_u$. 
%After that, $\mathbf{r}^{(n-1)}_v$ is reset to $0$. 
%When the final iteration terminates, reserve vector $\mathbf{Q}^{(T)}_s$ is set as a copy of $\mathbf{R}^{(T)}_s$.

To gain more intuition of this procedure, we could observe that $\mathbf{r}^{(n-1)}_v/\mathbf{d}_v$ is the conditional probability that a random walk moves from $v$ to a neighboring node $u$, conditioned on it reaching $v$ with probability $\mathbf{r}^{(n-1)}_v$ at the previous step. Thus each push operation on $v$ can be seen as a one-step random walk probability diffusion process from $v$ to its neighborhoods. To ensure efficiency, GFPush only conducts push operations for node $v$ whose residue value is greater than $\mathbf{d}_v \cdot r_{max}$. %, where nodes whose residues less than this threshold are pruned. %will while ignoring the nodes with small residues (i.e., $\{v |\mathbf{R}^{(t)}(s,v)\leq d_v \cdot r_{max}\}$), which ensures the algorithm's efficiency. 
Thus when the $n$-th iteration is finished, $\mathbf{q}^{(n)}$ can be seen as an approximation of the $n$-step random walk transition vector $\mathbf{P}^n_s$. And  $\widetilde{\mathbf{\Pi}}_s=\sum_{n=0}^N w_n\mathbf{q}^{(n)}$ is accordingly considered as the approximation of $\mathbf{\Pi}_s$ as returned by the algorithm. % with a bounded $L_1$ error. 
% \vpara{Theoretical Analysis for GFPush.} 
 %, which is returned by Algorithm~\ref{alg:GFPush}. 
 
\vpara{Theoretical Analysis.} We have the following theorem about the bounds of time complexity, memory complexity, and approximation error of GFPush.% Algorithm~\ref{alg:GFPush}.

 \begin{thm}
 \label{thm1}
 Algorithm~\ref{alg:GFPush} has  $\mathcal{O}(N/r_{max})$  time complexity and $\mathcal{O}(N/r_{max})$ memory complexity, and returns $\widetilde{\mathbf{\Pi}}_s$ as an approximation of $\mathbf{\Pi}_s$ with the $L_1$ error bound: $\parallel\mathbf{\Pi}_s - \widetilde{\mathbf{\Pi}}_s\parallel_1 \leq N\cdot (2|E| +|V|) \cdot r_{max}$.
 \end{thm}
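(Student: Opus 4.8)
The plan is to treat the three claims separately, handling the $L_1$ error bound first since the mass estimates it produces also drive the complexity arguments. First I would introduce the per-level error vector $\boldsymbol{\epsilon}^{(n)} = \mathbf{P}^n_s - \mathbf{q}^{(n)}$ and establish a clean one-step recursion for it. The key structural observation is that at the start of iteration $n$ the residue equals $\mathbf{q}^{(n-1)}$, which splits into a \emph{pushed} part $\mathbf{p}^{(n-1)}$ (the coordinates with $\mathbf{r}^{(n-1)}_v > \mathbf{d}_v \cdot r_{max}$) and a discarded \emph{lost} part $\mathbf{l}^{(n-1)} = \mathbf{q}^{(n-1)} - \mathbf{p}^{(n-1)}$, and that a push distributes $\mathbf{r}^{(n-1)}_v$ evenly to neighbors exactly as right-multiplication by $\mathbf{P}=\widetilde{\mathbf{D}}^{-1}\widetilde{\mathbf{A}}$, so that $\mathbf{q}^{(n)} = \mathbf{p}^{(n-1)}\mathbf{P}$. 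Since the exact vector satisfies $\mathbf{P}^n_s = \mathbf{P}^{n-1}_s \mathbf{P}$, subtracting gives the recursion $\boldsymbol{\epsilon}^{(n)} = (\boldsymbol{\epsilon}^{(n-1)} + \mathbf{l}^{(n-1)})\mathbf{P}$.

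Next I would bound the two ingredients of the recursion. Because every discarded node satisfies $\mathbf{q}^{(n-1)}_v \le \mathbf{d}_v \cdot r_{max}$, the lost mass obeys $\norm{\mathbf{l}^{(n-1)}}_1 \le r_{max}\sum_v \mathbf{d}_v = (2|E| + |V|)\, r_{max}$, using $\sum_v \widetilde{\mathbf{D}}(v,v) = 2|E| + |V|$. Since $\mathbf{P}$ is row-stochastic, right-multiplication is non-expansive in $L_1$, i.e. $\norm{\mathbf{x}\mathbf{P}}_1 \le \norm{\mathbf{x}}_1$. Combining these with $\boldsymbol{\epsilon}^{(0)} = \vec{0}$ and a short induction yields $\norm{\boldsymbol{\epsilon}^{(n)}}_1 \le n\,(2|E|+|V|)\,r_{max}$, and then $\norm{\mathbf{\Pi}_s - \widetilde{\mathbf{\Pi}}_s}_1 \le \sum_{n=0}^N w_n \norm{\boldsymbol{\epsilon}^{(n)}}_1 \le N(2|E|+|V|)\,r_{max}$ follows from $\sum_n w_n = 1$ and $n \le N$.

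For the time complexity, I would charge each push on $v$ at level $n$ its cost $\mathcal{O}(\mathbf{d}_v)$ and use the push condition in the form $\mathbf{d}_v < \mathbf{r}^{(n-1)}_v / r_{max}$ to bound the work at level $n$ by $\tfrac{1}{r_{max}}\norm{\mathbf{q}^{(n-1)}}_1$. A mass invariant then caps this: the vectors $\mathbf{q}^{(n)}$ are nonnegative and $\norm{\mathbf{q}^{(n)}}_1 \le \norm{\mathbf{q}^{(n-1)}}_1 \le 1$, since $\mathbf{p}^{(n-1)} \le \mathbf{q}^{(n-1)}$ entrywise and $\mathbf{P}$ preserves the $L_1$ mass of nonnegative vectors, with $\norm{\mathbf{q}^{(0)}}_1 = 1$. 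Hence each level costs at most $1/r_{max}$ and the total is $\mathcal{O}(N/r_{max})$. The memory claim follows because the number of nonzero coordinates ever created across the residue and reserve vectors is bounded by the same total push cost, giving $\mathcal{O}(N/r_{max})$.

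The main obstacle will be pinning down the error recursion rigorously: one must argue that the \emph{only} source of inaccuracy is the mass discarded below the threshold at each level (the pushes themselves introduce no error), and that this discarded mass accumulates \emph{additively} rather than multiplicatively precisely because $\mathbf{P}$ is non-expansive in $L_1$. Once the identity $\boldsymbol{\epsilon}^{(n)} = (\boldsymbol{\epsilon}^{(n-1)} + \mathbf{l}^{(n-1)})\mathbf{P}$ and the mass invariant $\norm{\mathbf{q}^{(n)}}_1 \le 1$ are in hand, all three bounds fall out with only routine estimates.
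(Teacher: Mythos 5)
Your proposal is correct and takes essentially the same approach as the paper: your per-level error recursion $\boldsymbol{\epsilon}^{(n)} = (\boldsymbol{\epsilon}^{(n-1)} + \mathbf{l}^{(n-1)})\mathbf{P}$, once unrolled, is precisely the paper's key identity $\mathbf{P}^{n}_s = \mathbf{q}^{(n)} + \sum_{i=1}^{n} (\mathbf{P}^i)^\mathsf{T} \cdot \mathbf{r}^{(n-i)}$ (Lemma~\ref{lemma1_appendix}), with your lost vectors $\mathbf{l}^{(n-i)}$ equal to the terminal residues $\mathbf{r}^{(n-i)}$, the only organizational difference being that the paper proves the identity as an invariant maintained under individual push operations while you argue level by level. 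The time, memory, and error estimates—charging each push $\mathcal{O}(\mathbf{d}_v)$ against the threshold condition with at most unit mass per level, bounding nonzeros by total push work, and combining $\sum_v \mathbf{d}_v = 2|E|+|V|$ with the $L_1$ non-expansiveness of the row-stochastic $\mathbf{P}$ and $\sum_{n=0}^N w_n = 1$—coincide with the paper's.
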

 
 \begin{proof}
 See Appendix~\ref{sec:proof}.
 \end{proof}
 
%The proof details can be found in Appendix~\ref{sec:proof}. 
Theorem~\ref{thm1} 
suggests that 
%the time complexity and memory complexity of GFPush are independent of the graph size. 
the approximation precision and running cost of GFPush are negatively correlated with $r_{max}$. In practice, we could use $r_{max}$ to control the trade-off between efficiency and approximation precision.
 
\hide{
Let $\mathcal{T}_t$ be the total number of push operations performed in step $t$. When the $i-$th push operation performed on $v_i$, the value of $||\mathbf{R}^{(t)}_s||_1$ will be decreased by at least $r_{max} \cdot d_{v_i}$. Since $||\mathbf{R}^{(t)}_s||_1 \leq 1$, we must have $\sum^{\mathcal{T}_t}_{i=1}  d_{v_i} \cdot r_{max} \leq 1$, thus
\begin{equation}
\label{equ:tbound}
    \sum_{i=1}^{\mathcal{T}_t} d_{v_i} \leq \frac{1}{r_{max}}.
\end{equation}
In each push operation, we need to perform $d_{v_i}$ updates to $v_i$'s neighborhoods. So the total time of push operations in step $t$ and the count of non-zero elements of $\mathbf{R}^{(t+1)}_s$ (or $\mathbf{Q}^{(t+1)}_s$) are both bounded by $\sum_{i=1}^{\mathcal{T}_t} d_{v_i}$. 
According to Equation~\ref{equ:tbound}, we can further conclude that $\widetilde{\Pi}_s^{(\text{rw})}$ has at most $\frac{T}{r_{max}}$ non-zero elements. In implementation, all the vectors are stored as sparse vectors. Thus GFPush has an identical time complexity and memory complexity $\mathcal{O}(\frac{T}{r_{max}})$.

\vpara{Error Bound.} According to Lemma~\ref{lemma1}, we can conclude the following equations:
\begin{equation}
\begin{aligned}
    \norm{ \mathbf{P}_s^{(t)} - \mathbf{Q}_s^{(t)} }_1 
    &= \norm{ \sum_{h=0}^{t-1}  (\mathbf{A}^\mathsf{T}\mathbf{D}^{-1})^h\mathbf{R}_s^{(t-1-h)} }_1  \\
    &=  \norm{\sum_{h=0}^{t-1}\mathbf{R}_s^{(t-1-h)} }_1 \\
    & \leq \sum_{h=0}^{t-1} \norm{\mathbf{R}_s^{(t-1-h)}}_1.
\end{aligned}
\end{equation}
}
%it only considers a subset of nodes $\{v|\mathbf{R}^{(t)}(s,v)>d_v \cdot r_{max}\}$.
%computes an estimation of transition vector $\mathbf{\Pi}^{\text{rw}}_s$ (i.e., the row vector corresponding to node $s$). 
%The proposed method utilizes deterministic graph traversal to generate a  of $\mathbf{\Pi}^{\text{rw}}_u$, then employs Monte-Carlo random walk to refine the resulting vector. 

\begin{algorithm}[h]
	\SetKwInOut{Input}{Input}
	\SetKwInOut{Output}{Output}
	\SetKwComment{Comment}{/* }{*/}
% 	\SetKwComment{Com}{//}{.}
\footnotesize
\caption{GFPush}
\label{alg:GFPush}
\Input{Self-loop augmented graph $\widetilde{G}$, propagation step $N$, node $s$, threshold $r_{max}$, weight coefficients $w_n$, $0 \leq n \leq N$.}
\Output{An approximation $\widetilde{\mathbf{\Pi}}_s$ of transition vector $\mathbf{\Pi}_s$ of node $s$.}
$\mathbf{r}^{(n)} \leftarrow \vec{0}$ for $n=1,...,N$; $\mathbf{r}^{(0)} \leftarrow \mathbf{e}^{(s)}$  ($\mathbf{e}^{(s)}_s = 1$, $\mathbf{e}^{(s)}_v = 0$ for $v\neq s$).\\
$\mathbf{q}^{(n)} \leftarrow \vec{0}$ for $n=1,...,N$; $\mathbf{q}^{(0)} \leftarrow \mathbf{e}^{(s)}$.\\
\For{$n=1:N$}{
%	$\mathbf{Q}^{(t)}_s \leftarrow \mathbf{R}^{(t)}_s$.\\
	\While{there exists node $v$ with $\mathbf{r}^{(n-1)}_v> \mathbf{d}_v \cdot r_{max}$}{
		\For{each $u \in \mathcal{N}_v$}{
		\Comment{$\mathcal{N}_v$ is the neighborhood set of $v$ in graph $\widetilde{G}$. }
 			$\mathbf{r}^{(n)}_u\leftarrow \mathbf{r}^{(n)}_u + \mathbf{r}^{(n-1)}_v/\mathbf{d}_v$. 
 			
 			$\mathbf{q}^{(n)}_u \leftarrow \mathbf{r}^{(n)}_u$.
		}
 		$\mathbf{r}^{(n-1)}_v \leftarrow 0$.\\
 		\Comment{Perform a push operation on $v$.}
	}
}
%$\mathbf{Q}^{(T)}_s \leftarrow \mathbf{R}^{(T)}_s$.\\
$\widetilde{\mathbf{\Pi}}_s \leftarrow \sum_{n=0}^N w_n \cdot \mathbf{q}^{(n)}$.\\
\Return{$\widetilde{\mathbf{\Pi}}_s$.}
\end{algorithm}
% \vspace{-0.1in}

% For each node $u$, a straightforward method to approximate 
\vpara{Top-$k$ Sparsification.} To further reduce training cost, we perform top-$k$ sparsification for $\widetilde{\mathbf{\Pi}}_{s}$. In this procedure, only the top-$k$ largest elements of $\widetilde{\mathbf{\Pi}}_{s}$ are preserved and other entries are set to $0$.
Hence the resultant sparsified transition vector $\widetilde{\mathbf{\Pi}}^{(k)}_s$ has at most $k$ non-zero elements. 
%Recall that $\mathbf{P}=\widetilde{\mathbf{D}}^{-1}\widetilde{\mathbf{A}}$ is defined on the self-loop augmented graph $\widetilde{G}$, and the random walk on $\widetilde{G}$ can be seen as a variation of lazy random walk on G, with $(\mathbf{I} +\mathbf{D}^{-1}\mathbf{A})/2$ as the reverse transition probability matrix. According to the theory of Escaping Mass of lazy random walk (Proposition 2.5  in~\cite{spielman2013local}), which says the probability that an $N$-hop lazy random walk starting from $s$ will concentrate around a local cluster of $s$, the sparsified transition vector $\widetilde{\mathbf{\Pi}}^{(k)}_s$ is still expected to be effective for (random) feature propagation by preserving most of ``local neighborhood nodes'' for node $s$. 
In this way, the model only considers the $k$ most important neighborhoods for each node in random propagation, which is still expected to be effective based on the clustering assumption~\cite{chapelle2009semi}.
Similar technique was also adopted by PPRGo~\cite{bojchevski2020scaling}.
We will empirically examine the effects of $k$ in Section~\ref{sec:param}.

\vpara{Parallelization.} In \model, we need to approximate row vectors for all nodes in $L\cup U'$. %, and different approximation vectors are are calculated independently with each other. 
It could be easily checked that different row approximations are calculated independently with each other in GFPush. Thus we can launch multiple workers to approximate multiple vectors simultaneously. This procedure is implemented with multi-thread programming in our implementation. %The obtained sparsified sub-matrix is used to serve random propagation.

\subsection{Mini-Batch Random Propagation}
\label{sec:rand_prop}
\model\ adopts the sparsified row approximations of $\mathbf{\Pi}$ to perform random propagation in a mini-batch manner. 
%Before that, we first perform top-k sparsification on each approximated transition vector to further reduce computational costs.
%\vpara{Top-$k$ Sparsification.} For vector $\widetilde{\mathbf{\Pi}}_s$,  its top-$k$ sparsification  $\widetilde{\mathbf{\Pi}}_s^{(k)}$ is conducted by only preserving the top-$k$ largest elements and truncating the other entries to zero. Hence  $\widetilde{\mathbf{\Pi}}^{(k)}_s$ has at most $k$ non-zeros. Recall that $\mathbf{P}=\widetilde{\mathbf{D}}^{-1}\widetilde{\mathbf{A}}$ is defined on the self-loop augmented graph $\widetilde{G}$, which can be seen as a variation of lazy random walk---with $(\mathbf{I} +\mathbf{D}^{-1}\mathbf{A})/2$ as reverse transition matrix---on G. According to the theory of Escaping Mass of lazy random walk (Proposition 2.5  in~\cite{spielman2013local}), which says that the probability that a $T-$hop lazy random walk starting from $s$ still concentrates around a local cluster of $s$. The top-$k$ sparsified transition vector $\widetilde{\mathbf{\Pi}}^{(k)}_s$ 
%is still effective for (random) feature propagation by preserving most of important neighborhood nodes for node $v$.
%\vpara{Mini-Batch Propagation.}
Specifically, at the $t$-th training step, we randomly sample a batch of labeled nodes $L_t$ from $L$, and a batch of unlabeled nodes $U_t$ from $U'$. Then we calculate augmented feature vector $\overline{\mathbf{X}}_s$ for node $s \in L_t \cup U_t$ by:
%then calculate the corresponding augmented feature vector $\overline{\mathbf{X}}_{u}$ of each node $u \in V_i$ is obtained via:
% Analogous to Equation~\ref{alg:2}, we obtain the augmented feature matrix via:
\begin{equation}
\small
\label{equ:rpbatch}
\overline{\mathbf{X}}_{s} = \sum_{v \in \mathcal{N}^{(k)}_s} \mathbf{z}_v \cdot \widetilde{\mathbf{\Pi}}^{(k)}{(s,v)} \cdot \mathbf{X}_v, \quad \mathbf{z}_v \sim \text{Bernoulli}(1-\delta),
\end{equation}
where $\mathcal{N}^{(k)}_s$ denotes the non-zero indices of $\widetilde{\mathbf{\Pi}}^{(k)}_s$, $\mathbf{X}_v\in \mathbb{R}^{d_f}$ is feature vector of node $v$. %, which can be seen as the top-$k$ neighborhoods of $v$ with $\widetilde{\mathbf{\Pi}}$ as adjacency matrix. 
At each training step, we generate $M$ augmented feature vectors $\{\overline{\mathbf{X}}^{(m)}_s|1 \leq m \leq M\}$ by repeating this procedure for $M$ times. Let $b = |L_t|+|U_t|$ denote the batch size. Then the time complexity of each batch is bounded by $\mathcal{O}(k \cdot b \cdot d_f)$.%, which is independent of the graph size.

\vpara{Random Propagation for Learnable Representations.} In Equation~\ref{equ:rpbatch}, the augmented feature vector $\overline{\mathbf{X}}_s$ is calculated with raw features $\mathbf{X}$. However, in some real applications (e.g., image or text classification), the dimension of $\mathbf{X}$ might be extremely large, which will incur a huge cost for calculation.
%$\overline{\mathbf{X}}_s$. 
To mitigate this issue, we can employ a linear layer to transform each $\mathbf{X}_v$ to a low-dimensional hidden representation $\mathbf{H}_v\in \mathbb{R}^{d_h}$ firstly, and then perform random propagation with $\mathbf{H}$:
%and obtain $\overline{\mathbf{X}}_s$ by performing random propagation with $\mathbf{H}_v$:
\begin{equation}
\small
\label{equ:high}
    \overline{\mathbf{X}}_{s} = \sum_{v \in \mathcal{N}^{(k)}_s} \mathbf{z}_v \cdot \widetilde{\mathbf{\Pi}}^{(k)}{(s,v)} \cdot   \mathbf{H}_v, \quad \mathbf{H}_v = \mathbf{X}_v\cdot \mathbf{W}^{(0)},
    %\sim \text{Bernoulli}(1-\delta)
\end{equation}
where $\mathbf{W}^{(0)}\in \mathbb{R}^{d_f \times d_h}$ denotes learnable transformation matrix. In this way, the computational complexity of this procedure is reduced to $\mathcal{O}(k \cdot b \cdot d_h)$, where $d_h \ll d_f$ denotes the dimension of $\mathbf{H}_v$. %Then the resulting augmented hidden representation $\overline{\mathbf{H}}_v$ is input into the MLP model for prediction.

%\wz{complexity}

%Note that the random propagation procedure in \model\ could also be performed on the hidden representation vector $\mathbf{H}^{(l)}_v$ as the substitution of raw feature $\mathbf{X}_v$, which ensures the computation complexity not dependent with the dimension of raw features.

\vpara{Prediction.} During training, the augmented feature vector $\overline{\mathbf{X}}^{(m)}_s$ is fed into an MLP model to get the corresponding outputs:
\begin{equation}
\small
\hat{\mathbf{Y}}^{(m)}_s = \text{MLP}(\overline{\mathbf{X}}^{(m)}_s, \mathbf{\Theta}),
\end{equation}
%In the $l^{th}$ layer, the corresponding representation vector $\mathbf{H}^{(l,s)}_v\in \mathbb{R}^{d_l}$ are obtained with
%\begin{equation}
%\mathbf{H}^{(l,s)}_v = \sigma(\mathbf{W}^{(l)}\mathbf{H}^{(l-1,s)}_v),
%\end{equation}
where $\hat{\mathbf{Y}}^{(m)}_s\in [0,1]^C$ denotes the prediction probabilities of $s$. $\mathbf{\Theta}$ represents MLP's parameters.

%\vpara{Addressing High-dimensional Features.}

%as the substitution of  $\overline{\mathbf{X}}_v$.

%As we stated in Section~\ref{sec:overview}, the approximation and we use power iteration to perform 
%To further improve efficiency, we only preserve top-$k$ neighborhoods with the largest proximity score for each node, i.e., $|\mathcal{N}^{\text{rw}}_u|\leq k$. 
%After that, the augmented features of the batch of nodes are fed into a neural network model for semi-supervised learning or self-supervised learning. In doing so, the time complexity and memory complexity of the model is bounded by $\mathcal{O}(bk)$, where $b\ll n$ and $k\ll n$. In practice, we could control the trade-off of performance and efficiency by changing the values of $b$ and $k$.

\subsection{Confidence-Aware Consistency Training}
%Similar with \grand, 
\model\ adopts both supervised classification loss and consistency regularization loss to optimize model parameters during training. The supervised loss is defined as the average cross-entropy over multiple augmentations of labeled nodes:
\begin{equation}
\small
\label{equ:suploss}
\mathcal{L}_{sup} = -\frac{1}{|L_t|\cdot M}\sum_{s \in L_t}\sum_{m=1}^{M}\mathbf{Y}_s \cdot \log(\hat{\mathbf{Y}}^{(m)}_s).
\end{equation}

\vpara{Confidence-Aware Consistency Loss.}
%In the semi-supervised setting, supervised information is limited. 
Inspired by recent advances in semi-supervised learning~\cite{berthelot2019mixmatch}, GRAND adopts an additional consistency loss to optimize the prediction consistency of multiple augmentations of unlabeled data, which is shown to be effective in improving generalization capability. \model also follows this idea, while adopts a new confidence-aware consistency loss to further improve effectiveness. 

%Recent advances~\cite{sohn2020fixmatch} on semi-supervised image classification propose to improve model's generalization capability by optimizing the consistency of multiple augmentations of unlabeled data. Borrowing this idea, we design a confidence-aware consistency loss calculated by the unlabeled nodes in $U_n$.
Specifically, for node $s \in U_t$, we first calculate the distribution center by taking the average of its $M$ prediction probabilities, i.e., $\overline{\mathbf{Y}}_s=\sum_{m=1}^M \hat{\mathbf{Y}}^{(m)}_s/M$. Then we apply \textit{sharpening}~\cite{sohn2020fixmatch} trick over $\overline{\mathbf{Y}}_s$ to ``guess'' a pseudo label $\widetilde{\mathbf{Y}}_s$ for node $s$. Formally, the guessed probability on the $j$-th class of node $s$ is obtained via:%\htl{:}
\begin{equation}
\small
\label{equ:sharp}
\widetilde{\mathbf{Y}}(s,j) = \overline{\mathbf{Y}}(s,j)^{\frac{1}{\tau}}/\sum_{c=0}^{C-1}\overline{\mathbf{Y}}(s,c)^{\frac{1}{\tau}}, 
\end{equation}
where $0< \tau \leq 1$ is a hyperparameter to control the sharpness of the guessed pseudo label.  As $\tau$ decreases, $\widetilde{\mathbf{Y}}_s$ is enforced to become sharper and converges to a one-hot distribution eventually. 
%In experiments, we set $\tau=0.1$ and find it works well across most of settings. % \footnote{In our experiments, $\tau$ is simply set to $0.1$ and we find it works well across all datasets}.
Then the confidence-aware consistency loss on unlabeled node batch $U_t$ is defined as: 
\begin{equation}
\small
\label{equ:consis}
\mathcal{L}_{con} = \frac{1}{|U_t|\cdot M}\sum_{s\in U_t} \mathbb{I}(\max(\overline{\mathbf{Y}}_s)\geq \gamma) \sum_{m=1}^M \mathcal{D}(\widetilde{\mathbf{Y}}_s, \hat{\mathbf{Y}}^{(m)}_s),
\end{equation} 
where $\mathbb{I}(\max(\overline{\mathbf{Y}}_s)\geq \gamma)$ is an indicator function which outputs 1 if $\max(\overline{\mathbf{Y}}_s)\geq \gamma$ holds, and outputs 0 otherwise. $0\leq \gamma < 1$ is a predefined threshold. $\mathcal{D}(p, q)$ is a distance function which measures the distribution discrepancy between $p$ and $q$. Here we mainly consider two options for $\mathcal{D}$: \textit{$L_2$ distance} and \textit{KL divergence}.

Compared with the consistency loss used in \grand\ (Cf. Equation~\ref{equ:grand_consis}), the biggest advantage of $\mathcal{L}_{con}$ is that it only considers ``highly confident'' unlabeled nodes determined by threshold $\tau$ in optimization. This mechanism could reduce the potential training noise by filtering out uncertain pseudo-labels, further improving model's performance in practice. Combining $\mathcal{L}_{con}$ and $\mathcal{L}_{sup}$, the final loss for model optimization is defined as:
%$\mathbbm{1}(\max(\overline{\mathbf{Y}}_s)\geq \gamma)$ is an indicator function that outputs $1$ if the maximum probability of $\overline{\mathbf{Y}}_s$ falls above the threshold $\gamma$,  and outputs 0 otherwise. 
%Compared with the consistency regularization loss used in \grand, the biggest advantage of

%In each training step, we update model's parameters by optimizing the combination of $\mathcal{L}_{sup}$ and $\mathcal{L}_{con}$:
\begin{equation}
\label{equ:total_loss}
\small
\mathcal{L} = \mathcal{L}_{sup} + \lambda(t) \mathcal{L}_{con},
\end{equation}
where $\lambda(t)$ is a linear warmup function~\cite{goyal2017accurate} which increases linearly from 0 to the maximum value $\lambda_{max}$ as training step $t$ increases. 

\hide{
\vpara{Dynamic Loss Weight Scheduling.} We adopt a dynamic scheduling strategy to let $\lambda$ linearly increase with the number of training steps. Formally, at the $n$-th training step, $\lambda$ is obtained via 
\begin{equation}
\small
\label{equ:sch}
\lambda = \min(\lambda_{max},  \lambda_{max}\cdot n / n_{max}).
\end{equation}
In the first $n_{max}$ training steps, $\lambda$ increases from $\lambda_{min}$ to $\lambda_{max}$, and remains constant in the following training steps. This strategy limits $\lambda$ to a small value in the early stage of training when the generated pseudo-labels $\widetilde{\mathbf{Y}}$ are not much reliable, which will help model converge.
}

\vpara{Model Inference.}
%Under the semi-supervised setting, labeled nodes are usually scarce, most of nodes are unlabeled and need to be predicted during inference. 
After training, we need to infer the predictions for  unlabeled nodes. %, which are usually massive under semi-supervised setting.
\model adopts power iteration to calculate the exact prediction results for unlabeled nodes during inference:
\begin{equation}
\small
\hat{\mathbf{Y}}^{(inf)} = \text{MLP}( \sum_{n=0}^Nw_n(\widetilde{\mathbf{D}}^{-1}\widetilde{\mathbf{A}})^n \cdot (1-\delta) \cdot \mathbf{X}, \mathbf{\Theta}),
\end{equation}
where we rescale $\mathbf{X}$ with $(1 - \delta)$  to make it identical with the expectation of the DropNode perturbed features used in training. 
Note that unlike \grand, the above power iteration process only needs to be performed once in \model, and the computational cost is acceptable in practice. 
Compared with obtaining predictions with GFPush as done in training, this inference strategy could provide more accurate predictions in theory.
Algorithm~\ref{alg:grand+} shows the entire training and inference procedure of \model.

%adding a confidence indicator` $\mathbbm{1}(\max(\overline{\mathbf{Y}}_u \geq \gamma))$. $I$

%To generate effective graph data augmentations, \grand\ employs multi-order propagation in 

%we still use power iteration to perform feature propagation 
%This inspires us to adopt different strategies for training and inference. In model training,  we only conduct approximation for a small proportion of rows of $\mathbf{\Pi}$, corresponding to the limited nodes used in training. 
%As for inference, approximating rows for the large amount of unlabeled nodes will cause tremendous cost when processing large graphs. Thus we still adopt power iteration to perform feature propagation for inference. Different from training, the augmented features only needs to be calculated once with deterministic masks in that case, and the computation cost is acceptable in practice. Similar idea has been adopted by PPRGo~\cite{bojchevski2020scaling}.

%As expressed in Section~\ref{sec:overview}, the approximate random propagation strategy is only conducted in the period of training. 
%As for inference, we utilize power iteration to perform deterministic mixed-order feature propagation with $\mathbf{X}$. Formally, the final predictions for all nodes in $V$ are inferred by
\hide{
\begin{equation}
\hat{\mathbf{Y}}^{(inf)} = \text{MLP}(\mathbf{\Pi}\cdot (1-\delta) \cdot \mathbf{X}, \mathbf{\Theta}),
\end{equation}
}

\begin{algorithm}[t!]

	\SetKwInOut{Input}{Input}
	\SetKwInOut{Output}{Output}
	\SetKwComment{Comment}{/* }{*/}
	\caption{\model}
	\footnotesize
	\label{alg:grand+}
	\Input{Graph $G$, feature matrix $\mathbf{X} \in \mathbb{R}^{|V| \times d_f}$, labeled node set $L$, unlabeled node set $U$ and observed labels $\mathbf{Y}_L \in \mathbb{R}^{|L|\times C}$.}
	\Output{Classification probabilities $\hat{\mathbf{Y}}^{(inf)}$.} %\jie{what is this?}
		%\STATE $\pi^{(0)} \leftarrow \emptyset$, $\pi^{(1)} \leftarrow \emptyset$
		%\STATE $\Omega\leftarrow U$
    Sample a subset of unlabeled nodes $U'$ from $U$.\\

		\For{$s \in L \cup U'$}{     
		 $\widetilde{\mathbf{\Pi}}_s \leftarrow \text{GFPush}(G, s)$.\\
		 Obtain $\widetilde{\mathbf{\Pi}}^{(k)}_s$ by applying top-$k$ sparsification on $\widetilde{\mathbf{\Pi}}_s.$ \\
		 \Comment{Approximating row vectors with pallalization.}
		}
		\For{$t=0:T$}{
			Sample a batch of labeled nodes $L_t \subseteq L$ and a batch of unlabeled nodes $U_t \subseteq U'$.\\
			\For{$s \in L_t \cup U_t$}{
				\For{$m=1:M$}{
					Generate augmented feature vector $\overline{\mathbf{X}}^{(m)}_{s}$ with Equation~\ref{equ:rpbatch}.\\
					%$\overline{\mathbf{X}}^{(m)}_{v} = \sum_{j \in \mathcal{N}^{\pi}_v}  \widetilde{\mathbf{\Pi}}^{(k)}{(v,j)} \cdot  \mathbf{z}_j \cdot \mathbf{X}_j, \quad \mathbf{z}_j \sim \text{Bernoulli}(1-\delta).$\\
					Predict class distribution with %$\widetilde{Z}^s = \text{GCN}(\widetilde{X}_k, A)$
					$\hat{\mathbf{Y}}^{(m)}_s = \text{MLP}(\mathbf{\overline{X}}^{(m)}_s, \mathbf{\Theta}).$
				}
			}%p(\mathbf{Y}|\overline{\mathbf{X}}^{(s)};\Theta)$.
			Compute $\mathcal{L}_{sup}$ via Equation~\ref{equ:suploss} and $\mathcal{L}_{con}$ via Equation~\ref{equ:consis}.\\
			Update the parameters $\Theta$ by mini-batch gradients descending: $\mathbf{\Theta} = \mathbf{\Theta} - \eta \nabla_\mathbf{\Theta} (\mathcal{L}_{sup} + \lambda \mathcal{L}_{con}).$\\
			\Comment{Stop training with early-stopping.}
		}
		% \STATE $Z^j_{(t)} =\text{GCN}^j(\widetilde{X}^j_{(t)})$
		%\STATE $Z^{(j)}_{(t)} =\text{GCN}^{(j)}(S^{(j)}[t])$
		%\STATE Perform SGD on $Loss^{(j)}(Y^{(j)},Z^{(j)}_{(t)},L^{(j)})$ %to update GCN$^{j}$
		%\STATE Feed $\widetilde{X}^{(j)}_{(t)}$ into $\text{GCN}^{(j)}$

		%\STATE 
	
		Infer classification probabilities $\hat{\mathbf{Y}}^{(inf)} =  \text{MLP}( \mathbf{\Pi}\cdot (1-\delta) \cdot \mathbf{X}, \mathbf{\Theta})$.
		
		%P\left(\mathbf{Y}~\bigg|~\frac{1}{K+1}\sum_{k=0}^K\hat{\mathbf{A}}^k \mathbf{X};\hat{\Theta}\right)$.
		\Return{$\hat{\mathbf{Y}}^{(inf)}.$}
\end{algorithm}
\subsection{Model Analysis} 
\label{sec:model_analysis}

\vpara{Complexity Analysis.}  We provide detailed analyses for the time complexities of \model's different learning stages.
 According to Theorem~\ref{thm1}, the complexity of approximation stage (Line 2--5 of Algorithm~\ref{alg:grand+}) is $\mathcal{O}((|U'|+|L|)\cdot N/r_{max})$. As for the training stage (Line 6--15 of Algorithm~\ref{alg:grand+}), the total complexity of $T$ training steps is $\mathcal{O}(k\cdot b \cdot M \cdot T)$, which is  practically efficient for large graphs since $b$ and $k$ are usually much smaller than the graph size.
  The complexity of inference stage (Line 17 of Algorithm~\ref{alg:grand+}) is $\mathcal{O}((|V|+|E|) \cdot N)$, which is linear with the sum of node and edge counts.
 
\vpara{\model\ vs. PPRGo and GBP.}
Similar with \model,  PPRGo~\cite{bojchevski2020scaling} and GBP~\cite{chen2020scalable} also adopt matrix approximation methods to scale GNNs. However, \model\ differs from the two methods in several key aspects. PPRGo scales up APPNP by using Forward Push~\cite{andersen2006local} to approximate the ppr matrix. Compared with PPRGo, \model\ is more flexible in real applications thanks to the adopted generalized propagation matrix $\mathbf{\Pi}$ and GFPush algorithm. GBP also owns this merit by using the generalized PageRank matrix~\cite{li2019optimizing} for feature propagation. However, it directly approximates the propagation results of raw features through bidirectional propagation~\cite{banerjee2015fast}, whose computational complexity is linear with the raw feature dimension, rendering it difficult to handle datasets with high-dimensional features.
%which makes the running cost highly dependent on the raw feature dimension, thereby causes much difficulties in processing datasets with high-dimensional features.
%and is difficult to handle high-dimensional features. 
Moreover, different from PPRGo and GBP designed for the general supervised classification problem,  \model\ makes significant improvements for semi-supervised setting by adopting random propagation and consistency regularization to enhance generalization capability.
%during training, which brings substantial advantage in improving model's generalization performance in semi-supervised setting.

\hide{
In this section, we present the \full\ (\model) for semi-supervised learning on graphs. 
Its idea is to enable each node to randomly propagate with different subsets of neighbors in different training epochs. 
This random propagation strategy is demonstrated as an economic way for stochastic graph data augmentation, based on which we  design a consistency regularized training for improving \model's generalization capacity. 

Figure \ref{fig:arch2} illustrates the full architecture of \model.
%with the consistency regularized training. 
Given an input graph, \model\ generates multiple data augmentations by performing random propagation (DropNode + propagation) multiple times at each epoch. 
In addition to the classification loss, \model\ also leverages a consistency regularization loss to enforce the models to give similar predictions across different augmentations. 
}
%by encouraging predictions invariant to different augmentations of the same node. 

\hide{
\subsection{Random Propagation}
%\jt{looks not like ``operations''. seems to me ``dropnode'' is the only operation, followed by a discussion and a prediction.}\\
%\jt{the current structure of section 3 and 4 is a bit messy}\\
%\jt{first here the goal is to design grand to address the three challenges, so the most important thing is to explain how we address the three challenges and the foundamental ideas behind the design. the current explanation ``This enables \model\ to reduce the risks of the overfitting and over-smoothing issues. '' is too weak.}\\
%\jt{the analysis of dropnode and dropout can be moved to section 4.}\\

%The basic \model\ operations are illustrated in Figure~\ref{fig:arch1}. 
Given an input graph with its associated feature matrix, 
\model\ first conducts the random propagation process and then makes the prediction by using the simple multilayer perceptron (MLP) model. 

The motivation for random propagation is to address the non-robustness issue faced by existing GNNs~\cite{dai2018adversarial,zugner2019adversarial,zugner2018adversarial}. 
This process is coupled with the DropNode and propagation steps. 
In doing so, \model\  naturally separates the feature propagation and non-linear transformation operations in standard GNNs, enabling \model\ to reduce the risk of the overfitting and over-smoothing issues.

\vpara{DropNode.}
\label{sec:randpro}
In random propagation, we aim to perform message passing in a random way during model training such that each node is not sensitive to specific neighborhoods. 
To achieve this, we design a simple yet effective node sampling operation---DropNode---before the propagation layer.

%we randomly ``drop'' some nodes before propagation. This strategy is called dropnode.

DropNode is designed to randomly remove some nodes' all features. 
In specific,  at each training epoch, the entire feature vector of each node is randomly discarded with a pre-defined probability, i.e., some rows of $\mathbf{X}$ are set to $\vec{0}$.  
The resultant perturbed feature matrix $\widetilde{\mathbf{X}}$ is then fed into the propagation layer. 

The formal DropNode operation is shown in Algorithm \ref{alg:dropnode}. 
First, we randomly sample a binary mask $\epsilon_i \sim Bernoulli(1-\delta)$ for each node $v_i$. 
Second, we obtain the perturbed feature matrix $\widetilde{\mathbf{X}}$ by multiplying each node's feature vector with its corresponding mask, i.e., $\widetilde{\mathbf{X}}_i=\epsilon_i \cdot \mathbf{X}_i$. 
Finally,  we scale $\widetilde{\mathbf{X}}$ with the factor of $\frac{1}{1-\delta}$ to guarantee the perturbed feature matrix is in expectation equal to $\mathbf{X}$. 
Note that the sampling procedure is only performed during training. 
During inference, we directly set $\widetilde{\mathbf{X}}$ with the original feature matrix $\mathbf{X}$.

After DropNode, the perturbed feature matrix $\widetilde{\mathbf{X}}$ is fed into the propagation layer to perform message passing. 
Here we adopt mixed-order propagation, i.e., $\overline{\mathbf{X}} = \overline{\mathbf{A}} \widetilde{\mathbf{X}}$, 
%\begin{equation}
%\label{equ:kAX}
%	\overline{X} = \overline{A} \widetilde{X}.
%\end{equation}
where  $\overline{\mathbf{A}} =  \sum_{k=0}^K\frac{1}{K+1}\hat{\mathbf{A}}^k$---the average of the power series of $\hat{\mathbf{A}}$ from order 0 to order $K$. 
%\yd{define $\hat{A}$}
This kind of propagation rule enables the model to incorporate the multi-order neighborhood information, reducing the risk of over-smoothing when compared with using $\hat{\mathbf{A}}^K$ only. % when $K$ is large. 
Similar ideas have been adopted in recent GNN studies~\cite{abu2019mixhop,abu2018n}.
  
\vpara{Prediction Module.}
After the random propagation module, the augmented feature matrix $\overline{\mathbf{X}}$ can be then fed into any neural networks for predicting nodes labels. 
In \model, we employ a two-layer MLP as the classifier, that is:
\begin{equation}
\small
\label{equ:mlp}
    P(\mathbf{Y}|\overline{\mathbf{X}};\Theta) = \sigma_2(\sigma_1(\overline{\mathbf{X}}\mathbf{W}^{(1)})\mathbf{W}^{(2)})
\end{equation}
where $\sigma_1(.)$ is the ReLU function, $\sigma_2(.)$ is the softmax function, and $\Theta=\{\mathbf{W}^{(1)} \in \mathbb{R}^{d \times d_h}, \mathbf{W}^{(2)} \in \mathbb{R}^{d_h \times C}\}$ is the model parameters.

The MLP classification model can be also replaced with more complex and advanced GNN models, including GCN and GAT. 
The experimental results show that the replacements result in consistent performance drop across different datasets due to GNNs' over-smoothing problem (Cf. Appendix~~\ref{sec:oversmoothing_grand} for details).

With this data flow, it can be realized that \model\ actually separates the feature propagation (i.e., $\overline{\mathbf{X}} = \overline{\mathbf{A}} \widetilde{\mathbf{X}}$ in random propagation) and transformation (i.e., $\sigma(\overline{\mathbf{X}} \mathbf{W})$ in prediction) steps, which are coupled with each other in standard GNNs (i.e., $\sigma(\mathbf{AX} \mathbf{W})$). 
This allows us to perform the high-order feature propagation $\overline{\mathbf{A}} =  \frac{1}{K+1}\sum_{k=0}^K\hat{\mathbf{A}}^k$ without increasing the complexity of neural networks, reducing the risk of overfitting and over-smoothing.

\subsection{Consistency Regularized Training}
We show that random propagation can be seen as an efficient method for stochastic data augmentation. 
As such, it is natural to design a consistency regularized training algorithm for \model.

\vpara{Random Propagation as Stochastic Data Augmentation.}
%Feature propagation in existing GNNs and Weisfeiler-Lehman Isomorphism test~\cite{shervashidze2011weisfeiler} has been proven to be an effective method for enhancing node representation by aggregating information from neighborhoods. 
%We discuss the additional implications that \model's random propagation brings into feature propagation. 
Random propagation randomly drops some nodes' entire features before propagation. 
As a result, each node only aggregates information from a random subset of its (multi-hop) neighborhood. 
In doing so, we are able to stochastically generate different representations for each node, which can be considered as a stochastic graph augmentation method. 
In addition, random propagation can be seen as injecting random noise %---dropping a portion of nodes---
into the propagation procedure.

To empirically examine this data augmentation idea, we generate a set of augmented node representations $\overline{\mathbf{X}}$ with different drop rates in random propagation and use each $\overline{\mathbf{X}}$ to train a GCN for node classification on commonly used datasets---Cora, Citeseer, and Pubmed.  %\yd{ to wz: why use GCN, not MLP (Grand)}
The results show that the decrease in GCN's classification accuracy is less than $3\%$ even when the drop rate is set to $0.5$. 
In other words, with half of rows in the input $\mathbf{X}$ removed (set to $\vec{0}$), random propagation is capable of generating augmented node representations that are sufficient for prediction.

Though one single $\overline{\mathbf{X}}$ is relatively inferior to the original $\mathbf{X}$ in performance, in practice, multiple augmentations---each per epoch---are utilized for training the \model\ model. 
Similar to bagging~\cite{breiman1996bagging}, \model's random data augmentation scheme makes the final prediction model implicitly assemble models on exponentially many augmentations, yielding much better performance than the deterministic propagation used in GCN and GAT.

 \begin{algorithm}[tb]
\caption{Consistency Regularized Training for \model}
\small
\label{alg:2}
\begin{algorithmic}[1] %[1] enables line numbers
\REQUIRE ~~\\
 Adjacency matrix $\hat{\mathbf{A}}$,
%labeled node set $V^L$,
% unlabeled node set $V^U$,
feature matrix $\mathbf{X} \in \mathbb{R}^{n \times d}$, 
%$T$: maximum number of learning epochs,\\
% $K$: times of random propagation each epoch, \\
times of augmentations in each epoch $S$, DropNode probability $\delta$.\\
\ENSURE ~~\\
Prediction $\mathbf{Z}$. %\jie{what is this?}
%\STATE $\pi^{(0)} \leftarrow \emptyset$, $\pi^{(1)} \leftarrow \emptyset$
%\STATE $\Omega\leftarrow U$
\WHILE{not convergence}
\FOR{$s=1:S$} 
%\STATE Obtaining the deformity feature matrix $X^{'}$ by applying graph dropout on $X$ via Eq.\ref{equ:nodedropout} or Eq.\ref{equ:featuredropout}.
\STATE Apply DropNode via Algorithm \ref{alg:dropnode}: $
\widetilde{\mathbf{X}}^{(s)} \sim \text{DropNode}(\mathbf{X},\delta)$. 
\STATE Perform propagation: $\overline{\mathbf{X}}^{(s)} = \frac{1}{K+1}\sum_{k=0}^K\hat{\mathbf{A}}^k \widetilde{\mathbf{X}}^{(s)}$.
\STATE Predict class distribution using MLP: %$\widetilde{Z}^s = \text{GCN}(\widetilde{X}_k, A)$
$\widetilde{\mathbf{Z}}^{(s)} = P(\mathbf{Y}|\overline{\mathbf{X}}^{(s)};\Theta)$.
\ENDFOR
% \STATE $Z^j_{(t)} =\text{GCN}^j(\widetilde{X}^j_{(t)})$
%\STATE $Z^{(j)}_{(t)} =\text{GCN}^{(j)}(S^{(j)}[t])$
%\STATE Perform SGD on $Loss^{(j)}(Y^{(j)},Z^{(j)}_{(t)},L^{(j)})$ %to update GCN$^{j}$
%\STATE Feed $\widetilde{X}^{(j)}_{(t)}$ into $\text{GCN}^{(j)}$
\STATE Compute supervised classification loss $\mathcal{L}_{sup}$ via Eq. \ref{equ:loss} and consistency regularization loss via Eq. \ref{equ:consistency}.
\STATE Update the parameters $\Theta$ by gradients descending:
$$\nabla_\Theta \mathcal{L}_{sup} + \lambda \mathcal{L}_{con}$$
%\STATE 
\ENDWHILE
\STATE Output prediction $\mathbf{Z}$ via Eq. \ref{equ:inference}.
\end{algorithmic}
\end{algorithm}

%\vpara{$S-$augmentation Prediction.}
\vpara{$S-$augmentation.} Inspired by the above observation, we propose to generate $S$ different data augmentations for the input graph data $\mathbf{X}$. 
%To achieve that, we adopt $S-$sampling dropnode strategy in random propagation module. 
In specific, we perform the random propagation operation for $S$ times to generate $S$ augmented feature matrices $\{\overline{\mathbf{X}}^{(s)}|1\leq s \leq S\}$. 
%Then we propagate these features according to Equation \ref{equ:kAX} respectively, and hence obtain $S$ data augmentations $\{\overline{X}^{(s)}|1 \leq s \leq S\}$. 
Each of these augmented feature matrices is fed into the MLP prediction module to get the corresponding output:

%The prediction probability on the $s^{th}$ augmented data is denoted as:
 \begin{equation}
 \small
     \widetilde{\mathbf{Z}}^{(s)} = P(\mathbf{Y}|\overline{\mathbf{X}}^{(s)}; \Theta), 
 \end{equation}
where $\widetilde{\mathbf{Z}}^{(s)} \in [0,1]^{n\times C}$ denotes the classification probabilities on the $s$-th augmented data $\overline{\mathbf{X}}^{(s)}$. 
%Then we feed each $\widetilde{X}^{(s)}$ into a prediction model to get the corresponding prediction probabilities:

\vpara{Classification Loss.}
With $m$ labeled nodes among $n$ nodes, the supervised objective of the graph node classification task in each epoch is the average cross-entropy loss over $S$ augmentations:
\begin{equation}
\small
\label{equ:loss}
	\mathcal{L}_{sup} = -\frac{1}{S}\sum_{s=1}^{S}\sum_{i=0}^{m-1}\mathbf{Y}_{i} \cdot \log \widetilde{\mathbf{Z}}_{i}^{(s)} ,
\end{equation}

\noindent where $\widetilde{\mathbf{Z}}_i^{(s)}$ is the $i$-th row vector of  $\widetilde{\mathbf{Z}}^{(s)}$. Optimizing this loss enforces the model to output the same predictions for (only) labeled nodes on different augmentations.
However, labeled data is often very rare in the semi-supervised setting, in which we would like to also make full use of unlabeled data. 
%, we also employ consistency regularization loss in \model. 
 
 %Let $\widetilde{Z}^s = p_{model}(\mathcal{Y}|\widetilde{X}^s, \hat{A})$ denote model's prediction probabilities for the $s$-th augmented data $\widetilde{X}^s$. 

\vpara{Consistency Regularization Loss.} 
In the semi-supervised setting, we propose to optimize the consistency among $S$ augmentations for unlabeled data. Considering a simple case of $S=2$, we can
%Let us first consider a simple case, where the random propagation procedure is performed twice in each epoch, i.e., $S=2$.
%A straightforward method is to 
minimize the distributional distance between the two outputs, i.e.,
 %\begin{equation}
 %\small
 %\label{equ:2d}
 $
\min \sum_{i=0}^{n-1} \mathcal{D}(\widetilde{\mathbf{Z}}^{(1)}_i, \widetilde{\mathbf{Z}}^{(2)}_i)$,
%\end{equation}
where $ \mathcal{D}(\cdot,\cdot)$ is the distance function. 
%Next we show how to extend it into multiple-augmentation situation.
To extend this idea into multiple-augmentation situation, we first 
%First, we 
calculate the label distribution center by taking the average of all distributions, i.e., 
 $ \overline{\mathbf{Z}}_i = \frac{1}{S}\sum_{s=1}^{S} \widetilde{\mathbf{Z}}_i^{(s)}$.
%which could be followed by the minimization of 
Then we minimize the distributional distance between $\widetilde{\mathbf{Z}}_i^{(s)}$ and $\overline{\mathbf{Z}}_i$, i.e., $\min \sum_{s=1}^{S}\sum_{i=0}^{n-1} \mathcal{D}(\overline{\mathbf{Z}}_i, \widetilde{\mathbf{Z}}^{(s)}_i)$.
%\begin{equation}
%    \min \sum_{i=1}^{K}\sum_{j=1}^n \mathcal{D}(\bar{Z}_i, \widetilde{Z}^i_j)
%\end{equation}
\hide{
\begin{equation}
    \min \sum_{s=1}^{S}\sum_{i=1}^n \mathcal{D}(\overline{\mathbf{Z}}_i, \widetilde{\mathbf{Z}}^{(s)}_i).
\end{equation}
}
%\reminder{modified, please check it}
However, the distribution center calculated in this way is always inclined to have higher entropy values, indicating greater ``uncertainty''. 
%Thus this method will enforce model to give uncertain predictions to unlabeled data. 
Consequently, it will bring extra uncertainty into the model's predictions. 
To avoid this problem, we utilize the \textit{label sharpening} trick here.
%for \model. 
Specifically, we apply a sharpening function onto the averaged label distribution to reduce its entropy~\cite{berthelot2019mixmatch}, i.e.,
\begin{equation}
\small
\label{equ:sharpen}
% \overline{\mathbf{Z}}^{'}_{ik} = \frac{\overline{\mathbf{Z}}_{ik}^{\frac{1}{T}}}{\sum_{j=0}^{C-1}\overline{\mathbf{Z}}_{ij}^{\frac{1}{T}}},
\overline{\mathbf{Z}}^{'}_{ik} = \overline{\mathbf{Z}}_{ik}^{\frac{1}{T}} ~\bigg/~\sum_{j=0}^{C-1}\overline{\mathbf{Z}}_{ij}^{\frac{1}{T}},
\end{equation}
where $0< T\leq 1$ acts as the ``temperature'' that controls the sharpness of the categorical distribution. 
As $T \to 0$, the sharpened label distribution will approach a one-hot distribution. 
To substitute $\overline{\mathbf{Z}}_i$ with $\overline{\mathbf{Z}}^{'}_i$, we minimize the distance between  $\widetilde{\mathbf{Z}}_i$ and $\overline{\mathbf{Z}}^{'}_i$ in \model:
\begin{equation}
\small
\label{equ:consistency}
    \mathcal{L}_{con} =   \frac{1}{S}\sum_{s=1}^{S}\sum_{i=0}^{n-1} \mathcal{D}(\overline{\mathbf{Z}}^{'}_i, \widetilde{\mathbf{Z}}^{(s)}_i).
\end{equation}

Therefore, by setting $T$ as a small value, we can enforce the model to output low-entropy predictions. 
This can be viewed as adding an extra entropy minimization regularization into the model, which assumes that the classifier's decision boundary should not pass through high-density regions of the marginal data distribution~\cite{grandvalet2005semi}. 

As for the distance function $\mathcal{D}(\cdot, \cdot)$, we adopt the squared $L_2$ loss $\mathcal{D}(\mathbf{a}, \mathbf{b})=\|\mathbf{a}-\mathbf{b}\|^2$ in our model. 
Recent studies have demonstrated that it is less sensitive to incorrect predictions~\cite{berthelot2019mixmatch} and thus is more suitable for the semi-supervised setting than cross-entropy.

\vpara{Semi-supervised Training and Inference.}
In each epoch, we employ both the supervised classification loss in Eq. \ref{equ:loss} and the consistency regularization loss in Eq. \ref{equ:consistency} on $S$ augmentations. 
Hence, the final loss of \model\ is:
\begin{equation}
\small
\label{equ:inf}
	\mathcal{L} = \mathcal{L}_{sup} + \lambda \mathcal{L}_{con},
\end{equation}
where $\lambda$ is a hyper-parameter that controls the balance between the supervised classification and consistency regularization losses.
%The model parameters are updated by gradient descent with Adam updating rule\cite{kingma2014adam}. 

During inference, as mentioned in Section \ref{sec:randpro}, we directly use the original feature $\mathbf{X}$ without DropNode for propagation. 
This is justified because we scaled  the perturbed feature matrix $\widetilde{\mathbf{X}}$ during training to guarantee its expectation to match $\mathbf{X}$. 
Hence the inference formula is:
\begin{equation}
\small
\label{equ:inference}
% 	\widetilde{Z} = \text{submodel}\left(\frac{1}{k}\sum_i^k\hat{A}^i X\right)
% \widetilde{Z}
\mathbf{Z}= P\left(\mathbf{Y}~\bigg|~\frac{1}{K+1}\sum_{k=0}^K\hat{\mathbf{A}}^k \mathbf{X};\hat{\Theta}\right),
\end{equation}
where $\hat{\Theta}$ denotes the optimized parameters after training. 
Algorithm \ref{alg:2} outlines \model's training process.

\subsection{Complexity Analysis} 
\model\ comprises of random propagation and consistency regularized training. 
For random propagation, we compute $\overline{\mathbf{X}}$ by iteratively calculating the product of $\hat{\mathbf{A}}^k$ and $\widetilde{\mathbf{X}}$, and its time complexity is $\mathcal{O}(Kd(n+|E|))$. 
The complexity of its prediction module (2-layer MLP) is $\mathcal{O}(nd_h(d+ C))$, where $d_h$ denotes its hidden size. 
By applying consistency regularized training, the total computational complexity of \model\ is $\mathcal{O}(S(Kd(n + |E|)+ nd_h(d + C))$, \textbf{which is linear with the sum of node and edge counts.
}
%\yd{to change time complexity structure}

}

\hide{

\section{Graph Random Networks}
\hide{
\begin{figure*}
  		\centering
  		\includegraphics[width=0.8 \linewidth,height =  0.25\linewidth]{grand2.pdf}
 	\caption{Diagram of the training process of \model.  
 	As a general semi-supervised learning framework on graphs, \model\ provides two mechanisms---random propagation and consistency regularization----to enhance the prediction model's robustness and generalization. In each epoch, given the input graph, \model\ first generates $S$ graph data augmentations $\{\widetilde{X}^{(s)}| 1\leq s \leq S\}$ via random propagation layer. Then, each $\widetilde{X}^{(s)}$ is fed into a prediction model, leading to the corresponding prediction distribution $\widetilde{Z}^{(s)}$. In optimization, except for optimizing the supervised classification loss $\mathcal{L}^l$ with the given labels $Y^L$, we also minimize the prediction distance among different augmentations of unlabeled nodes via unsupervised consistency loss $\mathcal{L}^u$. }
%  	The Architecture of \model. In \model, we will first introduce an efficient graph disentanglement method by simple random graph sampling, coupled with graph propagation to generate disentangled data, which have little loss of performance compared with the original feature data in GCNs. Hence random propagation layer also offers a general way for graph data augmentation. To further exploit unlabeled data, we introduce random consistency loss to consider the prediction consistency of unlabeled data between different random augmentation rounds.\reminder{*}} 
  	\label{fig:model}
\end{figure*}
}
To achieve a better model for semi-supervised learning on graphs, we propose Graph Random Networks (\model). In \model , different with other GNNs, each node is allowed to randomly interact with different subsets of neighborhoods in different training epochs. This random propagation mechanism is shown to be an economic way for stochastic graph data augmentation. Based on that, we also design a consistency regularized training method to improve model's generalization capacity by encouraging predictions invariant to different augmentations of the same node. %That makes  improve model's generalization capacity under semi-supervised setting.
%This will further improve 
%data into consideration and dramatically improve model's generalization capacity. 
%these aims 
%at once
 %via \textit{random propagation layer} and \textit{consistency regularization}. 

\hide{
Graph is complex with highly tangled nodes and edges, while previous graph models, e.g., GCN and GAT, take the node neighborhood as a whole and follow a determined aggregation process recursively. These determined models, where nodes and edges interact with each other in a fixed way, suffer from overfitting and risk of being misguiding by small amount of potential noise, mistakes, and malevolent attacks. To address it, we explore graph-structured data in a random way. In the proposed framework, we train the graph model on massive augmented graph data, generated by random sampling and propagation. In different training epochs, nodes interact with different random subsets of graph, and thus mitigate the risk of being misguiding by specific nodes and edges. On the other hand, the random data augmentation alleviates the overfitting and the implicit ensemble style behind the process improves the capacity of the representation. As the model should generalize well and have a similar prediction on unlabeled data on different data augmentations, despite the potential divergence from sampling randomness, we can further introduce an unsupervised graph-based regularization to the framework.
}

\hide{
In a graph neural model, if any given node's  representation (including unlabeled nodes') has a sufficient and consistent performance in any random subgraphs, that is, any node representation disentangles with specific nodes or edge links, which can be swapped by others, the graph neural model can effectively alleviate the  overfitting to specific graph structure and being misguided by noise, mistakes, and malevolent attacks. One the other hand, the graph neural model trained on exponentially many random subgraphs will also explore graph structures in different levels as sufficiently as possible, while previous models, e.g., GCN and GAT, model the neighborhood as a whole. Based on it, the proposed model, \model, utilizes a series of random sampling strategies to explore graph structures. We found that sampling strategy coupled with graph propagation is an efficient way of graph data augmentation. By leveraging the consistency of abundant unlabeled data on different random augmentations, we can further lower the generalization loss.
}

%have a expressive capacity of capturing complex graphs. Hence, \model\ features graph random sampling for data disentanglement and leverage of unlabeled data.
%
\hide{
\subsection{Over-smoothing Problem }
The over-smoothing issue of GCNs was first studied in \cite{li2018deeper}, which indicates that node features will converge to a fixed vector as the network depth increases. This undesired convergence heavily restricts the expressive power of deep GCNs. Formally speaking, suppose $G$ has $P$ connected components $\{C_i\}^{P}_{i=1}$. Let $\mathbbm{1}^{(i)} \in \mathbb{R}^n$ denote the indication vectors for the $i$-th component $C_i$, which indicates whether a node belongs to $C_i$  i.e.,
	\begin{equation*}
\mathbbm{1}_j^{(i)} = \left\{ \begin{aligned}
1, &v_j \in C_i \\
0, &v_j \notin C_i  \end{aligned}\right..
\end{equation*}
 The over-smoothing phenomenon of GCNs is formulated as the following theorem:
\begin{theorem}
	Given a graph $G$ which has $P$ connected components $\{C_i\}^{P}_{i=1}$, for any $\mathbf{x} \in \mathbb{R}^n$, we have:
	\begin{equation*}
		\lim_{k\rightarrow +\infty} \hat{A}^k x = \widetilde{D}^{\frac{1}{2}}[\mathbbm{1}^{(1)}, \mathbbm{1}^{(2)}, ..., \mathbbm{1}^{(P)}] \hat{x}
	\end{equation*}
	where $\hat{x} \in \mathbb{R}^P$ is a vector associated with $x$.  %$\mathbbm{1}^{(i)} \in \mathbb{R}^n$ is the indication vector of the $i^{th}$ component $C_i$, which denotes whether a node belongs to $C_i$, i.e., 

\end{theorem}
From this theorem, we could notice that after repeatedly performing the propagation rule of GCNs many times, node features will converge to a linear combination of $\{D^{\frac{1}{2}}\mathbbm{1}^{(i)}\}$. And the nodes in the same connected component only distinct by their degrees. In the above analyses, the activation function used in GCNs is assumed to be a linear transformation, but the conclusion can also be generalized to non-linear case\cite{oono2019graph}.
%收敛后同一个component的feature只跟degree 有关
}

\begin{figure*}
    \centering
    \includegraphics[width= \linewidth]{grand6.pdf}
    \caption{Architecture of \model.}
    \label{fig:arch1}
\end{figure*}

\subsection{Architecture}
The architecture of \model\ is illustrated in Figure~\ref{fig:arch1}. Overall, the model includes two components, i.e., Random propagation module and classification module.

\subsubsection{Random Propagation Module.}
\label{sec:randpro}
\begin{figure}
    \centering
    \includegraphics[width=0.8
    \linewidth]{dropnode_vs_dropout2.pdf}
    \caption{Difference between DropNode and dropout. Dropout drops  elements of $X$ independently. While dropnode drops feature vectors of nodes (row vectors of $X$) randomly.}
    \label{fig:dropnode_vs_dropout}
\end{figure}

\begin{algorithm}[tb]
\caption{Dropnode}
\label{alg:dropnode}
\begin{algorithmic}[1] 
\REQUIRE ~~\\
Original feature matrix $X \in R^{n \times d}$, DropNode probability 
$\delta \in (0,1)$. \\
\ENSURE ~~\\
Perturbed feature matrix  $X\in R^{n \times d}$.
\IF{mode == Inference}
\STATE $\widetilde{X} = X$.
\ELSE
\STATE Randomly sample $n$ masks: $\{\epsilon_i \sim Bernoulli(1-\delta)\}_{i=1}^n$.
%$\mathcal{L}^l + \lambda \mathcal{L}^u$
%\STATE 
\STATE Obtain deformity feature matrix by  multiplying each node's feature vector with the corresponding  mask: $\widetilde{X}_{i} = \epsilon_i \cdot X_{i} $.
\STATE Scale the deformity features: $\widetilde{X} = \frac{\widetilde{X}}{1-\delta}$.
\ENDIF
\end{algorithmic}
\end{algorithm}

\label{sec:randpro}
%In this section, we will first introduce our motivation to adopt random sampling on graph-structured data. The divergence from randomness of sampling also inspires us to constrain unlabeled data with consistency regularization. Then we detail the two modules in \model, 

%The proposed model, \model, provides random propagation layer and consistency regularization for a common graph model supervised with scarce labels.
%The training process of \model\ in a certain epoch is illustrated in Figure~\ref{fig:model}. 
%The basic idea of random propagation is to let each node rand
%we random sample a fixed proportion of neighborhoods for each node $v_i \in \mathcal{V}$, and let $v_i$ only interact with the sampled neighborhoods in propagation.
In random propagation, we aim to perform message passing in a random way during model training. To achieve that, we add an extra node sampling operation called ``DropNode'' in front of the propagation layer.
%we randomly ``drop'' some nodes before propagation. This strategy is called dropnode.

\vpara{DropNode.} In DropNode, feature vector of each node is randomly removed (rows of $X$ are randomly set to $\vec{0}$) with a pre-defined probability $\delta \in (0,1)$ at each training epoch. The resultant perturbed feature matrix $\widetilde{X}$ is fed into the propagation layer later on.
More formally, we first randomly sample a binary mask $\epsilon_i \sim Bernoulli(1-\delta)$ for each node $v_i$, and obtain the perturbed feature matrix $\widetilde{X}$ by multiplying each node's feature vector with the corresponding mask, i.e., $\widetilde{X}_i=\epsilon_i \cdot X_i$. Furthermore, we scale $\widetilde{X}$ with a factor of $\frac{1}{1-\delta}$ to guarantee the perturbed feature matrix is equal to $X$ in expectation. Please note that the sampling procedure is only performed during training. In inference time, we directly let $\widetilde{X}$ equal to the original feature matrix $X$. The algorithm of DropNode is shown in Algorithm \ref{alg:dropnode}. 

DropNode is similar to dropout\cite{srivastava2014dropout}, a more general regularization method used in deep learning to prevent overfitting. 
In dropout, the elements of $X$ are randomly dropped out independently. While DropNode drops a node's whole features together, serving as a node sampling method. 
Figure \ref{fig:dropnode_vs_dropout} illustrates their differences.
Recent study suggests that a more structured form of dropout, e.g., dropping a contiguous region in DropBlock\cite{ghiasi2018dropblock}, is required for better regularizing CNNs on images. From this point of view, dropnode can be seen as a structured form of dropout on graph data.
%Similar idea has been proposed 
We also demonstrate that dropnode is more suitable for graph data than dropout both theoretically (Cf. Section \ref{sec:theory}) and experimentally (Cf. Section \ref{sec:ablation}). 
%Similar idea, i.e., DropBlock\cite{ghiasi2018dropblock} has been proposed as a more effective regularization for CNNs, which shows more effective than original dropout. 

%On the other hand, dropnode can be seen as a structured from of dropout for graph data. 

After dropnode, the perturbed feature matrix $\widetilde{X}$ is fed into a propagation layer to perform message passing. Here we adopt mixed-order propagation, i.e.,

\begin{equation}
\label{equ:kAX}
	\overline{X} = \overline{A} \widetilde{X}.
\end{equation}
Here we define the propagation matrix as $\overline{A} =  \frac{1}{K+1}\sum_{k=0}^K\hat{A}^k$, that is, the average of the power series of $\hat{A}$ from order 0 to order $K$. This kind of propagation rule enables model to incorporate multi-order neighborhoods information, and have a lower risk of over-smoothing compared with using $\hat{A}^K$ when $K$ is large. Similar ideas have been adopted in previous works~\cite{abu2019mixhop,abu2018n}. We compute the Equation \ref{equ:kAX} by iteratively calculating the product of sparse  matrix $\hat{A}^k$ and $\widetilde{X}$. The corresponding time complexity is $\mathcal{O}(Kd(n+|E|))$.

\hide{
In other words, the original features of about $\delta |V|$ nodes are removed (set as $\vec{0}$).
% , and $\delta$ controls the extent of disentanglement ($\delta \sim 0.5$). 
Obviously, this random sampling strategy may destroy the information carried in nodes and the resultant corrupted feature matrix is insufficient for prediction. To compensate it, we try to recover the information in a graph signal propagation process, and get the recovery feature matrix $\widetilde{X}$. The propagation recovery process is:
\begin{equation}
\label{equ:kAX}
	\widetilde{X} = \frac{1}{K+1}\sum_k^K\hat{A}^k X^{'}.
\end{equation}

%To guarantee the expectation of $X'$ equal to original features $X$, we normalize $X'$ as $\frac{X'}{1-\delta}$.
%Please note that the mask for each node is sampled independently from all of others.
It also offers a general way for implicit model ensemble on exponentially many augmented data.
}
 %Consistency regularization further leverages unlabeled data to prevent the potential divergence of predictions from different augmentations.
\hide{
In each epoch, given the input graph, \model\ first generates $S$ graph data augmentations $\{\widetilde{X}^{(s)}| 1\leq s \leq S\}$ via random propagation layer. Then, each $\widetilde{X}^{(s)}$ is fed into a prediction model, leading to the corresponding prediction distribution $\widetilde{Z}^{(s)}$. In optimization, except for optimizing the supervised classification loss $\mathcal{L}^l$ with the given labels $Y^L$, we also minimize the prediction distance among different augmentations of unlabeled nodes via unsupervised consistency loss $\mathcal{L}^u$.}

%In random propagation layer, we augment the graph data by random sampling and propagation $S$ times. With the set of graph data augmentations 
%and train a combination of prediction models on the augmented graph data. 
%Briefly, random propagation layer consists of graph dropout and graph propagation. 
%It also offers a general way for implicit model ensemble on exponentially many augmented data. Unsupervised consistency regularization further leverages unlabeled data to prevent the potential divergence of predictions from different augmentations.
%To be specific, in each epoch we drop partial nodes or edges and recover the graph information by propagation multiple times and minimize the prediction distance of all nodes on different augmented data via consistency regularization. In the inference phase, we just input the original $X$ without sampling.
% Finally, we analyze graph dropout and the consistency constraint on graphs theoretically. The training process of \model\ in a certain epoch is illustrated in Figure~\ref{fig:model} and we will detail the model below.

%To further exploit unlabeled data, we introduce random consistency loss to consider the prediction consistency of unlabeled data between different random augmentation rounds.

\hide{
\subsection{Motivation}
Our motivation is that GCNs can not sufficiently leverage unlabeled data in this task. Specifically,

\subsection{Overview of Graph Random Network}
    In this section, we provide a brief introduction of the proposed graph random network. The basic idea of \model is to promote GCNs' generalization and robustness by taking the full advantage of unlabeled data in the graph.

%\subsection{Random Propagation Layer}
%
% Randomly sampling part of graph nodes may be one of the most straightforward method for graph disentanglement.
Random propagation layer consists of graph dropout and propagation.
We first introduce how to generate a subgraph by random sampling nodes.
Formally, we randomly sample the nodes without replacement at a probability $1-\delta$  and drop the rest.
%, as set $\mathcal{C}$. % and drop out the rest. We denote the chosen node set as $\mathcal{C}$. 
The deformity feature matrix $X^{'}$ is formed in the following way,
\begin{align}
\label{equ:samplingX1}
\left\{
\begin{aligned}
	& Pr(X^{'}_{i}=\vec{0}) = \delta, \\%, & i \notin \mathcal{C}.
	&Pr(X^{'}_{i}=X_i) = 1-\delta. %, & i \in \mathcal{C}.\\
\end{aligned}
\right.
\end{align}

In other words, the original features of about $\delta |V|$ nodes are removed (set as $\vec{0}$).
% , and $\delta$ controls the extent of disentanglement ($\delta \sim 0.5$). 
Obviously, this random sampling strategy may destroy the information carried in nodes and the resultant corrupted feature matrix is insufficient for prediction. To compensate it, we try to recover the information in a graph signal propagation process, and get the recovery feature matrix $\widetilde{X}$. The propagation recovery process is:
\begin{equation}
\label{equ:kAX}
	\widetilde{X} = \frac{1}{K}\sum_k^K\hat{A}^k X^{'}.
\end{equation}

\hide{
\begin{equation}
\label{Equ:denoise}
\widetilde{X} = \arg\min_{\widetilde{X}} \frac{1}{2}\left\| \widetilde{X}- X^{'}\right\|^2_2 + \alpha \frac{1}{2}\left \| \widetilde{X} - D^{-\frac{1}{2}}AD^{-\frac{1}{2}} \widetilde{X}\right\|^2_2
\end{equation}  
\begin{small}
% \widetilde{X}=\arg\min_{\widetilde{X}} Q(\widetilde{X}) =
\begin{equation}
\label{Equ:denoise}
\widetilde{X}=\arg\min_{\widetilde{X}} 
   \frac{1}{2}\left(\sum_{i, j=1}^{n} A_{i j}\left\|\frac{1}{\sqrt{D_{i i}}} \widetilde{X}_{i}-\frac{1}{\sqrt{D_{j j}}} \widetilde{X}_{j}\right\|^{2}+\mu \sum_{i=1}^{n}\left\|\widetilde{X}_{i}-X^{'}_{i}\right\|^{2}\right)
\end{equation}  
\end{small}
% \reminder{
% https://papers.nips.cc/paper/2506-learning-with-local-and-global-consistency.pdf}
where the first term keeps the denoised signal similar to the measurement and the second term enforces the smoothing of the solution. By setting the derivate of $\widetilde{X}$ to zero, we can obtain the solution to the Equation \ref{Equ:denoise}:
\begin{equation}
\label{Equ:denoise2}
\widetilde{X} = (I+\alpha \tilde{L})^{-1}X^{'}
\end{equation}

To avoid the inversion operation in Equation \ref{Equ:denoise2}, we derive an approximate solution of Equation \ref{Equ:denoise2} using Taylor Extension:
\begin{equation}
\label{Equ:denoise3}
\widetilde{X} = (I + \alpha \tilde{L}^2)^{-1}X^{'} \approx \sum_{i=0}^K (-\alpha\tilde{L}^2)^i X^{'} 
\end{equation}
}

Combining Equation \ref{equ:samplingX1} and \ref{equ:kAX}, the original feature matrix $X$ is firstly \hide{heavily} corrupted by sampling and then smoothed by graph propagation. In fact, the whole procedure consisting of two opposite operations provides a new representation $\widetilde{X}$ for the original $X$. Later we will verify that $\widetilde{X}$ is still a good representation substitute with a sufficient prediction ability, while the randomness inside the process helps the model avoid the over-dependence on specific graph structure and 
explore different graph structure in a more robust way.
%as sufficiently as possible. 
We treat these opposite operations as a whole procedure and call them \textit{random propagation layer} in the proposed model \model. 
\hide{After filtered by random sampling operation in random propagation layer, the subsequent model parts only involve interaction among nearly half of the original node features and thus graph-structure data are disentangled somehow.}

Here, we generalize the random propagation layer with another sampling strategies. Instead of removing the feature of a node entirely by sampling once, we drop the element of the feature in each dimension one by one by multiple sampling. This is a multi-channel version of node sampling. Furthermore, we scale the remaining elements with a factor of $\frac{1}{1-\delta}$ to 
 guarantees the deformity feature matrix or adjacency matrix are the same as the original in expectation. Similar idea of dropping and rescaling has been also used in Dropout~\cite{srivastava2014dropout}. Hence we called our three graph sampling strategy graph dropout (\textit{dropnode} and \textit{dropout}), and formulate them following:
 
\vpara{Dropnode.} In dropnode, the feature vector of each node is randomly dropped with a pre-defined probability $\delta \in (0,1)$ during the propagation. More formally, we first form a deformity feature matrix $X^{'}$ in the following way:
\begin{align}
\label{equ:nodedropout}
\left\{
\begin{aligned}
& Pr(X^{'}_{i}=\vec{0}) = \delta,& \\
&Pr(X^{'}_{i}= \frac{X_i}{1-\delta} ) = 1- \delta. &
\end{aligned}
\right.
\end{align}
Then let $X^{'}$ propagate in the graph via Equation~\ref{equ:kAX}. 

\vpara{Dropout.} In dropout, the element of the feature matrix is randomly dropped with a probability $\delta \in (0,1)$ during the propagation. Formally, we have:
\begin{align}
\label{equ:featuredropout}
\left\{
\begin{aligned}
& Pr(X^{'}_{ij}=0) = \delta,& \\
&Pr(X^{'}_{ij}= \frac{X_{ij}}{1-\delta} ) = 1- \delta. &
\end{aligned}
\right.
\end{align}
Then we  propagate $X^{'}$ in the graph via Equation~\ref{equ:kAX}. 

%Then we use $\hat{A}^{'}$ as the substitute of $\hat{A}$ in the propagation Equation~\ref{equ:kAX}.  

Note that dropout in graph dropout shares the same definition of Dropout~\cite{srivastava2014dropout}, a more general method widely used in deep learning to prevent parameters from overfitting. However, dropout in graph dropout, as a multi-channel version of dropnode, is mainly developed to explore graph-structured data in the semi-supervised learning framework. The graph dropout strategy, directly applied to graph objects, is also an efficient way of graph data augmentation and  model ensemble on exponentially many subgraphs. As for the common dropout applied to the input units, the optimal probability of drop is usually closer to 0 to prevent the loss of information~\cite{srivastava2014dropout}, which is not the case in our graph dropout. In this paper we will further analyze theoretically that graph dropout help \model\ leverage unlabeled data, and dropnode and dropout play different roles. 
}
%For the convenience of narration, in the rest of the paper, we mainly discuss dropnode and similar conclusion holds for dropout and dropedge. 

\hide{
Note that dropout is applied to neural network activation to prevent overfitting of parameters while our node dropout and edge dropout are coupled with graph propagation, performing on graph objects without parameters. We will reveal that our graph dropout strategy is an efficient way of graph data augmentation and graph models ensemble on exponentially many subgraphs. We will further analyse theoretically that graph dropout help \model\ leverage unlabeled data. \reminder{}
}

\vpara{Stochastic Graph Data Augmentation.}
%Here, we will reveal that random propagation layer is an efficient method of stochastic graph data augmentation.
 Feature propagation have been proven to be an effective method for enhancing node representation by aggregating information from neighborhoods, and becomes the key step of Weisfeiler-Lehman Isomorphism test~\cite{shervashidze2011weisfeiler} and GNNs. In random propagation, some nodes are first randomly dropped before propagation. That is, each node only aggregates information from a subset of multi-hop neighborhoods randomly. In this way, random propagation module stochastically generates different representations for a node, suggesting an efficient stochastic augmentation method for graph data. From another point of view, random propagation can also be seen as injecting random noise to the propagation procedure by dropping a portion of nodes . 

We also conduct an interesting experiment to examine the validation of this data augmentation method. In this setting, we first sample a set of augmented node representations $\overline{X}$ from random propagation module with different drop rates, then use $\overline{X}$ to train a GCN for node classification. 
By examining the classification accuracy on $\overline{X}$, we can empirically analyze the influence of the injected noise on model performance. The results with different drop rates have been shown in Figure \ref{fig:redundancy}. It can be seen that the performance only decreases by less than $3\%$ as the drop rate increases from $0$ to $0.5$, indicating the augmented node representations is still sufficient for prediction\footnote{In practice, the drop rate is always set to $0.5$}. 
 Though $\overline{X}$  is still inferior to $X$ in performance, in practice, multiple augmentations will be utilized for training prediction model since the sampling will be performed in every epoch. From the view of bagging~\cite{breiman1996bagging}, this random data augmentation scheme makes the final prediction model implicitly ensemble models on exponentially many augmentations, yielding much better performances than using deterministic propagation.

%training graph models on $\widetilde{X}$ with multiple sampling augmentations utilizes all the feature information in statistics. 

%The augmented data should preserve a sufficient ability in prediction.
%We empirically show the recovery feature matrix $\widetilde{X}$ with nearly half of the information removed is still sufficient for prediction. Figure \ref{fig:redundancy} shows the node classification  accuracy of GCN on three public datasets, i.e., Cora, Citeseer and Pubmed,  with the feature matrix $\widetilde{X}$ in both training and testing phases. 
\begin{figure}
  		\centering
  		\includegraphics[width = 0.8 \linewidth, height =  0.58\linewidth]{drop_rate.pdf}
  	\caption{Classification results of GCN with $\overline{X}$ as input.} 
  	\label{fig:redundancy}
\end{figure}
  
%The sufficient prediction ability of $\widetilde{X}$ inspires us to consider random propagation layer as an economic strategy for graph data augmentation, as multiple random sampling behaviors generate exponentially many $\widetilde{X}$s.
% and we can achieve exponentially many $\widetilde{X}$s. 

% generated with only nearly half of the feature information 
 %is still inferior to $X$ in performance, training graph models on $\widetilde{X}$ with multiple sampling augmentations utilizes all the feature information in statistics. 

%More importantly, this graph data augmentation also guarantees the robustness of the graph model, avoid being misguided by data noises and attacks. Graph model trained on arbitrary combination of nodes and edges has a consistent performance, hence the model reduces the dependency on specific graph structure, as nodes and edges are replaceable. Besides, 

\subsubsection{Prediction Module.}
In prediction module, the augmented feature matrix $\overline{X}$ is fed into a neural network to predict nodes labels. We employ a two-layer MLP as the classifier:
\begin{equation}
\label{equ:mlp}
    P(\mathcal{Y}|\overline{X};\Theta) = \sigma_2(\sigma_1(\overline{X}W^{(1)})W^{(2)})
\end{equation}
where $\sigma_1$ is ReLU function, $\sigma_2$ is softmax function, $\Theta=\{W^{(1)}, W^{(2)}\}$ refers to model parameters. Here the classification model can also adopt more complex GNN based node classification models, e.g., GCN, GAT.  But we find the performance decreases when we replace MLP with GNNs because of the over-smoothing problem. We will explain this phenomenon in Section \ref{sec:oversmoothing}.

%with different drop ratios . We found, surprisingly, that the performance only decreases by less than $3\%$ when we drop $50\%$ nodes.

% that is, only half of the nodes interact with each other, at a specific sampling epoch. 
\hide{
From another perspective, Equation \ref{equ:samplingX1} and \ref{equ:kAX} in the random propagation layer perform a data augmentation procedure by linear interpolation~\cite{devries2017dataset} in a random subset of the multi-hop neighborhood.
}

\hide{
In a graph, we random sample a fixed proportion of neighborhoods for each node $v_i \in \mathcal{V}$, and let $v_i$ only interact with the sampled neighborhoods in propagation. Using this method, we are equivalent to let each node randomly perform linear interpolation with its neighbors. 

However, generating neighborhood samples for each node always require a time-consuming preprocessing. For example, the sampling method used in GraphSAGE~\cite{hamilton2017inductive} requires $k \times n$ times of sampling operations, where $k$ denotes the size of sampled neighbor set. To solve this problem, here we propose an efficient sampling method to perform random propagation --- graph dropout. Graph dropout is inspired from Dropout\cite{srivastava2014dropout}, a widely used regularization method in deep learning. \reminder{graph dropout is an efficient sampling method}
In graph dropout, we randomly drop a set of nodes or edges in propagation, which is introduced separately. \\

\vpara{Edge Dropout.} The basic idea of edge dropout is randomly dropping a fix proportion of edges during each propagation. 
Specifically, we construct a deformity feature matrix $\hat{A}^{'}$ following:
\begin{align}
\label{equ:nodedropout}
\left\{
\begin{aligned}
& Pr(\hat{A}^{'}_{ij}=\vec{0}) = \delta.& \\
&Pr(\hat{A}^{'}_{ij}= \frac{\hat{A}^{'}_{ij}}{1-\delta} ) = 1- \delta. &
\end{aligned}
\right.
\end{align}
Then we use $\hat{A}^{'}$ as the replacement of $\hat{A}$ in propagation.

\vpara{Node Dropout.} In node dropout, the feature vector of each node is randomly dropped with a pre-defined probability $\delta \in (0,1)$ during propagation. More formally, we first form a deformity feature matrix $X^{'}$ in the following way:
\begin{align}
\label{equ:nodedropout}
\left\{
\begin{aligned}
& Pr(X^{'}_{i}=\vec{0}) = \delta.& \\
&Pr(X^{'}_{i}= \frac{X_i}{1-\delta} ) = 1- \delta. &
\end{aligned}
\right.
\end{align}
Then let $X^{'}$ propagate in graph as the substitute of $X$, i.e., $\tilde{X} = \hat{A}X$. 

Actually, node dropout can be seen as a special form of edge dropout:
}
%\reminder{Dropping a node is equivalent to drop all the edges start from the node.} \\
%
%\reminder{connection between graph dropout and feature dropout. dropblock..}

\subsection{Consistency Regularized Training}
\label{sec:consis}
\begin{figure*}
	\centering
	\includegraphics[width= \linewidth]{grand_consis.pdf}
	\caption{Illustration of consistency regularized training for \model. \yd{how about making the random progagation module into two as well?}}
	\label{fig:arch2}
\end{figure*}
 %By training model on different augmentations of labeled examples, we are equivalent to add an implicit consistency regularization on labeled examples.
  As mentioned in previous subsection, random propagation module can be seen as an efficient method of stochastic data augmentation. That inspires us to design a consistency regularized training algorithm for optimizing parameters. 
  In the training algorithm, we generate multiple data augmentations at each epoch by performing dropnode multiple times. Besides the supervised classification loss, we also add a consistency regularization loss to enforce model to give similar predictions across different augmentations. This algorithm is illustrated in Figure \ref{fig:arch2}.
  %Algorithm \ref{alg:2} summarizes the training algorithm.
  
  %In the training phase, we aim to leverage consistency regularization to facilitate model's generalization by enforcing model to give similar predictions among diffident augmentations.  
 \begin{algorithm}[tb]
\caption{Consistency Regularized Training for \model}
\label{alg:2}
\begin{algorithmic}[1] %[1] enables line numbers
\REQUIRE ~~\\
 Adjacency matrix $\hat{A}$,
labeled node set $V^L$,
 unlabeled node set $V^U$,
feature matrix $X \in R^{n \times d}$, 
%$T$: maximum number of learning epochs,\\
% $K$: times of random propagation each epoch, \\
times of dropnode in each epoch $S$, dropnode probability $\delta$.\\
\ENSURE ~~\\
Prediction $Z$.
%\STATE $\pi^{(0)} \leftarrow \emptyset$, $\pi^{(1)} \leftarrow \emptyset$
%\STATE $\Omega\leftarrow U$
\WHILE{not convergence}
\FOR{$s=1:S$} 
%\STATE Obtaining the deformity feature matrix $X^{'}$ by applying graph dropout on $X$ via Eq.\ref{equ:nodedropout} or Eq.\ref{equ:featuredropout}.
\STATE Apply dropnode via Algorithm \ref{alg:dropnode}: $
\widetilde{X}^{(s)} \sim \text{dropnode}(X,\delta)$. 
\STATE Perform propagation: $\overline{X}^{(s)} = \frac{1}{K}\sum_{k=0}^K\hat{A}^k \widetilde{X}^{(s)}$.
\STATE Predict class distribution using MLP: %$\widetilde{Z}^s = \text{GCN}(\widetilde{X}_k, A)$
$\widetilde{Z}^{(s)} = P(\mathcal{Y}|\overline{X}^{(s)};\Theta)$.
\ENDFOR
% \STATE $Z^j_{(t)} =\text{GCN}^j(\widetilde{X}^j_{(t)})$
%\STATE $Z^{(j)}_{(t)} =\text{GCN}^{(j)}(S^{(j)}[t])$
%\STATE Perform SGD on $Loss^{(j)}(Y^{(j)},Z^{(j)}_{(t)},L^{(j)})$ %to update GCN$^{j}$
%\STATE Feed $\widetilde{X}^{(j)}_{(t)}$ into $\text{GCN}^{(j)}$
\STATE Compute supervised classification loss $\mathcal{L}_{sup}$ via Equation \ref{equ:loss} and consistency regularization loss via Equation \ref{equ:consistency}.
\STATE Update the parameters $\Theta$ by gradients descending:
$$\nabla_\Theta \mathcal{L}_{sup} + \lambda \mathcal{L}_{con}$$
%\STATE 
\ENDWHILE
\STATE Output prediction $Z$ via Equation \ref{equ:inference}
\end{algorithmic}
\end{algorithm}

%The algorithm is shown in Algorithm \ref{alg:2}.
 \subsubsection{$S-$augmentation Prediction}
At each epoch, we aim to generate $S$ different data augmentations for graph data $X$. To achieve that, we adopt $S-$sampling dropnode strategy in random propagation module. That is, we performing $S$ times of random sampling in dropnode to generate $S$ perturbed feature matrices $\{\widetilde{X}^{(s)}|1\leq s \leq S\}$. Then we propagate these features according to Equation \ref{equ:kAX} respectively, and hence obtain $S$ data augmentations $\{\overline{X}^{(s)}|1 \leq s \leq S\}$. Each of these augmented feature matrice are fed into the prediction module to get the corresponding output:

%The prediction probability on the $s^{th}$ augmented data is denoted as:
 \begin{equation}
     \widetilde{Z}^{(s)} = P(\mathcal{Y}|\overline{X}^{(s)}; \Theta).
 \end{equation}
Where $\widetilde{Z}^{(s)} \in (0,1)^{n\times C}$ denotes the classification probabilities on the $s$-th augmented data. 
%Then we feed each $\widetilde{X}^{(s)}$ into a prediction model to get the corresponding prediction probabilities:
\subsubsection{Supervised Classification Loss.}
The supervised objective of graph node classification in an epoch is the averaged cross-entropy loss over $S$ times sampling:
\begin{equation}
\label{equ:loss}
	\mathcal{L}_{sup} = -\frac{1}{S}\sum_{s=1}^{S}\sum_{i=1}^m \sum_{j=1}^{C}Y_{i,j} \ln \widetilde{Z}_{i,j}^{(s)} ,
\end{equation}

\noindent where $Y_{i,l}$ is binary, indicating whether node $i$ has the label $l$. By optimizing this loss, we can also enforce the model to output the same predictions on multiple augmentations of a labeled node.
 However, in the semi-supervised setting, labeled data is always very rare. In order to make full use of unlabeled data, we also employ consistency regularization loss in \model. 
 
 %Let $\widetilde{Z}^s = p_{model}(\mathcal{Y}|\widetilde{X}^s, \hat{A})$ denote model's prediction probabilities for the $s$-th augmented data $\widetilde{X}^s$. 
 \subsubsection{Consistency Regularization Loss.} How to optimize the consistency among $S$ augmentations of unlabeled data? 
 Let's first consider a simple case, where the random propagation procedure is performed only twice in each epoch, i.e., $S=2$. Then a straightforward method is to minimize the distributional distance between two outputs:
 \begin{equation}
 \label{equ:2d}
     \min \sum_{i=1}^n \mathcal{D}(\widetilde{Z}^{(1)}_i, \widetilde{Z}^{(2)}_i),
 \end{equation}
where $ \mathcal{D}(\cdot,\cdot)$ is the distance function. Then we extend it into multiple augmentation situation. We can first calculate the label distribution center by taking average of all distributions:
\begin{equation}
    \overline{Z} = \frac{1}{S}\sum_{s=1}^{S} \widetilde{Z}^{(s)}.
\end{equation}

And we can minimize the distributional distance between $\widetilde{Z}^{(s)}$ and $\overline{Z}$, i.e., $\min \sum_{s=1}^{S}\sum_{i=1}^n \mathcal{D}(\overline{Z}_i, \widetilde{Z}^{(s)}_i)$.
%\begin{equation}
%    \min \sum_{i=1}^{K}\sum_{j=1}^n \mathcal{D}(\bar{Z}_i, \widetilde{Z}^i_j)
%\end{equation}
\hide{
\begin{equation}
    \min \sum_{s=1}^{S}\sum_{i=1}^n \mathcal{D}(\overline{Z}_i, \widetilde{Z}^{(s)}_i).
\end{equation}
}
%\reminder{modified, please check it}
However, the distribution center calculated in this way is always inclined to have more entropy value, which indicates to be more ``uncertain''. 
%Thus this method will enforce model to give uncertain predictions to unlabeled data. 
Thus this method will bring extra uncertainty into model's predictions. To avoid this problem, We utilize the label sharpening trick in \model. Specifically, we apply a sharpening function onto the averaged label distribution to reduce its entropy:
\begin{equation}
 \overline{Z}^{'}_{i,l} = \frac{\overline{Z}_{i,l}^{\frac{1}{T}}}{\sum_{j=1}^{|\mathcal{Y}|}\overline{Z}_{i,j}^{\frac{1}{T}}},
\end{equation}
where $0< T\leq 1$ acts as the ``temperature'' which controls the sharpness of the categorical distribution. As $T \to 0$, the sharpened label distribution will approach a one-hot distribution. As the substitute of $\overline{Z}$, we minimize the distance between  $\widetilde{Z}^i$ and $\overline{Z}^{'}$ in \model:
\begin{equation}
\label{equ:consistency}
    \mathcal{L}_{con} =   \frac{1}{S}\sum_{s=1}^{S}\sum_{i=1}^n \mathcal{D}(\overline{Z}^{'}_i, \widetilde{Z}^{(s)}_i).
\end{equation}

By setting $T$ as a small value, we could enforce the model to output low-entroy predictions. This can be seen as adding an extra entropy minimization regularization into the model, which assumes that classifier's decision boundary should not pass through high-density regions of the marginal data distribution\cite{grandvalet2005semi}. As for the  distance function $\mathcal{D}(\cdot, \cdot)$, we adopt squared $L_2$ loss, i.e., $\mathcal{D}(x, y)=\|x-y\|^2$, in our model. It has been proved to be less sensitive to incorrect predictions\cite{berthelot2019mixmatch}, and is more suitable to the semi-supervised setting compared to cross-entropy. 
%\reminder{K augmentation at a epoch}

\subsubsection{Training and Inference}
In each epoch, we employ both supervised classification loss (Cf. Equation \ref{equ:loss}) and consistency regularization loss (Cf. Equation \ref{equ:consistency}) on $S$ times of sampling. Hence, the final loss is:
\begin{equation}
\label{equ:inf}
	\mathcal{L} = \mathcal{L}_{sup} + \lambda \mathcal{L}_{con}.
\end{equation}
Here $\lambda$ is a hyper-parameter which controls the balance between supervised classification loss and consistency regularization loss.
%The model parameters are updated by gradient descent with Adam updating rule\cite{kingma2014adam}. 
In the inference phase, as mentioned in Section \ref{sec:randpro}, we directly use original feature $X$ as the output of dropnode instead of sampling. Hence the inference formula is:
\begin{equation}
\label{equ:inference}
% 	\widetilde{Z} = \text{submodel}\left(\frac{1}{k}\sum_i^k\hat{A}^i X\right)
% \widetilde{Z}
Z= P\left(\mathcal{Y}~\bigg|~\frac{1}{K+1}\sum_{k=0}^K\hat{A}^k X;\hat{\Theta}\right).
\end{equation}
Here $\hat{\Theta}$ denotes the optimized parameters after training. We summarize our algorithm in Algorithm \ref{alg:2}.

\hide{Note that $X$ is equal to the expectation of $\widetilde{X}$s from data augmentation by multiple sampling. From the view of model bagging~\cite{breiman1996bagging}, the final graph model implicitly aggregates models trained on exponentially many subgraphs, and performs a plurality vote among these models in the inference phase, resulting in a lower generalization loss. }
 
% The whole process is summarized in Algorithm \ref{alg:2}.
%Algorithm \ref{alg:2} summarizes the training and inference procedures of the proposed framework, \model, with random propagation layer and unsupervised consistency regularization. 
%Note that \model\ is a general framework for most of graph models, we borrow a graph node classification model, such as GCN~\cite{kipf2016semi}, MLP, as the ``submodel'' in the framework.
%with the original feature input $X$ and the classification output $Z=\text{submodel}(X) \in R^{n \times |\mathcal{Y}|}$. 
%We focus on corresponding node indices in training and testing phase. 
%\reminder{notation}
%In \model, in a certain training epoch, we perform multiple sampling and generate $S$ data augmentation. Then the data augmentations are fed into a prediction model. The probability output on the $s$-th augmented data is:
\hide{
\begin{equation}
\label{equ:model}
	\widetilde{Z}^{(s)}=p_{\text{model}}\left(\mathcal{Y}~\bigg|~\frac{1}{K}\sum_k^K\hat{A}^k X^{'}\right) \in R^{n \times |\mathcal{Y}|}.
\end{equation}

%\begin{align}
%\label{equ:model}
%\left\{
%\begin{aligned}
%\widetilde{Z}^s=p_{\text{submodel}}\left(\mathcal{Y}~\bigg|~\frac{1}{k}\sum_i^k\hat{A}^i X^{'}\right) \in R^{n \times |\mathcal{Y}|}
%\end{aligned}
%\right.
%\end{align}

Hence the supervised objective of graph node classification in an epoch is the averaged cross-entropy loss over $S$ times sampling:
\begin{equation}
\label{equ:loss}
	\mathcal{L}^l = -\frac{1}{S}\sum_{s}^{S}\sum_{i\in V^L } \sum_l^{|\mathcal{Y}|}Y_{i,l} \ln \widetilde{Z}_{i,l}^s ,
\end{equation}

\noindent where $Y_{i,l}$ is binary, indicating whether node $i$ has the label $l$. 
%The model perform a step of SGD on each augmented data.

In each epoch, we employ both supervised cross-entropy loss (Cf. Eq.\ref{equ:loss}) and unsupervised consistency loss (Cf. Eq.\ref{equ:consistency}) on $S$ times of sampling. Hence, the final loss is:
\begin{equation}
\label{equ:inf}
	\mathcal{L} = \mathcal{L}^l + \lambda \mathcal{L}^u.
\end{equation}
Here $\lambda$ is a hyper-parameter which controls the balance between supervised loss and unsupervised consistency loss.
In the inference phase, the output is achieved by averaging  over the results on exponentially many augmented test data. This can be economically realized by inputting $X$ without sampling:
\begin{equation}
\label{equ:inference}
% 	\widetilde{Z} = \text{submodel}\left(\frac{1}{k}\sum_i^k\hat{A}^i X\right)
% \widetilde{Z}
Z= p_{\text{model}}\left(\mathcal{Y}~\bigg|~\frac{1}{K}\sum_k^K\hat{A}^k X\right).
\end{equation}

Note that $X$ is the average of $X^{'}$s from data augmentation by multiple sampling. From the view of model bagging~\cite{breiman1996bagging}, the final graph model implicitly aggregates models trained on exponentially many subgraphs, and performs a plurality vote among these models in the inference phase, resulting in a lower generalization loss.  
}
% The whole process is summarized in Algorithm \ref{alg:2}.

%\input{graph_dropout_proof.tex}

\hide{
\subsection{graph propagation for smoothing}

In this section, we introduce the random propagation layer, an efficient method to perform stochastic data augmentation on the graph. We first demonstrate the propagation layer in GCNs is actually a special form of data augmentation on graph-structured data. Based on this discovery, we propose random propagation strategy, which augments each node with a part of randomized selected neighborhoods \reminder{}. Finally we show that this strategy can be efficiently instantiated with a stochastic sampling method --- graph dropout.

\subsection{Feature Propagation as Data Augmentation}
\label{sec:aug}
An elegant data augmentation technique used in supervised learning is linear interpolation \cite{devries2017dataset}. Specifically, for each example on training dataset, we first find its $K-$nearest neighboring samples in feature space which share the same class label. Then for each pair of neighboring feature vectors, we get the augmented example using interpolation:
\begin{equation}
    x^{'}_i = \lambda x_i + (1-\lambda)x_j
\end{equation}
where $x_j$ and $x_i$ are neighboring pairs with the same label $y$, $x^{'}_i$ is augmented feature vector, and $\lambda \in [0,1]$ controls degree of interpolation\reminder{}. Then the example $(x^{'}_i, y)$ is used as extra training sample to facilitate the learning task. In this way, the model is encouraged to more smooth in input space and more robust against small perturbations. \reminder{why?}
%Please note that $x_i$ and $x_j$ have the same class label. And we will directly use the same label  

As for the problem of semi-supervised learning on graphs, we assume the graph signals are smooth, i.e., \textit{neighborhoods have similar feature vectors and similar class labels}. Thus a straightforward idea of augmenting graph-structured data is interpolating node features with one of its neighborhoods' features. However, in the real network, there exist small amounts of neighboring nodes with different labels and we can't identify that for samples with non-observable labels \reminder{?}. Thus simple interpretation with one neighbor might bring uncontrollable noise into the learning framework. 
An alternative solution is interpolating samples with multiple neighborhoods, which leads to Laplacian Smoothing:

\begin{equation}
\label{equ:lap}
    %x^{'}_i = \lambda x_i + (1-\lambda) x_j \quad (x_j \in \mathcal{N}(x_i))
    x^{'}_i = (1-\lambda) x_i + \lambda \sum_j \frac{\widetilde{A}_{ij}}{\widetilde{D}_{ii}} x_j
\end{equation}.
Here we follow the definition of GCNs which adding a self-loop for each node in graph\reminder{}. Rewriting Eq.~\ref{equ:lap} in matrix form, we have:
\begin{equation}
\label{equ:lap2}
    X^{'}=  (I-\lambda I)X + \lambda \widetilde{D}^{-1} \widetilde{A}X
\end{equation}.
As pointed out by Li et.al.\cite{li2018deeper}\reminder{}, when $\lambda = 1$ and replacing $\Tilde{D}^{-1} \Tilde{A}$ with the symmetric normalized adjacency matrix $\hat{A}$, Eq.~\ref{equ:lap2} is identical to the propagation layer in GCN, i.e., $X^{'}= \hat{A}X$.

%where $\mathbf{N}(x_i)$ denotes neighborhoods set of the $i^{th}$ node.  
%In this subsection, we careful analyze propagation layer in GCN and give some insightful observations from the view of data augmentation. 
%}
}

\hide{
\subsection{Random Propagation with Graph Dropout}
With the insight from last Section~\ref{sec:aug}, we develop a random propagation strategy for stochastic data augmentation on graph.

During each time of random propagation, we random sample a fixed proportion of neighborhoods for each node $v_i \in \mathcal{V}$, and let $v_i$ only interact with the sampled neighborhoods in propagation. Using this method, we are equivalent to let each node randomly perform linear interpolation with its neighbors. 

However, generating neighborhood samples for each node always require a time-consuming preprocessing. For example, the sampling method used in GraphSAGE~\cite{hamilton2017inductive} requires $k \times n$ times of sampling operations, where $k$ denotes the size of sampled neighbor set. To solve this problem, here we propose an efficient sampling method to perform random propagation --- graph dropout. Graph dropout is inspired from Dropout\cite{srivastava2014dropout}, a widely used regularization method in deep learning. \reminder{graph dropout is an efficient sampling method}
In graph dropout, we randomly drop a set of nodes or edges in propagation, which is introduced separately. \\

\vpara{Edge Dropout.} The basic idea of edge dropout is randomly dropping a fix proportion of edges during each propagation. 
Specifically, we construct a deformity feature matrix $\hat{A}^{'}$ following:
\begin{align}
\label{equ:nodedropout}
\left\{
\begin{aligned}
& Pr(\hat{A}^{'}_{ij}=\vec{0}) = \delta.& \\
&Pr(\hat{A}^{'}_{ij}= \frac{\hat{A}^{'}_{ij}}{1-\delta} ) = 1- \delta. &
\end{aligned}
\right.
\end{align}
Then we use $\hat{A}^{'}$ as the replacement of $\hat{A}$ in propagation.

\vpara{Node Dropout.} In node dropout, the feature vector of each node is randomly dropped with a pre-defined probability $\delta \in (0,1)$ during propagation. More formally, we first form a deformity feature matrix $X^{'}$ in the following way:
\begin{align}
\label{equ:nodedropout}
\left\{
\begin{aligned}
& Pr(X^{'}_{i}=\vec{0}) = \delta.& \\
&Pr(X^{'}_{i}= \frac{X_i}{1-\delta} ) = 1- \delta. &
\end{aligned}
\right.
\end{align}
Then let $X^{'}$ propagate in graph as the substitute of $X$, i.e., $\tilde{X} = \hat{A}X$. 

Actually, node dropout can be seen as a special form of edge dropout: \reminder{Dropping a node is equivalent to drop all the edges start from the node.} \\

\reminder{connection between graph dropout and feature dropout. dropblock..}

%In stochastic propagation strategy, we randomly select a part of neighborhoods to pass message for a node. 

}

\hide{
\section{Consistency Optimization}
\subsection{\drop: Exponential Ensemble of GCNs}
Benefiting from the information redundancy, the input of \shalf gives a relatively sufficient view for node classification.
% evident from the previous section. 
It is straightforward to generalize \shalf to ensemble multiple \shalf s, as each set $\mathcal{C}$ provides a unique network view. % recovery feature matrices in different \shalf are different, thus provide different data views. 
In doing so, we can use $|\mathcal{C}_i \bigcap \mathcal{C}_j|$ to measure the correlation between two network data view $i$ and $j$. In particular, $\mathcal{C}$ and $V-\mathcal{C}$ can be considered independent. However, one thing that obstructs the direct application of traditional ensemble methods is that a network can generate exponential data views when sampling $n/2$ nodes from its $n$ nodes.

\begin{figure}{
		\centering
		\includegraphics[width = 1\linewidth]{fig_NSGCN.pdf}
		\caption{\sdrop and \dm. \drop: In each epoch we randomly drop half of the nodes and do the process of \half. The input in the inference phase is the original feature matrix without dropout, but with a discount factor of $0.5$. This method can be treated as the exponential ensemble of \half.
			\dm: In each epoch of \drop, we consider the half of the nodes that are dropped are independent to the remaining nodes and input them to another \drop. We minimize the disagreement of these two \drop s via the Jensen-Shannon divergence. This method can be treated as an economical co-training of two independent \drop s.}
		\label{fig:cotrain}
	}
\end{figure}

%Inspired by the dropout mechanism~\cite{srivastava2014dropout} that implicitly ensembles exponential number of neural networks, w
We propose a dropout-style ensemble model \drop.
In \drop, we develop a new network-sampling ensemble method---graph dropout, which samples the node set $\mathcal{C}$ from a network's node set and do the \shalf process in each training epoch. 
In the inference phase, we use the entire feature matrix with a discount factor $0.5$ as input~\cite{srivastava2014dropout}. 
%Since the network sampling is dropout-like, we call the model \textbf{\drop}. 
The data flow in \drop\ is shown in %the top part of
Figure~\ref{fig:cotrain}.

Specifically, given the recovery feature matrix $\widetilde{X}$ in a certain epoch, the softmax output of GCN in the training phase is $Z = GCN(\widetilde{X}) \in R^{n \times |\mathcal{Y}|}$. Hence the  objective of node classification in the network is

\begin{equation}
\label{equ:loss}
	Loss_{c1} = -\sum_{i\in V^L } \sum_l^{|\mathcal{Y}|}Y_{i,l} \ln Z_{i,l}
\end{equation}

\noindent where $Y_{i,l}$ is binary, indicating whether node $i$ has the label $l$.  %\todo{to jz: plz add notation definition here.; Done}

To make the model practical, in the inference phase, the output is achieved by using the entire feature matrix $X$ and a discount factor $0.5$~\cite{srivastava2014dropout}, that is, 

\begin{equation}
\label{equ:inf}
	\hat{Z} = GCN\left(\frac{0.5}{k}\sum_i^k\hat{A}^i X\right)
\end{equation}

The discount factor guarantees the input of GCNs in the inference phase is the same as the expected input in the training phase. Similar idea has been also used in~\cite{srivastava2014dropout}. 
%\todo{add the same reference above; Done}

\subsection{\dm: Disagreement Minimization}%between two \drop s }
In this section, we present the \dm\ model, which leverages the independency between the chosen node set $\mathcal{C}$ and the dropout node set $V-\mathcal{C}$ in each epoch of \drop. 
%Our intuition is to minimize the disagreement of 

In \drop, the network data view provided by the chosen set $\mathcal{C}$ is trained and the view provided by the remaining nodes $V-\mathcal{C}$ at each epoch is completely ignored. 
%On the other hand, t
The idea behind \drop\ aims to train over as many as network views as possible. 
Therefore, in \dm, we propose to simultaneously train two \drop s with complementary inputs---$\mathcal{C}$ and $V-\mathcal{C}$---at every epoch. 
In the  training phase, we improve the two GCNs' prediction ability separately by minimizing the disagreement of these two models. %We call the model \dm.

% half of the information is wasted at each epoch, but 
%However, the samples that would normally be wasted could still provide rich and independent information. Hence we simultaneously train two GCNs with their input information complementary at every epoch. In the  training phase, we improve GCNs prediction ability separately while minimizing the disagreement of these two models. We call the model \dm.

The objective to minimize the disagreement of the two complementary \sdrop at a certain epoch can be obtained by Jensen-Shannon-divergence, that is, 

\begin{equation}
\label{equ:jsloss}
	Loss_{JS} = \frac{1}{2}\sum_{i\in V} \sum_l^{|\mathcal{Y}|} \left(Z'_{i,l} \ln \frac{Z'_{i,l}}{Z''_{i,l}} + Z''_{i,l} \ln \frac{Z''_{i,l}}{Z'_{i,l}}\right)
\end{equation}

\noindent where $Z'$ and $Z''$ are the outputs of these two \drop s in the training phase, respectively. %\todo{xxx, to jz: add notation defintions here; Done}

The final objective of \sdm is

\begin{equation}
\label{equ:loss}
	Loss_{dm} = (Loss_{c1} + Loss_{c2})/2 + \lambda Loss_{JS}
\end{equation}

In the inference phase, the final prediction is made by the two GCNs together, i.e., 

\begin{equation}
\label{equ:inf}
	\hat{Z} = (\hat{Z}^{'} + \hat{Z}{''})/2
\end{equation}

Practically, the correlation of two \drop s and their individual prediction ability jointly affect the performance of \dm. 
The inputs to the two models are complementary  and usually uncorrelated, % (in fact, the relevance is also affected by the co-dependency of features), \sdm outperforms \sdrop in general.
empowering \dm\ with more representation capacity than \drop, which is also evident from the empirical results in Section \ref{sec:exp}.

\subsection{ Graph Dropout for Network Sampling}
%In previous sampling-based GCNs, such as 
In GraphSAGE~\cite{hamilton2017inductive} and FastGCN~\cite{chen2018fastgcn}, 
%graph elements (e.g., 
nodes or edges are sampled in order to accelerate training, avoid overfitting, and make the network data regularly like grids. However in the inference phase, these models may miss information when the sampling result is not representative. 

Different from previous works, % that attempt to converge to GCN, 
our \drop\ and \dm\ models are based on the network redundancy observation and the proposed dropout-like ensemble enable the (implicit) training on exponential sufficient network data views, making them outperform not only existing sampling-based methods, but also the original GCN with the full single-view~\cite{kipf2016semi}. 
This provides 
%result generates 
insights into the network sampling-based GCNs and offers a new way to sample networks for GCNs.
%Supposedly this idea may help previous network sampling-based methods have a new prospective of the effects of network sampling and explore the network information more sufficiently.

In addition, there exist connections between the original dropout mechanism in CNNs and our dropout-like network sampling technique. 
Similarly, both techniques use a discount factor in the inference phase. 
Differently, the dropout operation in the fully connected layers in CNNs results in the missing values of the activation (hidden representation) matrix $H^{(l)}$, while our graph dropout sampling method is applied in network nodes, which removes network information in a structured way. 

Notice that very recently, DropBlock~\cite{ghiasi2018dropblock} suggested that a more structured form of dropout is required for better regularizing CNNs, and from this point of view, our graph dropout based network sampling is more in line with the idea of DropBlock. In \drop\ and \dm, dropping network information, instead of removing values from the activation functions, brings a structured way of removing information from the network. 
}
%Thus it may be more suitable to call our dropout-like sampling graph dropout, though our original motivation is to use network sampling to generate exponential data views, and dropout focuses more on regularization.

%There is also similarity between the ordinary dropout mechanism and our network sampling methods, where in the inference phase both use a discount factor. The application of dropout in previous graph convolutional networks is similar to that in fully connected layers in CNNs, resulting in the missing values of the activation (hidden representation) matrix $H^{(l)}$, while our graph dropout-like sampling method is applied in network nodes, which will remove the network information in a more structured way. Very recently, DropBlock~\cite{ghiasi2018dropblock} suggests that a more structured form of dropout is needed to better regularize CNNs, and from the view of dropout, our dropout-like sampling is more like this kind of structured dropout. In graphs, Removing network nodes, instead of removing values from the activation functions, results in a more structured way of removing information from the network. Thus it may be more suitable to call our dropout-like sampling graph dropout, though our original motivation is to use network sampling to generate exponential data views, and dropout focuses more on regularization.

}%3nd of hide

\hide{
We first introduce the following Lemma with the proof in Appendix A.1. %\wz{appendix}
\begin{lemma}
\label{lemma1}

For any reserve vector $\mathbf{q}^{(n)}$, residue vector $\mathbf{r}^{(n)}$ and random walk transition vector $\mathbf{P}^{n}_s$ ($0 \leq n \leq N$), we have:
\begin{equation}
\small
\mathbf{P}^{n}_s =  (\widetilde{\mathbf{D}}^{-1}\widetilde{\mathbf{A}})^n_s = \mathbf{q}^{(n)} + \sum_{i=1}^{n}  (\mathbf{P}^i)^\mathsf{T} \cdot \mathbf{r}^{(n-i)}
\end{equation}
\hide{For any reserve vector $\mathbf{Q}^{(n)}_s$, residue vector $\mathbf{R}^{(n)}_s$ and random walk transition vector $\mathbf{P}^{n}_s$, $0 \leq n \leq N$:
\begin{equation}
\small
    \mathbf{P}^{n}_s = \mathbf{Q}^{(n)}_s + \sum_{i=0}^{n-1}  (\mathbf{P}^\mathsf{N})^i\mathbf{R}_s^{(n-1-i)}
\end{equation}
}
\end{lemma}
From Lemma~\ref{lemma1} we can derive that 
$
\mathbf{q}^{(n)}_v =  \mathbf{P}^{n}(s,v) - \sum_{i=1}^{n} (\mathbf{P}^{i})^{\mathsf{T}}_v \cdot \mathbf{r}^{(n-i)}%\leq  \mathbf{Q}^{(t)}(s, v) \leq  \mathbf{P}^t(s,v)
$. When the algorithm terminates, the elements of $\mathbf{r}^{(n-i)}$ are usually very small, thus
%\end{equation}
 %is an underestimation of $\mathbf{P}^{t}_s$, 
$\mathbf{q}^{(n)}$ can be seen as an approximation of $\mathbf{P}^n_s$. And consequently $\widetilde{\mathbf{\Pi}}_s=\sum_{n=0}^N w_n\mathbf{q}^{(n)}$ is considered as an approximation of $\mathbf{\Pi}_s$ as returned by Algorithm~\ref{alg:GFPush}.
}

\section{Experiments}
\label{sec:exp}

%In this section, we evaluate \model\  on semi-supervised graph learning benchmark datasets from various perspectives, including performance demonstration, ablation study, and the analyses of robustness, over-smoothing, and generalization. 

\subsection{Experimental Setup}
 \vpara{Baselines.}% To evaluate the effectiveness and scalability of \model, 
 In our experiments, we compare \model\ with five state-of-the-art full-batch GNNs---GCN~\cite{kipf2016semi}, GAT~\cite{Velickovic:17GAT}, APPNP~\cite{klicpera2018predict}, GCNII~\cite{chen2020simple} and GRAND~\cite{feng2020grand}, as well as five representative scalable GNNs---FastGCN~\cite{FastGCN}, GraphSAINT~\cite{zeng2020graphsaint},  SGC~\cite{wu2019simplifying}, GBP~\cite{chen2020scalable} and PPRGo~\cite{bojchevski2020scaling}.
For \model, we implement three variants with different settings for propagation matrix $\mathbf{\Pi}$ (Cf. Equation~\ref{equ:prop}):
\begin{itemize}
\item \model\ (P):  \textit{Truncated ppr matrix} $\mathbf{\Pi}^{\text{ppr}} = \sum_{n=0}^{N}\alpha (1-\alpha)^n \mathbf{P}^{n}$.
\item \model\ (A): \textit{Average pooling matrix} $\mathbf{\Pi}^{\text{avg}} = \sum_{n=0}^{N} \mathbf{P}^{n}/(N+1)$.
\item \model\ (S): \textit{Single order matrix} $ \mathbf{\Pi}^{\text{single}} = \mathbf{P}^{N}$.
\end{itemize}
\vpara{Datasets.} The experimnents are conducted on seven public datasets of different scales, including three widely adopted benchmark graphs---Cora, Citeseer and Pubmed~\cite{yang2016revisiting}, and four relatively large graphs---AMiner-CS~\cite{feng2020grand}, Reddit~\cite{hamilton2017inductive}, Amazon2M~\cite{Chiang2019ClusterGCN} and MAG-scholar-C~\cite{bojchevski2020scaling}. For Cora, Citeseer and Pubmed, we use public data splits~\cite{yang2016revisiting, kipf2016semi, Velickovic:17GAT}. For AMiner-CS, Reddit, Amazon2M and MAG-Scholar-C, we use 20$\times$\#classes nodes for training, 30$\times$\#classes nodes for validation and the remaining nodes for test.
The corresponding statistics are summarized in Table~\ref{tab:dataset}.  More details for the setup and reproducibility can be found in Appendix ~\ref{sec:imp}. 
%For AMiner-CS, Reddit and  MAG-Scholar-C we randomly sample the same number of nodes for each class---20 nodes per class for training and 30 nodes per class for validation. For Amazon2M, we uniformly sample all the training and validation nodes from the whole datasets, as the node counts of some classses are less than 20. 

\begin{table}[t]
	\caption{Dataset statistics.}
	\vspace{-0.05in}
	\small
	\label{tab:dataset}
\begin{tabular}{l|rrrrrr}
		\toprule
		Dataset &  Nodes &  Edges  & Classes & Features \\
		\midrule
		%    Task  & transductive & transductive & transductive & inductive\\
		Cora & 2,708 & 5,429 & 7&1,433 \\ 
		Citeseer & 3,327 & 4,732 & 6 & 3,703 \\
		Pubmed & 19,717 & 44,338& 3 & 500 \\
		AMiner-CS & 593,486 & 6,217,004 & 18 & 100 \\
		Reddit & 232,965 & 11,606,919 & 41& 602 \\
		Amazon2M & 2,449,029 & 61,859,140 & 47 & 100 \\
		MAG-Scholar-C & 10,541,560 & 265,219,994& 8 & 2,784,240 \\
		\bottomrule
	\end{tabular}
	\vspace{-0.1in}
\end{table}

%\vpara{Evaluation Protocol.} For Cora, Citeseer and Pubmed, we use public data splits~\cite{yang2016revisiting, kipf2016semi, Velickovic:17GAT}.
%---20 nodes per class for training, 500 nodes for validation and 1,000 nodes for testing---as adopted by most of literature of GNNs
%For AMiner-CS, Reddit, Amazon2M and MAG-Scholar-C, we use 20$\times$\#classes nodes for training, 30$\times$\#classes nodes for validation and the remaining nodes for test. %For fair comparison, we make careful hyperparameter selections for all the methods on the four datasets (Cf. Appendix~\ref{sec:hyper_selection})\footnote{We will release all codes after the period of double-blind review.}.

%Following previous works~\cite{shchur2018pitfalls,bojchevski2020scaling}, we report the average results of 100 random runs for each method.

%For reproducibility, we provide detailed implementation notes in Appendix \ref{sec:imp}\footnote{We will release all codes upon the end of double-blind review.}.

\subsection{Results on Benchmark Datasets}
\label{sec:overall}
%\vpara{Results on Benchmark Graphs.}
To evaluate the effectiveness of  \model, we compare it with 10 GNN baselines on Cora, Citeseer and Pubmed.
%, the three most popular benchmarks for graph-based semi-supervised learning.
%Table~\ref{tab:small_graph} summarizes the prediction accuracy on 
% Cora, Citeseer and Pubmed, the most 
Following the community convention, the results of baseline models on the three benchmarks are taken from the previous works~\cite{Velickovic:17GAT, chen2020simple,feng2020grand}. 
For \model, we conduct $100$ trials with random seeds and report the average accuracy and the corresponding standard deviation over the trials. The results are demonstrated in Table~\ref{tab:small_graph}.
%As the three datasets are small, all the unlabeled nodes are used for consistency regularization in \model, i.e., $U'=U$. 
It can be observed that the best \model\ variant consistently outperforms all baselines across the three datasets. Notably, \model\ (A) improves upon \grand\ by a margin of 2.3\% (absolute difference) on Pubmed. The improvements of \model\ (P) over \grand\ on Cora  (85.8$\pm$0.4 vs. 85.4$\pm$0.4) and Citeseer (75.6$\pm$0.4 vs. 75.4$\pm$0.4) are also statistically significant (p-value $\ll$ 0.01 by a t-test). These results suggest the strong \textit{generalization performance} achieved by \model.
%When compared to the previous state-of-the-art full-batch GNN---GCNII, the proposed model also achieves 0.2\%, 2.2\% and 4.7\% improvements respectively. More importantly, \model\ is the only scalable GNN method that consistently outperforms GCNII across all datasets.
%which is the state-of-the-art outperformance  Compared with scalable GNN baselines, \model\ achieves 
%The reported accuracies  of \model\ in
%On Cora, Citeseer and Pubmed, we compare \model\ with 15 representative GNNs of different categories, that is:

%\begin{}

\begin{table}
	\small
	\caption{Classification Accuracy (\%) on Benchmarks.}
	\vspace{-0.1in}
      \label{tab:small_graph}
\begin{tabular}{c|l|cccc}
		\toprule%\toprule
		Category &Method &Cora & Citeseer & Pubmed \\
		\midrule
	   	\multirow{5}{*}{{\tabincell{c}{ Full-batch \\ GNNs}} } &GCN   & 81.5 $\pm$ 0.6 & 71.3 $\pm$ 0.4 & 79.1 $\pm$ 0.4  \\

		& GAT & 83.0 $\pm$ 0.7 & 72.5 $\pm$ 0.7 & 79.0 $\pm$ 0.3 \\
		& APPNP & 84.1 $\pm$ 0.3 & 71.6 $\pm$ 0.5 & 79.7 $\pm$ 0.3 \\ 
		%Graph U-Net~\cite{gao2019graph} & 84.4$\pm$0.6 & 73.2$\pm$0.5 & 79.6$\pm$0.2 \\
		%SGC~\cite{wu2019simplifying} & 81.0 $\pm$0.0 & 71.9 $\pm$ 0.1 &78.9 $\pm$ 0.0 \\
		%GMNN~\cite{qu2019gmnn} & 83.7  & 72.9 & 81.8  \\
		%GraphNAS~\cite{gao2019graphnas} & 84.2$\pm$1.0 & 73.1$\pm$0.9 & 79.6$\pm$0.4 \\
		& GCNII &85.5 $\pm$ 0.5 & 73.4 $\pm$ 0.6 & 80.3 $\pm$ 0.4\\
  	    &\grand & 85.4 $\pm$ 0.4 & 75.4 $\pm$ 0.4 & 82.7 $\pm$ 0.6 \\		
		 \midrule
		 
	 	\multirow{5}{*}{{\tabincell{c}{ Scalable \\ GNNs}} } %&GraphSAGE~\cite{hamilton2017inductive}& 78.9 $\pm$ 0.8 & 67.4 $\pm$ 0.7 & 77.8 $\pm$ 0.6 \\
		&FastGCN & 81.4 $\pm$ 0.5 & 68.8 $\pm$ 0.9 & 77.6 $\pm$ 0.5  \\
		& GraphSAINT & 81.3 $\pm$ 0.4 & 70.5 $\pm$ 0.4 & 78.2 $\pm$ 0.8 \\
		& SGC  &81.0 $\pm$ 0.1 & 71.8 $\pm$ 0.1 & 79.0 $\pm$ 0.1\\
		& GBP & 83.9 $\pm$ 0.7  & 72.9 $\pm$ 0.5 & 80.6 $\pm$ 0.4\\
		& PPRGo &82.4 $\pm$ 0.2 & 71.3 $\pm$ 0.3 &80.0 $\pm$ 0.4 \\
		  \midrule
  	\multirow{2}{*}{{\tabincell{c}{ Our \\ Methods}} }  & \model\ (P) & \textbf{85.8 $\pm$ 0.4} & \textbf{75.6 $\pm$ 0.4} & 84.5 $\pm$ 1.1\\
    & \model\ (A) & 85.5 $\pm$ 0.4 & 75.5 $\pm$ 0.4 & \textbf{85.0 $\pm$ 0.6}\\
    & \model\ (S) & 85.0 $\pm$ 0.5 & 74.4 $\pm$ 0.5 & 84.2 $\pm$ 0.6 \\
    
    \bottomrule  %\bottomrule

	\end{tabular}
\end{table}

\hide{
\begin{itemize}
    \item \textbf{GCN}~\cite{kipf2016semi}  uses the propagation rule described in Eq.~\ref{equ:gcn_layer}. %based on a first-order approximation of spectral convolutions on graphs (Cf. Eq.~\ref{equ:gcn_layer}).
    \item \textbf{GAT}~\cite{Velickovic:17GAT} propagates information based on  self-attention.
    \item \textbf{APPNP}~\cite{klicpera2018predict} propagates information with personalized PageRank matrix.% for GNNs.
    \item \textbf{VBAT}~\cite{deng2019batch} applies virtual adversarial training~\cite{miyato2015distributional} into GCNs.% to encourage the model invariant to adversarial attacks.  
%$\text{G}^3$NN~\cite{ma2019flexible} 
    \item \textbf{G$^3$NN}~\cite{ma2019flexible} %proposes a generative framework for semi-supervised learning on graphs.%, which 
    regularizes GNNs with an extra link prediction task.% to perform regularization. 
    %utilizes an additional link prediction task to perform regularization. 
    \item \textbf{GraphMix}~\cite{verma2019graphmix} adopts MixUp~\cite{zhang2017mixup} for regularizing GNNs.
    \item \textbf{DropEdge}~\cite{YuDropedge} randomly drops some edges in GNNs training.
    \item \textbf{GraphSAGE}~\cite{hamilton2017inductive} proposes node-wise neighborhoods sampling.

    \item \textbf{FastGCN}~\cite{FastGCN} using importance sampling for fast GCNs training.
    \item \textbf{\model\_GCN}. Note that the prediction module in \model\ is the simple MLP model. In \model\_GCN, we replace MLP with GCN. 
    \item \textbf{\model\_GAT} replaces the MLP component in \model\ with GAT.
    \item \textbf{\model\_dropout} substitutes our DropNode technique with the dropout operation in \model's random propagation.
\end{itemize}
}

\begin{table}
 	\small
	\caption{Accuracy (\%) and Running Time (s) on Large Graphs.}
	\vspace{-0.05in}
	\label{tab:large_graph2}
\resizebox{1.02\linewidth}{!}{
	 \setlength{\tabcolsep}{0.82mm}{ 
\begin{tabular}{l|cc|cc|cc|cc}
		\toprule
        \multirow{2}{*}{Method}  & \multicolumn{2}{c|}{AMiner-CS} & \multicolumn{2}{c|}{Reddit} & \multicolumn{2}{c|}{Amazon2M} & \multicolumn{2}{c}{MAG.} \\
		& Acc & RT & Acc  & RT & Acc  & RT  & Acc & RT \\
		\midrule

		 %GraphSAGE~\cite{hamilton2017inductive}& 46.1 $\pm$ 2.1 & 88.7 $\pm$ 0.5  &  72.3 $\pm$ 1.0  \\
		 GRAND & 53.1$\pm$1.1 & 750 & OOM& --& OOM& --& OOM& -- \\
		 \midrule
		FastGCN      & 48.9$\pm$1.6       & 69     & 89.6$\pm$0.6        & 158   & 72.9$\pm$1.0 & 239    & 64.3$\pm$5.6 & 4220 \\
		GraphSAINT   & 51.8$\pm$1.3        & 39     & 92.1$\pm$0.5        & 39    & 75.9$\pm$1.3 & 189    & 75.0$\pm$1.7  & 6009 \\
		SGC          & 50.2$\pm$1.2        & 9    & 92.5$\pm$0.2        & 31    & 74.9$\pm$0.5 & 69   & -- & >24h \\
		% sgc 54.3 1.0 16.9
		%& SIGN~\cite{FastGCN}                   & 50.7 $\pm$ 1.6  & 35.0 & 92.8 $\pm$ 0.2 & 80.2 & 76.9 $\pm$ 0.4& 169.0  & \\
		GBP          & 52.7$\pm$1.7        & 21     & 88.7$\pm$1.1        & 370   & 70.1$\pm$0.9 & 280  & -- & >24h \\
		PPRGo        & 51.2$\pm$1.4        & 11   & 91.3$\pm$0.2        & 233   & 67.6$\pm$0.5 & 160   & 72.9$\pm$1.1 & 434 \\
		\midrule

		\model (P) & 53.9$\pm$1.8          & 17   & 93.3$\pm$0.2         & 183 & 75.6$\pm$0.7 & 188 & 77.6$\pm$1.2 &  653 \\
		\model (A) & 54.2$\pm$1.7          & 14  &\textbf{93.5$\pm$0.2} & 174 & 75.9$\pm$0.7 & 136 & \textbf{80.0$\pm$1.1} &  737  \\
		\model (S) & \textbf{54.2$\pm$1.6} & 10  & 92.8$\pm$0.2         & 62  & \textbf{76.2$\pm$0.6} & 80 & 77.8$\pm$0.9 & 483 \\
		\bottomrule
	\end{tabular}
	}
	}
	\vspace{-0.05in}
\end{table}

\hide{
 \begin{table*}
 	\small
	\caption{Results on Large Datasets.}
	\vspace{-0.05in}
	\label{tab:large_graph2}
\begin{tabular}{l|cl|cc|cc|cc|cc}
		\toprule
        \multirow{2}{*}{Method}  & \multicolumn{2}{c|}{AMiner-CS} & \multicolumn{2}{c|}{Reddit} & \multicolumn{2}{c}{Amazon2M} & \multicolumn{2}{c|}{MAG} & \multicolumn{2}{c}{Papers100M}\\
		& Acc (\%) & RT (s) & Acc (\%) & RT (s) & Acc (\%) & RT (s) & Acc (\%) & RT (s) & Acc (\%) & RT (s) \\
		\midrule

		 %GraphSAGE~\cite{hamilton2017inductive}& 46.1 $\pm$ 2.1 & 88.7 $\pm$ 0.5  &  72.3 $\pm$ 1.0  \\
		FastGCN      & 48.9 $\pm$ 1.6        & 9$\times$ (68.7)     & 89.6 $\pm$ 0.6        & 157.6   & 72.9 $\pm$ 1.0 & 239.4    & 64.3 $\pm$ 5.6 & 4219.9  &  &\\
		GraphSAINT   & 51.8 $\pm$ 1.3        & 4$\times$ (39.4)     & 92.1 $\pm$ 0.5        & 39.4    & 75.9 $\pm$ 1.3 & 188.5    & 75.0 $\pm$ 1.7  & 6009.1 & & \\
		SGC          & 50.2 $\pm$ 1.2        & 0.9$\times$ (8.9)    & 92.5 $\pm$ 0.2        & 30.9    & 74.9 $\pm$ 0.5 & 68.5   & -- & -- & 52.6 $\pm$ 0.3 & 2246.1 \\
		% sgc 54.3 1.0 16.9
		%& SIGN~\cite{FastGCN}                   & 50.7 $\pm$ 1.6  & 35.0 & 92.8 $\pm$ 0.2 & 80.2 & 76.9 $\pm$ 0.4& 169.0  & \\
		GBP          & 52.7 $\pm$ 1.7        & 2$\times$ (21.2)     & 88.7 $\pm$ 1.1        & 370.0   & 70.1 $\pm$ 0.9 & 279.8   & -- & -- & &\\
		PPRGo        & 51.2 $\pm$ 1.4        & 1.2$\times$ (11.2)   & 91.3 $\pm$ 0.2        & 233.1   & 67.6 $\pm$ 0.5 & 160.2   & 72.9 $\pm$ 1.1 & 434.3 & 51.4 $\pm$ 0.2 & 631.2\\
		\midrule

		\model (P) & 53.9 $\pm$ 1.8          & 1.7$\times$ (16.7)   & 93.2 $\pm$ 0.1         & 153.4 & 74.7 $\pm$ 0.7 & 207.5 & 77.6 $\pm$ 1.2 &  596.2 & 52.7 $\pm$ 0.3 & 788.1\\
		\model (A) & 54.2 $\pm$ 1.7          & 1.5$\times$ (14.4)   &\textbf{93.4 $\pm$ 0.2} & 144.6 & 75.3 $\pm$ 0.7 & 193.0 & 77.8 $\pm$ 1.1 &  653.4 & 53.1 $\pm$ 0.3 & 760.6 \\
		\model (S) & \textbf{54.2 $\pm$ 1.6} & 1$\times$ (9.6)      & 92.9 $\pm$ 0.2         & 48.4  & \textbf{76.1 $\pm$ 0.6} & 68.7 & 77.8 $\pm$ 0.9 & 482.9 & 52.7 $\pm$ 0.5 & 754.8\\
		\bottomrule
	\end{tabular}
	\vspace{-0.05in}
\end{table*}
}
\hide{
 \begin{table}
 	\scriptsize
	\caption{Results on AMiner-CS, Reddit and Amazon2M$^4$.}
	\vspace{-0.1in}
	\label{tab:large_graph2}
\begin{tabular}{c|cc|cc|cc}
		\toprule
		\multirow{2}{*}{Method}  & \multicolumn{2}{c|}{AMiner-CS} & \multicolumn{2}{c|}{Reddit} & \multicolumn{2}{c}{Amazon2M} \\
		& Acc (\%) & RT (s) & Acc (\%) & RT (s) & Acc (\%) & RT (s) \\
		\midrule

		 GraphSAGE~\cite{hamilton2017inductive}& 46.1 $\pm$ 2.1  &     129.0 & 88.7 $\pm$ 0.5 & 439.7  &  72.3 $\pm$ 1.0 & 2078.2 \\
		FastGCN~\cite{FastGCN}                & 48.9 $\pm$ 1.6  &     68.7 & 89.6 $\pm$ 0.6 & 157.6 & 72.9 $\pm$ 1.0 & 239.4\\
		GraphSAINT~\cite{zeng2019graphsaint}                    & 51.8 $\pm$ 1.3  &39.4      & 92.1 $\pm$ 0.5 & 49.3 & 75.9 $\pm$ 1.3 & 188.5 \\
		SGC~\cite{wu2019simplifying}          & 50.2 $\pm$ 1.2  & 9.6 & 92.5 $\pm$ 0.2 & 30.9 & 74.9 $\pm$ 0.5 & 68.5\\
		% sgc 54.3 1.0 16.9
		%& SIGN~\cite{FastGCN}                   & 50.7 $\pm$ 1.6  & 35.0 & 92.8 $\pm$ 0.2 & 80.2 & 76.9 $\pm$ 0.4& 169.0  & \\
		GBP~\cite{chen2020scalable}           & 52.7 $\pm$ 1.7  & 21.2 & 88.7 $\pm$ 1.1 & 370.0 &  70.1 $\pm$ 0.9 & 279.8\\
		PPRGo~\cite{bojchevski2020scaling}    & 51.2 $\pm$ 1.4  & 11.2 & 91.3 $\pm$ 0.2 & 233.1 & 67.6 $\pm$ 0.5 & 160.2 \\
		\midrule

		\model\ (P) & 53.9 $\pm$ 1.8 & 16.7 & 93.2 $\pm$ 0.1 & 153.4 & 74.7 $\pm$ 0.7  & 207.5 \\
		\model\ (A) & 54.2 $\pm$ 1.7 & 14.4 & \textbf{93.4 $\pm$ 0.2}  & 144.6 &75.3 $\pm$ 0.7& 193.0\\
		\model\ (S) &  \textbf{54.2 $\pm$ 1.6} & 9.6 & 92.9 $\pm$ 0.2  &48.4 & \textbf{76.1 $\pm$ 0.6}  & 68.7 \\
		\bottomrule
	\end{tabular}
	\vspace{-0.1in}
\end{table}
}

\hide{
 \begin{table}
	\small
	\caption{Classification accuracy (\%) on and total running time (s) medium datasets$^4$.}
	\label{tab:large_graph2}
	\setlength{\tabcolsep}{1.mm}\begin{tabular}{c|ccccccc}
		\toprule
		\toprule
		Method &  GraphSAGE~\cite{hamilton2017inductive} &	FastGCN~\cite{FastGCN}  & GraphSAINT~\cite{zeng2019graphsaint}   & SGC~\cite{wu2019simplifying}  & 	GBP~\cite{chen2020scalable} &  PPRGo~\cite{bojchevski2020scaling}  & 	\model (P)\\

		\bottomrule
\bottomrule
\end{tabular}
\vspace{-0.1in}
\end{table}
}
\subsection{Results on Large Graphs}
To justify the scalability of \model, we further compare it with five scalable GNN baselines on four large graphs, i.e., AMiner-CS, Reddit, Amazon2M and MAG-Scholar-C. 
%Note that we do not compare full-batch GNNs and regularization based GNNs, as most of them cannot be executed on the datasets due to the large memory costs. 
%For each baseline, we conduct detailed hyper-parameter tuning on validation set. As for \model\, we fix the size of unlabeled subset $U'$ as 10000 and tune other hyperparameters on validation set. 
%For fair comparison, we conduct careful hyperparameter selection for all methods (Cf. Appendix~\ref{sec:imp}).
Note that the feature dimension of MAG-Scholar-C is huge (i.e., 2.8M features per node). To enable \model to deal with it, a learnable linear layer is added before random propagation to transform the high-dimensional node features to low-dimensional hidden vectors (Cf. Equation~\ref{equ:high}). 
For a fair comparison, we conduct careful hyperparameter selection for all methods (Cf. Appendix~\ref{sec:imp}).
We run each model for 10 trails with random splits, and report its average accuracy and average running time (including preprocessing time, training time and inference time) over the trials. 
%For each method,  and use the hyperparameter configuration with the best  report the average classification accuracy ( and total running time (RT)---including preprocessing time, training time and inference time---of 100 runs for each method. 
The results are summarized in Table~\ref{tab:large_graph2}.
 %The detailed hyper-parameters and running environments are described in Appendix A.1.
%report both classification accuracy and total running time (including preprocessing time, training time and inference time).
%We conduct 100 trails (10 random splits $\times$ 10 runs per split) for each method and report the average classification accuracy (Acc)  and total running time (RT)---including approximation time, training time and inference time---in Table~\ref{tab:large_graph2}.
%The results are summarized in Table~\ref{tab:large_graph2}.
%the parameter configuration with the best classification performance for comparison. The detailed hyperparameter search strategy for each method is described in Appendix~\ref{}. 
%Following~\cite{shchur2018pitfalls}, we conduct 10 random splits and run 10 trials per split (100 trails in total) for each method. 
%In Table~\ref{tab:large_graph}, we report model's average classification accuracy and running time---including preprocessing time, training time and inference time---on AMiner-CS, Reddit, Amazon2M and MAG-Scholar-C.  %(10 random splits and 10 random initializations for each split)%with the best parameter configuration.
%Table~\ref{tab:large_graph} summarizes model average result of 100 runs (10 random splits, 10 random initializations for each split) on each dataset.
%From Table~\ref{tab:}

We interpret the results of Table~\ref{tab:large_graph2} from two perspectives. First, combining with the results in Table~\ref{tab:small_graph}, we notice that the three variants of \model\ exhibit big differences %matrix is an important factor for \model's performance
across these datasets: On Cora and Citeseer,  \model\ (P) achieves better results than \model\ (A) and \model\ (S); On Pubmed,  Reddit and MAG-Scholar-C, \model\ (A) surpasses the other two variants; On AMiner-CS and Amazon2M, \model\ (S) gets the best classification results.
%\model\ (S) surpasses \model\ (P) and \model\ (A) and  on AMiner-CS an Amazon2M which exhibits big differences from the results on Cora, Citeseer, Pubmed and Reddit.
%achieves the best classification accuracy on the three datasets by adopting the optimal propagation matrix on different dataset.
%From Table~\ref{tab:large_graph2}, we notice that with the optimal propagation matrix \model\ achieves the best classification accuracy across all the three datasets. 
%Interestingly, we observe that \model\ (S) surpasses \model\ (P) and \model\ (A) w.r.t both accuracy and  on AMiner-CS an Amazon2M which exhibits big differences from the results on Cora, Citeseer, Pubmed and Reddit. We conjecture the reason is because the original features of AMiner-CS and Amazon2M contain more noise information, so that local information is not helpful to the model's prediction. 
This indicates that the propagation matrix plays a critical role in this task, and further suggests that \model\ could \textit{flexibly} deal with different graphs by adjusting the generalized mixed-order matrix $\mathbf{\Pi}$. 

Second, we observe \model\ consistently surpasses all baseline methods in accuracy and gets efficient running time on the four datasets. Importantly, on the largest graph MAG-Scholar-C, \model\ could succeed in training and making predictions in around 10 minutes, while SGC and GBP require more than 24 hours to finish, because the two methods are designed to directly propagate the high-dimensional raw features in pre-processing step. Compared with FastGCN and GraphSAINT, \model\ (S) achieves 8$\times$ and 12$\times$ acceleration respectively. 
 When compared with PPRGo, the fastest model on this dataset in the past, \model (S) gets 4.9\% improvement in accuracy while with a comparable running time. These results indicate \model\ \textit{scales well} on large graphs and further emphasize its \textit{excellent performance}.

\hide{
\begin{table}
	\small
	\caption{\model\ vs. full-batch GNNs on AMiner-CS.}
		\vspace{-0.1in}
    \label{tab:fullbatch}
  \begin{tabular}{c|ccc}
		\toprule%\toprule
		%\multirow{2}{*}{Method}	& \multicolumn{4}{c|}{Pubmed}  &  \multicolumn{4}{c}{AMiner-CS} \\
	  Method &  Accucay (\%) & Running Time (s) & GPU Memory (MB) \\
		\midrule
		GCN & 49.9$\pm$2.0 & 90 & 3021 \\
		GAT & 49.6$\pm$1.7 & 308  & 5621 \\
		GRAND&  53.1$\pm$1.1 & 750 & 3473 \\
		\midrule
		\model\ (S) & 54.2$\pm$1.6 & 10 & 1059 \\
		\bottomrule
		%\bottomrule
	\end{tabular}
\end{table}
}

We also report the accuracy and running time of GRAND on AMiner-CS. Note that it can not be executed on the other three large datasets due to the out-of-memory error. %As we observed, 
% comparisons between \model\ (S) and three full-batch GNNs (i.e., GCN, GAT and GRAND) on AMiner-CS.  %The experiments are only conducted on AMiner-CS as full-batch GNNs cannot be executed on other larger datasets due to the enormous memory costs. 
%ue to the limited space, 
%We report the  results achieved by \model's three variants.
%Table~\ref{tab:fullbatch} summarizes the results of classification accuracy, running time and occupied GPU memory of each method. 
 %The results of \model\ are produced by the variant with the best classification results on corresponding dataset.
As we can see, \model\ achieves over 40$\times$ acceleration in terms of running time over \grand on AMiner-CS, demonstrating the effectiveness of the proposed approximation techiniques in improving \textit{efficiency}.% \textit{scalability}.
%, and also significantly outperforms GCN and GAT in both efficiency and effectiveness. 
%As for the results on Pubmed,  \model\ has comparable memory cost and per batch running time with GCN and GAT, while takes slightly more running time and batches to converge. %When comparing results on AMiner-CS, we notice that \model\ significantly surpasses GCN and GAT in all indicators.

 %When we compare the results on AMiner-CS, 

%We observe the full-batch GNNs can only preserve good efficiency on Pubmed, when 

\hide{
 \begin{table}
 	\scriptsize
	\caption{Results on MAG-Scholar-C$^4$.}
	\label{tab:mag}
	\vspace{-0.1in}
\begin{tabular}{c|ccccc}
		%\toprule
		\toprule
	 Method  & Acc (\%) & RT (s) & Mem (MB) & BT (ms) & \#Batches \\
		\midrule
		GraphSAGE~\cite{hamilton2017inductive} & - & > 24h & - & - & - \\%3139 & 1113.4 & -\\
		FastGCN~\cite{FastGCN} & 64.3 $\pm$ 5.6 & 4219.9 & 7133 & 1098.2 & 2736\\
		GraphSAINT~\cite{zeng2019graphsaint} & 75.0 $\pm$ 1.7 & 6009.1 & 17897 & 99.4 & 790\\
		 SGC~\cite{wu2019simplifying}   & -  & > 24h & - & - & -\\
		%& SIGN~\cite{FastGCN}                   & 50.7 $\pm$ 1.6  & 35.0 & 92.8 $\pm$ 0.2 & 80.2 & 76.9 $\pm$ 0.4& 169.0  & \\
		GBP~\cite{chen2020scalable}          & - & > 24h & - & - \\
		PPRGo~\cite{bojchevski2020scaling}    &72.9 $\pm$ 1.1 & 434.3& 4037  & 403.6 & 703\\
		\midrule

%		\model (P) & \textbf{76.4 $\pm$ 1.0} & 524.6\\
%		\model (A) & 76.1 $\pm$ 1.0 & 490.3\\
		\model\ (S) \\$T=2, k=32$  &77.8 $\pm$ 0.9 & 489.4 & 2109 &688.9& 496\\
		$T=2, k=16$ & 76.5 $\pm$ 1.5 & 380.3 & 1851 &502.1&448 \\
		\bottomrule
	%	\bottomrule
	\end{tabular}
\vspace{-0.1in}
\end{table}
}

%Compared with PPRGo, GraphSAINT and FastGCN, we observe that \model\ achieves better classification performance, faster running time and lower memory cost when $k=16$. Specifically, \model is 11$\times$ and 15$\times$ faster than FastGCN and GraphSAINT, respectively. When compared with PPRGo, the fastest model on this dataset in the past, \model gets 3.6\% improvement in accuracy while with a comparable running time. 

%As $k$ increases to $32$, the classification accuracy of \model\ increases from 76.5\% to 77.8\% with the running time increasing from 380.3 to 489.4. This suggests that we could control the trade-off between efficiency and effectiveness by adjusting $k$. We will examine this phenomenon in detail in Section~\ref{sec:param}.

%and two settings of maximum neighborhood size: $k$
%order  the results when using single propagation matrix with 
%as the feature propagation is hard to GBP does not support gradients back-propagation. 

%feature propagation procedure in official implementation does not  the propagation results . 

\hide{
\subsection{Ablation Studies.}
We conduct ablation studies to justify the contributions of proposed techniques to \model. Specifically, we examine \model's performance on Cora, Citeseer, Pubmed and MAG-Scholar-C when removing the following designs:
\begin{itemize}
    \item w/o RP and CR: Do not use \textit{random propagation} (RP) and \textit{consistency regularization} (CR) in \model, i.e. $M=1$, $\sigma=0$ and $\lambda=0$, which means \model\ does not perform data augmentation and only uses supervised loss for training like most GNN baselines.
%    \item w/o multiple augmentation: Only conduct once random propagation at each epoch, i.e., M = 1. In this way, consistency loss serves as a self-supervised objective which only enforces model to give highly confident predictions for unlabeled nodes.
%    \item w/o sharpening: Do not adopt sharpening trick (Cf. Equation~\ref{equ:sharp}) in consistency regularization, i.e., $\tau=1$.
    \item w/o confidence: Do not consider prediction confidence in consistency loss (Cf. Equation~\ref{equ:consis}), i.e., $\gamma=0$, meaning that the loss term  is identical to the consistency loss used in \grand.
    \item w/o weight scheduling: Do not use dynamic weight scheduling (Cf. Equation~\ref{equ:sch}) in model training, i.e., $\lambda_{min} = \lambda_{max}$.
\end{itemize}
 Table~\ref{tab:abl} summarizes the results. We observe that all the three techniques contribute to the success of \model. Notably, we observe that the accuracy of \model\ decreases significantly when removing random propagation and consistency regularization, suggesting that the two techniques are critical to the generalization capability of model in semi-supervised setting. 
 We also notice that the newly proposed confidence mechanism and dynamic weight scheduling strategy can consistently improve accuracy, demonstrating their effectiveness in enhancing generalization performance. 

\begin{table}
	\scriptsize
	\caption{Ablation study results (\%).}
	\vspace{-0.1in}
	\label{tab:abl}
\begin{tabular}{lcccc}
			%	\toprule
		\toprule
		Model &Cora & Citeseer & Pubmed & MAG-Scholar-C\\
		\midrule
		%    Task  & transductive & transductive & transductive & inductive\\
		\model  & \textbf{85.7 $\pm$ 0.4} & \textbf{75.6 $\pm$ 0.4} & \textbf{85.0 $\pm$ 0.6}  & \textbf{77.8 $\pm$ 0.9}\\
		\midrule
		\quad w/o RP and CR & 83.7 $\pm$ 0.6  & 70.4 $\pm$ 0.9 & 78.9 $\pm$ 0.9  & 76.3 $\pm$ 0.8\\ %79.7 $\pm$0.3 \\
	%	\quad w/o multiple augmentation &85.1 $\pm$ 0.5 & 74.9 $\pm$ 0.5 & 83.4 $\pm$ 1.4  & 77.6 $\pm$ 1.0\\
	%	\quad w/o sharpening & 83.9 $\pm$ 0.6  & 72.4 $\pm$ 0.8 & 82.1 $\pm$ 0.8 & 77.4 $\pm$ 1.2\\%79.7$\pm$0.4   
		\quad w/o confidence& 85.4 $\pm$ 0.4  & 75.0 $\pm$ 0.5 & 84.2 $\pm$ 0.7 & 77.6 $\pm$ 0.9 \\%79.7$\pm$0.4   \\
		\quad w/o weight scheduling & 85.5 $\pm$ 0.5     & 75.2 $\pm$ 0.8  & 84.4 $\pm$ 0.6   & 71.9 $\pm$ 2.9\\ 
		%\quad replace dropnode with dropout & 84.9 $\pm$ 0.4  & 75.0 $\pm$ 0.3 & 81.7 $\pm$ 1.0 \\
		\bottomrule
		%	\bottomrule
	\end{tabular}
	\vspace{-0.1in}
\end{table}
\begin{figure}[t]
	\centering
	\mbox
	{
			 \hspace{-0.2in}
		\begin{subfigure}[Cora]{
				\centering
				\includegraphics[width = 0.27 \linewidth]{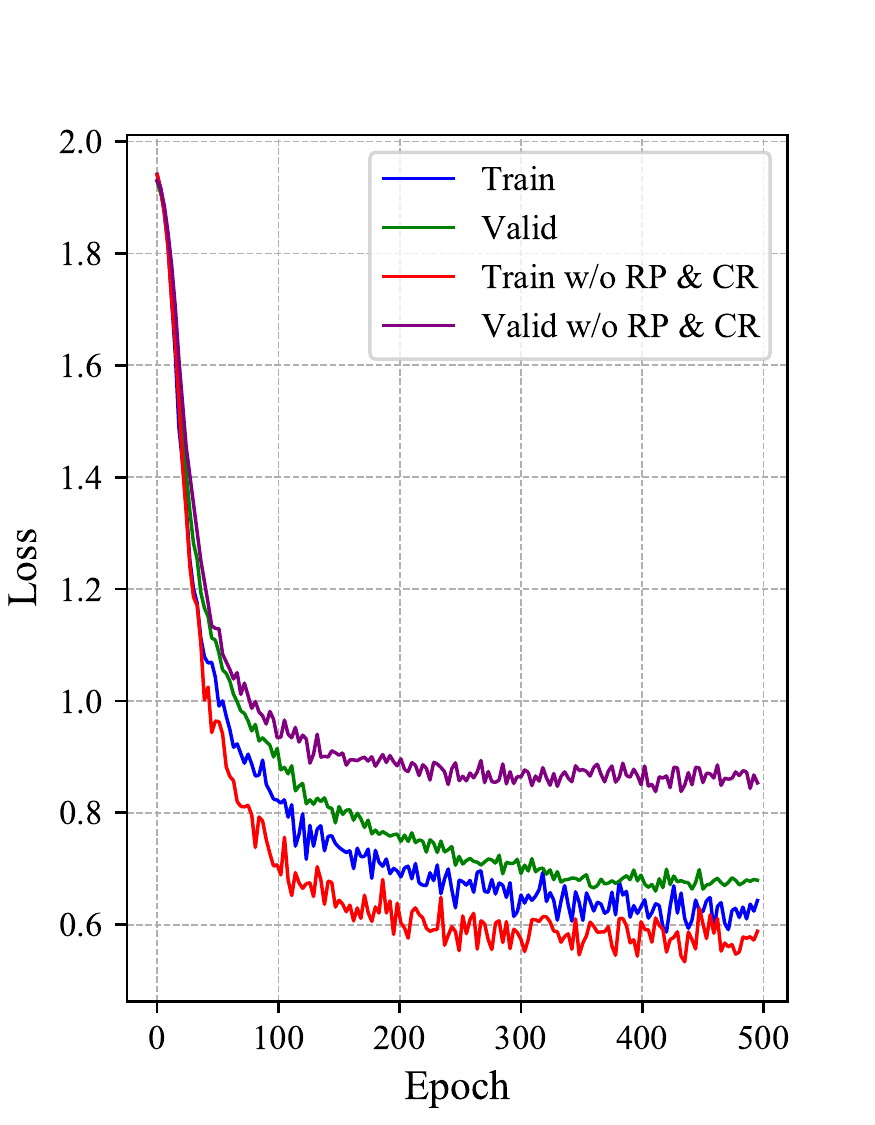}
			}
		\end{subfigure}
		%\hfill
			 %\hspace{-0.15in}
		\begin{subfigure}[Citeseer]{
				\centering
				\includegraphics[width = 0.27 \linewidth]{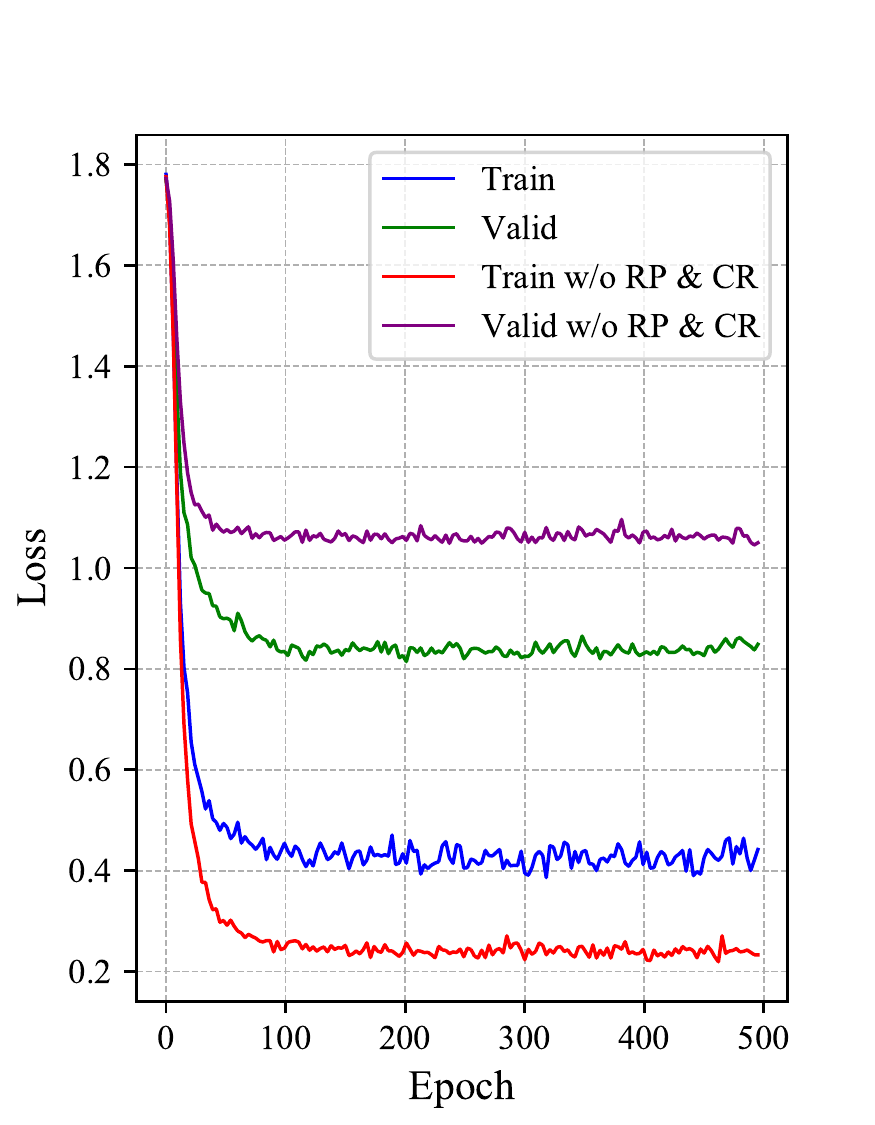}
			}
		\end{subfigure}
	 %\hspace{-0.15in}
		\begin{subfigure}[Pubmed]{
				\centering
				\includegraphics[width = 0.27 \linewidth]{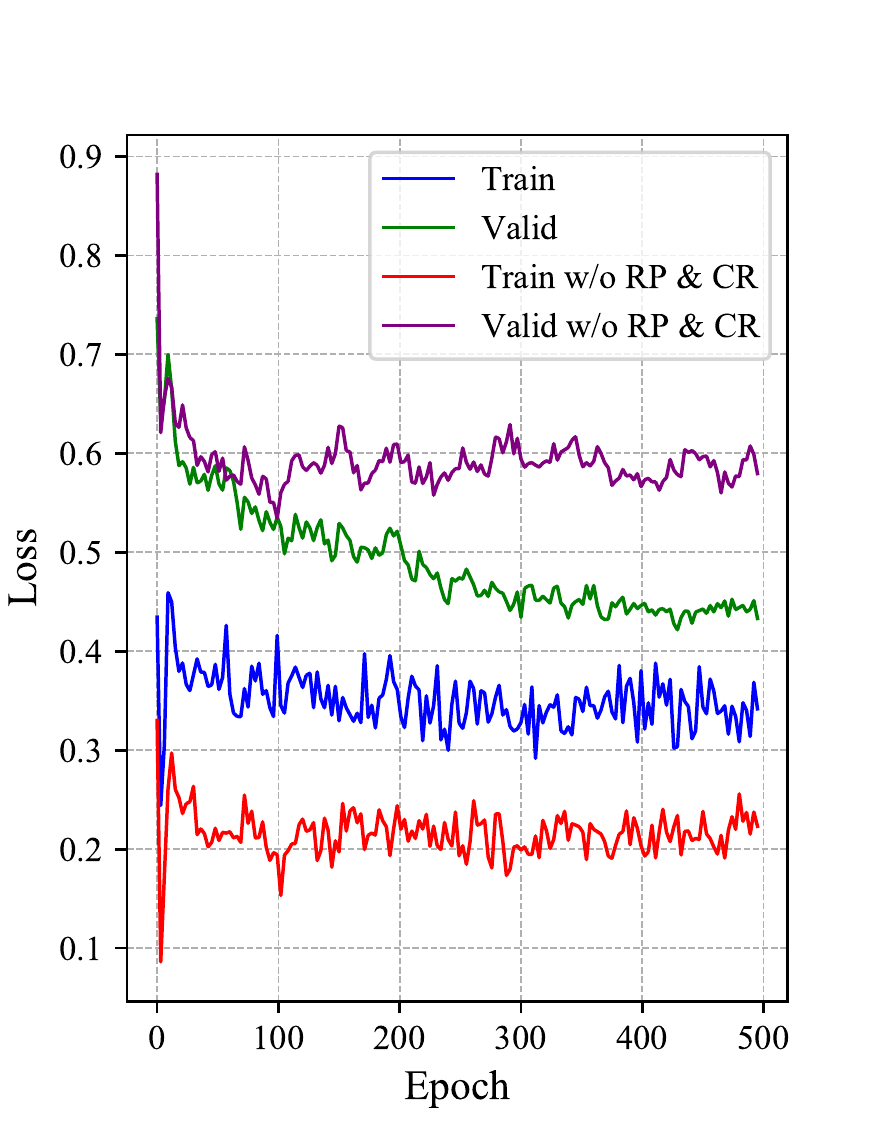}
			}
		\end{subfigure}
	}
	%\hspace{-0.4in}
		\vspace{-0.15in}
	\caption{Generalization: Training and validation losses. 
	%The $y$-axis of each figure has the same range.
	}
	\vspace{-0.2in}
	\label{fig:loss}
\end{figure}
\vpara{Generalization Improvements.} To explore why the random propagation and consistency regularization could improve model's generalization performance, we analyze model's training loss and validation loss when using and not using the two techniques. The results on Cora, Citeseer and Pubmed are shown in Figure~\ref{fig:loss}. As we can see, the adopted random propagation and consistency regularization techniques could significantly reduce the gap between training loss and validation loss across the three datasets, thus alleviate the overfitting problem and improve generalization capability.
}

\subsection{Generalization Improvements}
\begin{figure}[t]
	\centering
	\mbox
	{
    	\hspace{-0.1in}
		%\hfill
		\begin{subfigure}[Accuracy w.r.t. $\lambda_{max}$.]{
				\centering
				\includegraphics[width = 0.48 \linewidth]{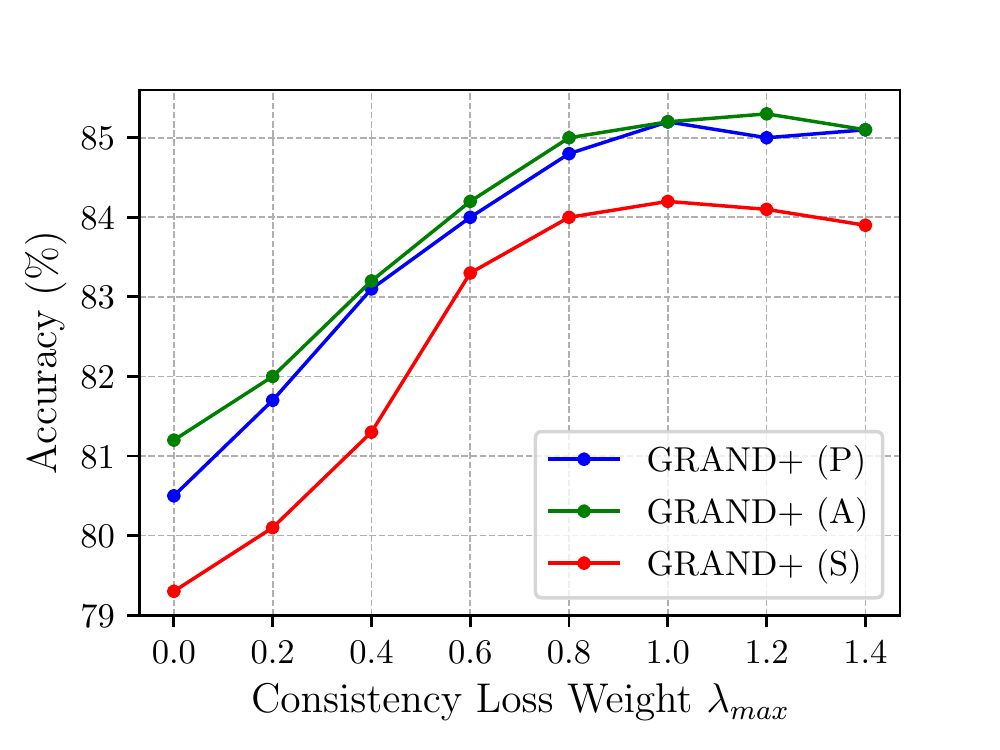}
			}
		\end{subfigure}
		\hspace{-0.1in}
		\begin{subfigure}[Accuracy w.r.t. $\gamma$]{
				\centering
				\includegraphics[width = 0.48 \linewidth]{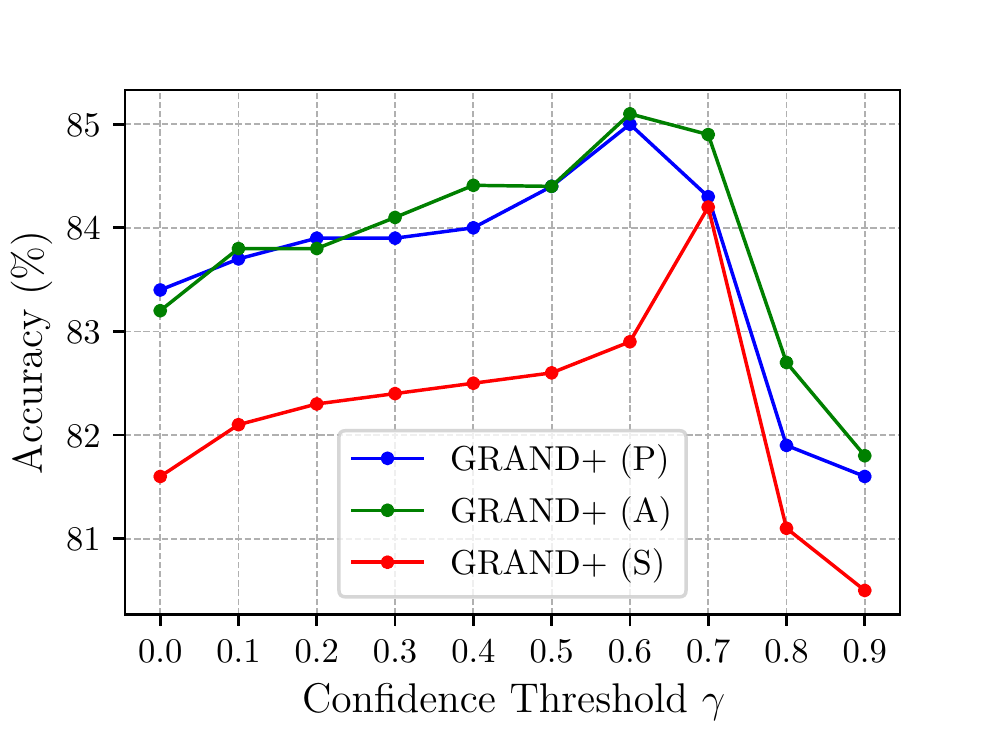}
			}
		\end{subfigure}
	}
	\vspace{-0.1in}
	\caption{Effects of $\lambda_{max}$ and $\gamma$ on Pubmed.}
	\label{fig:lam_conf}
	\vspace{-0.1in}
\end{figure}

In this section, we quantitatively investigate the benefits of the proposed confidence-aware consistency loss $\mathcal{L}_{con}$ to model's generalization capability. In \model, $\mathcal{L}_{con}$ is mainly dominated by two hyperparameters: \textit{confidence threshold} $\gamma$ (Cf. Equation~\ref{equ:consis}) and \textit{maximum consistency loss weight} $\lambda_{max}$ (Cf. Equation~\ref{equ:total_loss}). 

We first analyze the effects of $\gamma$ and $\lambda_{max}$ on \model's classification performance. Specifically, we adjust the values of $\gamma$ and $\lambda_{max}$ separately with other hyperparameters fixed, and observe how \model's accuracy changes on test set. Figure~\ref{fig:lam_conf} illustrates the results on Pubmed dataset. From Figure~\ref{fig:lam_conf} (a), it can be seen that the accuracy is significantly improved as $\lambda_{max}$ increases from 0 to 0.8. When $\lambda_{max}$ is greater than 0.8, the accuracy tends to be stable. This indicates that the consistency loss could really contribute to \model's performance. From Figure~\ref{fig:lam_conf} (b), we observe model's performance benefits from the enlargement of $\gamma$ when $\gamma$ is less than 0.7, which highlights the significance of the confidence mechanism. If $\gamma$ is set too large (i.e., $> 0.7$), the performance will degrade because too much unlabeled samples are ignored in this case, weakening the effects of consistency regularization.

\begin{figure}[t]
	\centering
	\mbox
	{
			 \hspace{-0.05in}
		\begin{subfigure}[$\lambda_{max}=0.0$]{
				\centering
				\includegraphics[width = 0.32 \linewidth]{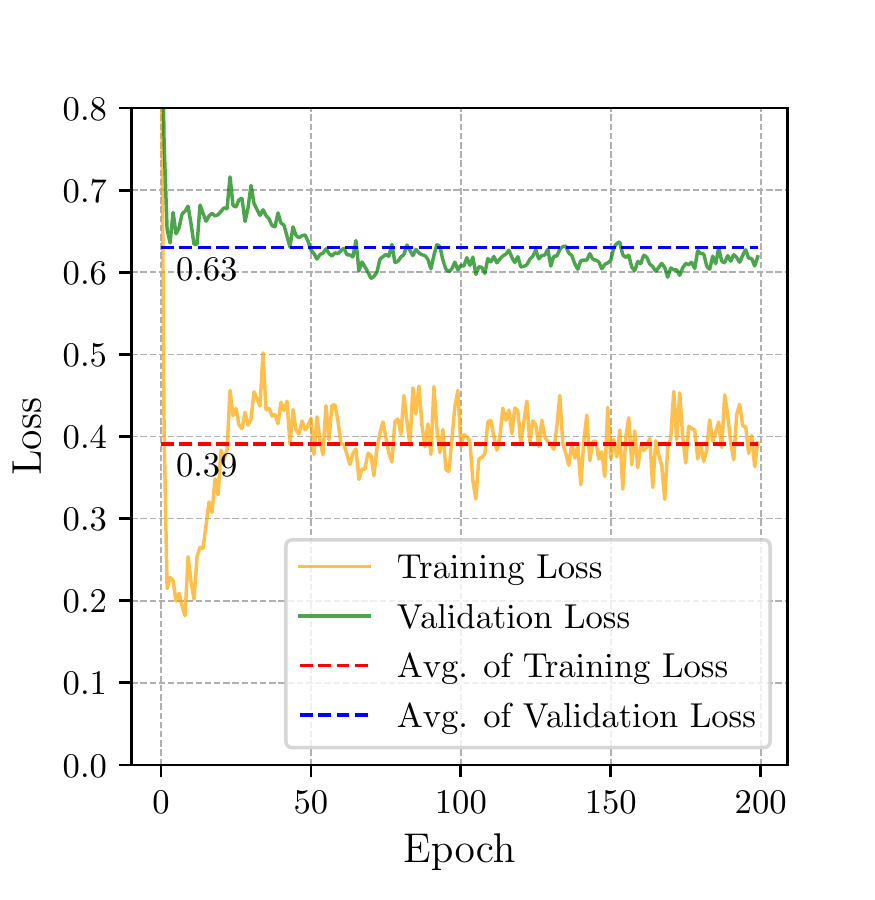}
			}
		\end{subfigure}
		%\hfill
			\hspace{-0.05in}
		\begin{subfigure}[$\lambda_{max}=1.0$, $\gamma=0.0$]{
				\centering
				\includegraphics[width = 0.32 \linewidth]{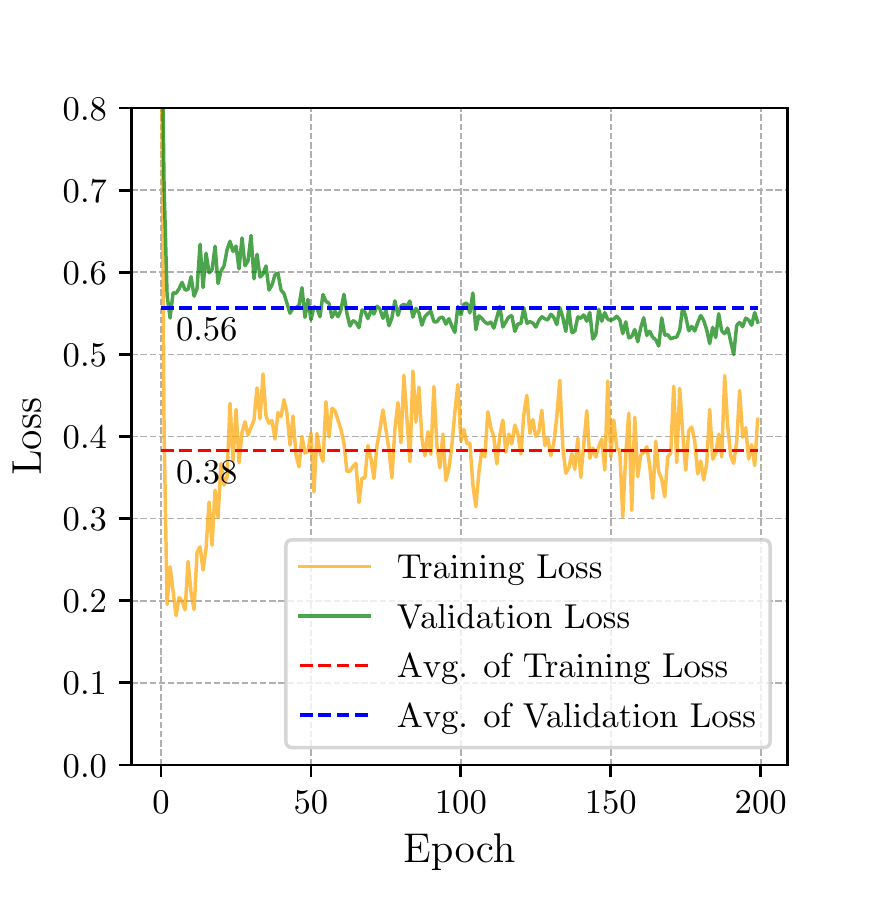}
			}
		\end{subfigure}
	 \hspace{-0.05in}
		\begin{subfigure}[$\lambda_{max}=1.0$, $\gamma=0.6$]{
				\centering
				\includegraphics[width = 0.32 \linewidth]{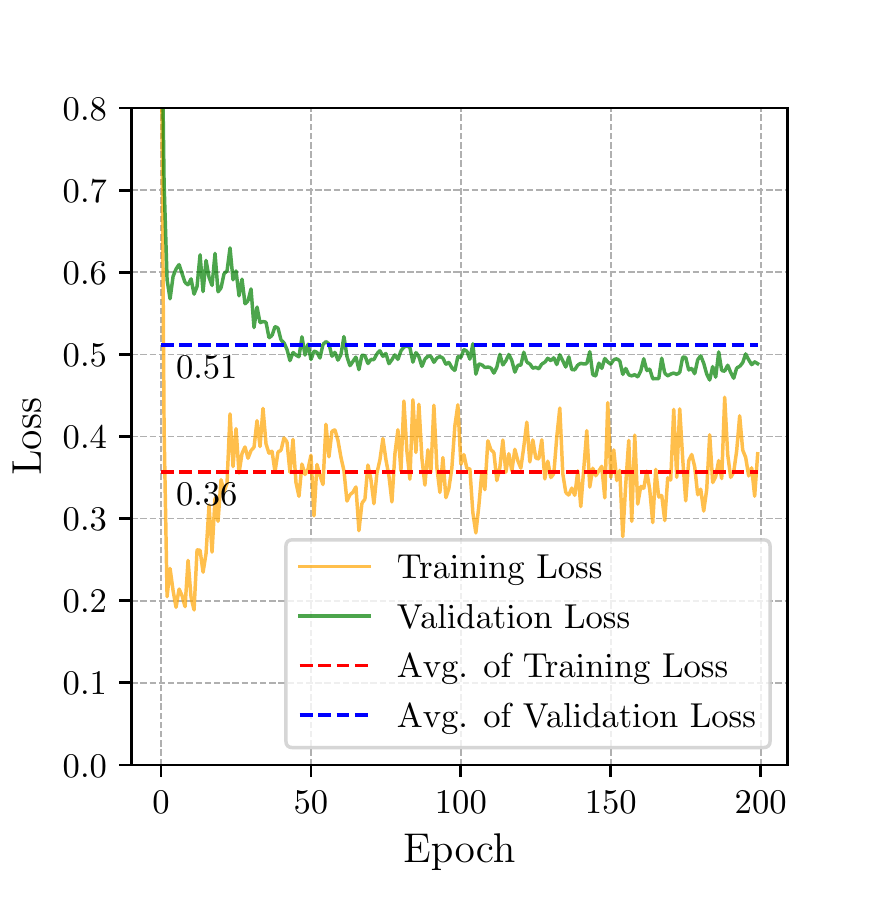}
			}
		\end{subfigure}
	}
	%\hspace{-0.4in}
		\vspace{-0.05in}
	\caption{Training and Validation Losses on Pubmed.
	%The $y$-axis of each figure has the same range.
	}
	\vspace{-0.1in}
	\label{fig:loss}
\end{figure}

Figure~\ref{fig:lam_conf} demonstrates the confidence-aware consistency loss could significantly improve model's performance. We further study its benefits to generalization capability by analysing the cross-entropy losses on training set and validation set. Here we measure model's generalization capability with its \textit{generalization gap}~\cite{keskar2016large}---\textit{the gap between training loss and validation loss}. A smaller generalization gap means the model has a better generalization capability.
Figure~\ref{fig:loss} reports the training and validation losses of \model\ (A) on Pubmed. We can observe the generalization gap is rather large when we do not use consistency loss ($\lambda_{max} = 0$) during training, indicating a severe over-fitting issue. And the gap becomes smaller when we change $\lambda_{max}$ to $1.0$. When we set both $\lambda_{max}$ and  $\gamma$ to proper values (i.e., $\lambda_{max}=1.0$, $\gamma=0.6$), the generalization gap further decreases. These observations demonstrate the proposed consistency training and confidence mechanism indeed contribute to \model's generalization capability. 
%$\lambda_{max}$ and $\gamma$ result in similar trends for performance: The accuracy first increases to a peak as the parameter increases from 0 and then decreases when it is too large. Thus we can conclude that the confidence-aware consistency loss could significantly improve model's performance when $\lambda_{max}$ and $\gamma$ are set to proper values. If $\lambda_{max}$ is set too large

\subsection{Parameter Analysis}
\label{sec:param}

\hide{
 \begin{table}
 	\scriptsize
	\caption{Effects of unlabeled subset size ($|U'|$).}
	\vspace{-0.1in}
	\label{tab:unlabelsize}
	\setlength{\tabcolsep}{1.mm}\begin{tabular}{c|ccc|ccc|ccc}
% 		\toprule
		\toprule
		\multirow{2}{*}{|U'|}  & \multicolumn{3}{c|}{AMiner-CS} & \multicolumn{3}{c|}{Reddit} & \multicolumn{3}{c}{Amazon2M} \\
		& Acc (\%) & RT (s) & PT (ms) & Acc (\%) & RT (s) & AT (ms) & Acc (\%) & RT (s) & AT (ms) \\
		\midrule
		 0 & 51.1 $\pm$ 1.4 & 9.8& 149.2 & 92.3 $\pm$ 0.2 & 53.0 & 717.3 & 75.0 $\pm$ 0.7 & 63.2 & 2356.2 \\
%		 100& 52.2 $\pm$ 1.7 & 8.1 & 151.4 & 92.4 $\pm$ 0.3 & 56.9 & 725.6  &  75.1 $\pm$ 0.7 & 57.5 & 2454.5 \\
         $10^3$ & 53.6 $\pm$ 1.6  & 9.0 & 153.3 & 92.6 $\pm$ 1.2 & 58.0 & 881.6 & 75.2 $\pm$ 0.5 & 61.6 &  2630.4\\       
         $10^4$ & 54.2 $\pm$ 1.6  & 9.6 & 250.4 & 92.9 $\pm$ 0.2 & 59.4 & 2406.6 &  76.1 $\pm$ 0.6 & 68.7 & 3648.6 \\
	    $10^5$ & 54.4 $\pm$ 1.2  & 12.6 & 1121.1 &  92.9 $\pm$ 0.2 & 75.7 & 17670.1 & 76.3 $\pm$ 0.7 & 85.5 & 14249.8 \\
% 		\bottomrule
		\bottomrule
	\end{tabular}
	\vspace{-0.1in}
\end{table}

\vpara{Size of Unlabeled Subset $U'$.} In order to perform consistency regularization, \model\ samples a unlabeled subset $U'$ from  $U$ and conducts approximations for all  row vectors corresponding to nodes in $U'$.
%order to perform consistency regularization, we need to sample a subset of unlabeled nodes $U'$ from $U$ and approximate the transition vectors for nodes in $U'$ before training. 
Here we analyze the effects of the size of $U'$ by setting $|U'|$ as different values from 0 to $10^5$.
The results are presented in able~\ref{tab:unlabelsize}, where we report the classification accuracy (Acc), total running time (RT) and approximation time (AT) of \model\ (S) on AMiner-CS, Reddit and Amazon2M under different settings of $|U'|$.
%The results are presented in Table~\ref{tab:unlabelsize}. 
It can be seen that when the number of unlabeled nodes increases from 0 to $10^3$, the classification results of the model are improved significantly with a comparable total running time, which indicates that the consistency regularization serves as an economic way for improving \model's generalization performance on unlabeled samples. However, when $|U'|$ exceeds $10^4$, the increase rate of classification accuracy will slow down, and the total running time and approximation time increase dramatically. This suggests that we can choose an appropriate value for $|U'|$ to balance the trade-off between effectiveness and efficiency of \model\ in practice.
}
%become larger and larger.

 %The running time and preprocessing time also 
\vpara{Threshold $r_{max}$ and Neighborhood Size $k$.} \model\ uses GFPush and top-$k$ sparsification to approximate multiple row vectors of $\mathbf{\Pi}$ to perform mini-batch random propagation (Cf. Section~\ref{sec:rand_prop}). The approximation error of this process is mainly influenced by two hyperparameters---threshold $r_{max}$ of GFPush and maximum neighborhood size $k$ for sparsification. We conduct detailed experiments to better understand the effects of $k$ and $r_{max}$ on model's accuracy and running time. Figure~\ref{fig:k_r_max} illustrates the corresponding results of \model\ (S) w.r.t. different values of $k$ and $r_{max}$ on MAG-Scholar-C.
%Figure~\ref{fig:param} (a) presents the classification accuracy of \model\ (S) w.r.t. different values of $k$ and $r_{max}$. Table~\ref{tab:k} shows the corresponding total running time and the average of the summation of approximated transition vector's top-$k$ elements,i.e., $\sum_{v\in {U' \cup L}} \sum\widetilde{\mathbf{\Pi}}^{(k)}_v/(|U'|+|L|)$, which reflects the average probability mass preserved in $\widetilde{\Pi}_v^{(k)}$. The results show that $r_{max}$ and $k$ influence much on both running time and performance.
As we can see, both the accuracy and running time increase when $r_{max}$ becomes smaller, which is coincident with the conclusion of Theorem~\ref{thm1}. While $k$ has an opposite effect---the accuracy and running time are enlarged with the increase of $k$.  
%more probability masses will be preserved in $\mathbf{\Pi}^{(k)}$ on average, so as to improve the classification accuracy, and total running time also increases. 
Interestingly, as $k$ decreases from $128$ to $32$, the running time is cut in half with only $\sim2\%$ performance drop in accuracy. This demonstrates the effectiveness of the top-$k$ sparsification strategy, which could achieve significant acceleration at little cost of accuracy.

%which indicates that we are able to improve 

%In the training period of \model, the sparsified transition vector $\widetilde{\Pi}_s^{(k)}$ is used for random propagation, which is mainly influenced by two hyperparameters---threshold $r_{max}$ and maximum neighborhood size $k$.

%In GFPush, the threshold $r_{max}$ is used to control approximation precision the  By adopting the top-$k$ sparsification strategy, \model\ is enabled to use the hyperparameter $k$ to control the maximum neighborhood size of each node used in training. 
%To better understand the effects of $k$ and $r_{max}$, we conduct detailed experiments for \model\ (S) on MAG-Scholar-C. Figure~\ref{fig:param} (a) presents the classification accuracy of \model\ (S) w.r.t. different values of $k$ and $r_{max}$. 

%the impact of $k$,

\vpara{Propagation Order $N$.} We study the influence of propagation order $N$ on \model\ when using different propagation matrices. Figure~\ref{fig:T} presents the classification performance and running time of three \model\ variants on MAG-Scholar-C w.r.t. different values of $N$. As we can see, when $N=2$, \model\ (S) achieves better accuracy and faster running time than \model\ (P) and \model\ (A). However, as $N$ increases, the accuracy of \model\ (S) drops dramatically because of the over-smoothing issue, while \model\ (P) and \model\ (A) do not suffer from this problem and benefit from a larger propagation order. On the other hand, 
increasing $N$ will enlarge models' running time. In real applications, we can flexibly adjust the propagation matrix and the value of $N$ to make desired efficiency and effectiveness.

\begin{figure}[t]
	\centering
	\mbox
	{
    	\hspace{-0.1in}
		%\hfill
		\begin{subfigure}[Classification accuracy.]{
				\centering
				\includegraphics[width = 0.49 \linewidth]{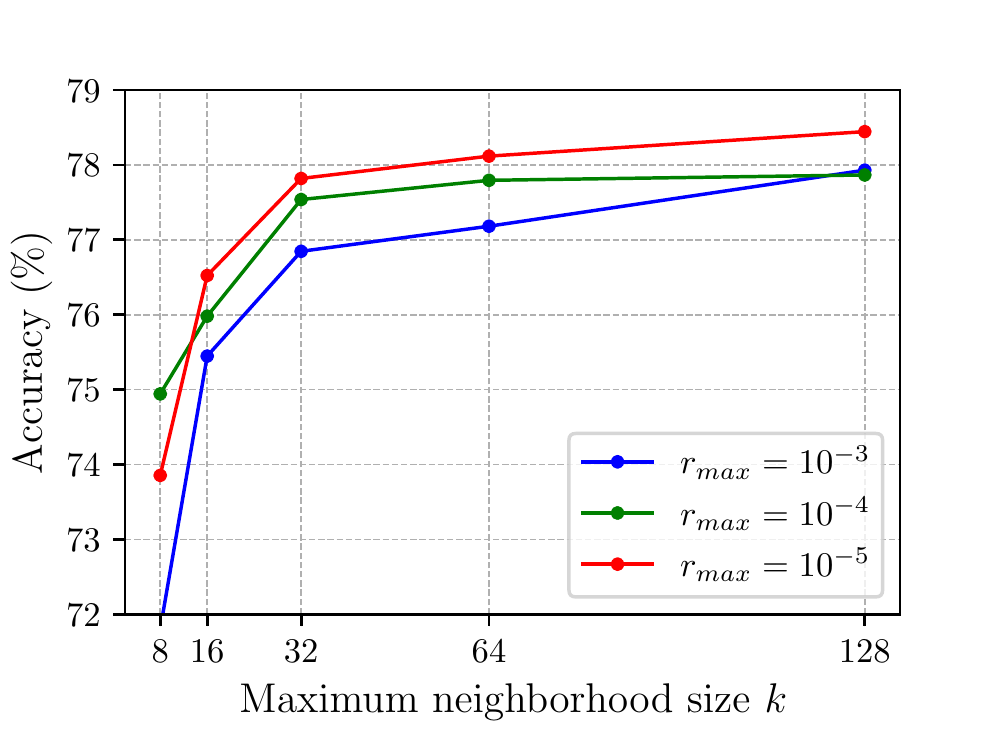}
			}
		\end{subfigure}
		\hspace{-0.1in}
		\begin{subfigure}[Running time.]{
				\centering
				\includegraphics[width = 0.49 \linewidth]{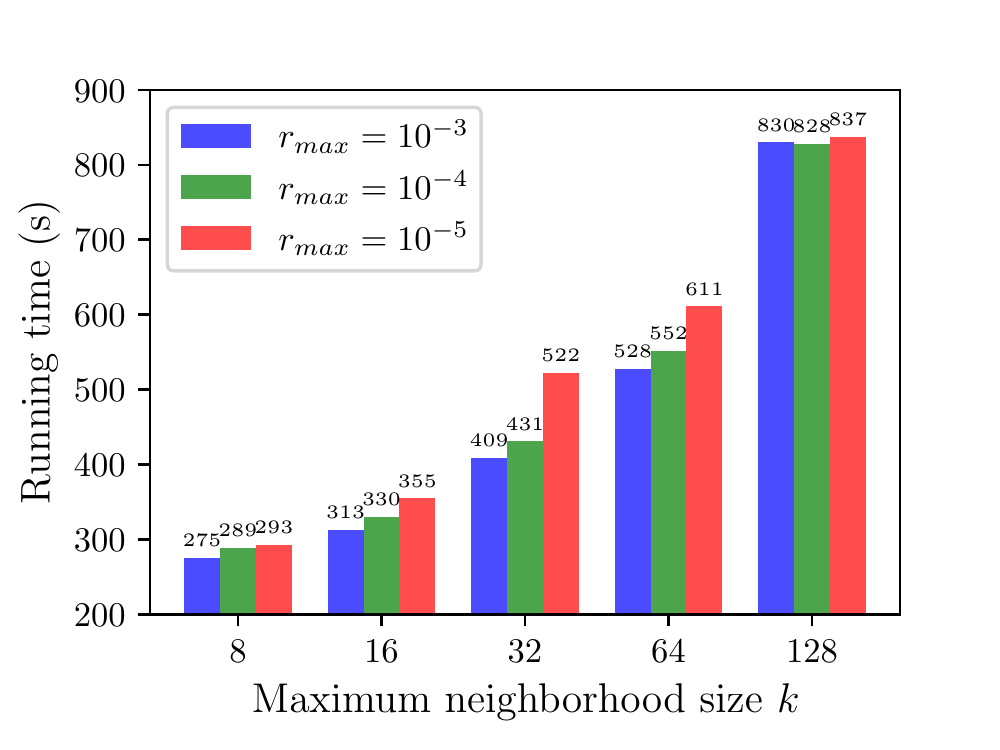}
			}
		\end{subfigure}
	}
	\vspace{-0.1in}
	\caption{\model\ w.r.t. $k$ and $r_{max}$ on MAG-Scholar-C.}
	\label{fig:k_r_max}
	\vspace{-0.1in}
\end{figure}

\begin{figure}[t]
	\centering
	\mbox
	{
    	\hspace{-0.1in}
		%\hfill
		\begin{subfigure}[Classification accuracy.]{
				\centering
				\includegraphics[width = 0.49 \linewidth]{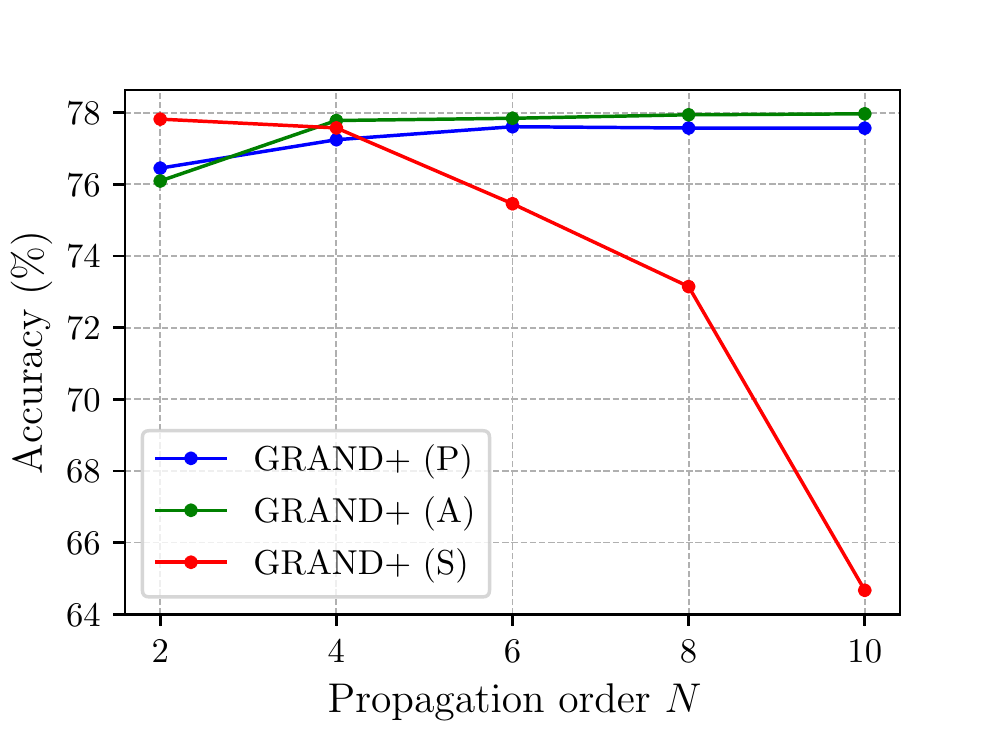}
			}
		\end{subfigure}
		\hspace{-0.1in}
		\begin{subfigure}[Running time.]{
				\centering
				\includegraphics[width = 0.49 \linewidth]{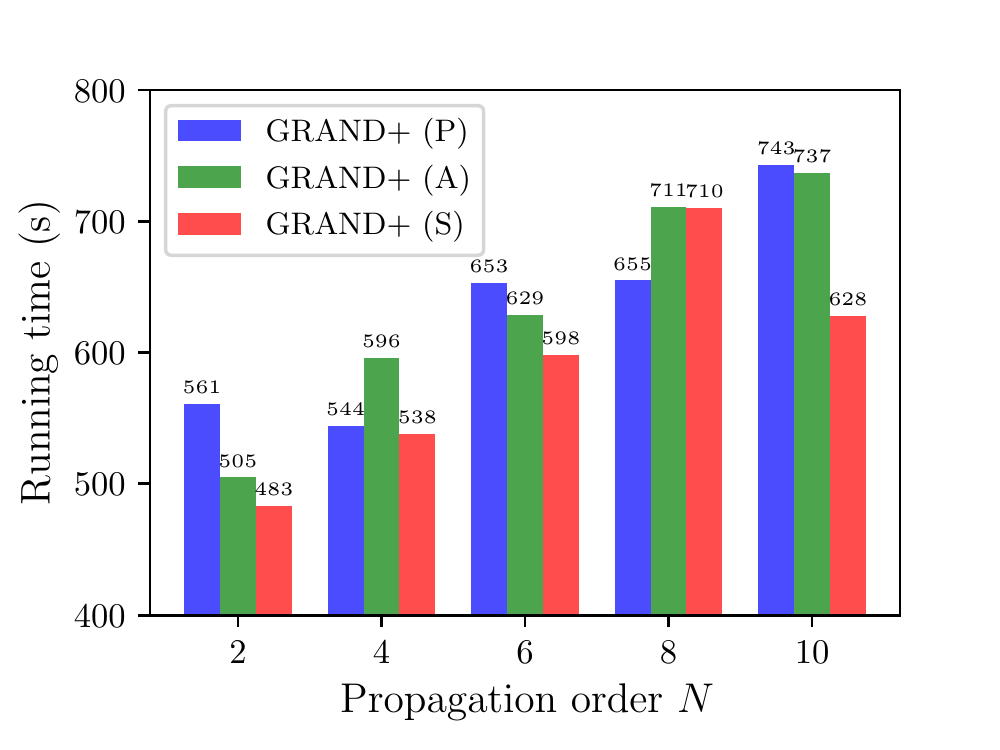}
			}
		\end{subfigure}
	}
	\vspace{-0.1in}
	\caption{Effects of propagation order $N$ on MAG-Scholar-C.}
	\label{fig:T}
	\vspace{-0.1in}
\end{figure}

\hide{

 \begin{table}
 	\scriptsize
	\caption{Efficiency w.r.t. $r_{max}$ and $k$.}
	\vspace{-0.1in}
	\label{tab:k}
	\setlength{\tabcolsep}{1.4mm}\begin{tabular}{c|ccccc|ccccc}
		\toprule
% 		\toprule
		& \multicolumn{5}{c|}{Total running time (s)} & \multicolumn{5}{c}{Sum of top-$k$ elements} \\
	$k$  & 8 & 16 & 32 & 64 & 128 & 8 & 16 & 32 & 64 & 128\\
		\midrule
    $r_{max} = 10^{-3}$ & 275.4 & 313.2 & 408.8 & 528.1 & 829.8 & 0.189 & 0.261 & 0.349 & 0.450 & 0.555 \\
    $r_{max} = 10^{-4}$ & 289.3 & 329.7 & 431.2 & 551.5 & 828.2 & 0.204 & 0.285 & 0.384 & 0.502 & 0.632 \\
    $r_{max} = 10^{-5}$ & 293.1 & 355.3 & 521.7 & 611.2 & 837.2 & 0.205 & 0.285 & 0.385 & 0.503 & 0.634 \\
		\bottomrule
% 		\bottomrule
	\end{tabular}
	\vspace{-0.1in}
\end{table}

 \begin{table}
 	\scriptsize
	\caption{Efficiency w.r.t. propagation order $T$.}
	\vspace{-0.1in}
	\label{tab:prop}
	\setlength{\tabcolsep}{1.mm}\begin{tabular}{c|ccccc|ccccc}
% 		\toprule
		\toprule
		& \multicolumn{5}{c|}{Total running time (s)} & \multicolumn{5}{c}{Preprocessing time (ms)} \\
	$T$  & 2 & 4 & 6 & 8 & 10 & 2 & 4 & 6 & 8 & 10\\
		\midrule
     \model\ (P) & 561.2 & 543.8 & 653.4 & 654.8 & 743.0 & 3733.4 & 3672.5 & 5940.4 & 5791.5 & 6125.2 \\
      \model\ (A) &505.1 & 596.2 & 629.0 & 711.3 & 737.0 & 3946.8 & 3811.0 & 5808.8 & 5921.9 & 6099.8 \\
 \model\ (S) & 482.9 & 538.3 & 597.9 & 710.2 & 628.2 & 3857.5 & 5328.8 & 6258.2 & 7127.3 & 7325.5 \\
		\bottomrule
% 		\bottomrule
	\end{tabular}
	\vspace{-0.1in}
\end{table}

}

\section{Conclusion}
\label{sec:conclusion}
We propose \model, a scalable and high-performance GNN framework for graph-based semi-supervised learning.
The advantages of \model include both the scalability and generalization capability while the existing state-of-the-art solutions typically feature only one of the two. 
To this effect, we follow the consistency regularization principle of \grand in achieving the generalization performance, while significantly extend it to achieve scalability and retain and even exceed the flexibility and generalization capability of \grand. 
To achieve these, \model\ utilizes a generalized mixed-order matrix for feature propagation, and uses our  approximation method generalized forward push (GFPush) to calculate it efficiently. 
In addition, \model adopts a new confidence-aware consistency loss to achieve better consistency training. Extensive experiments show that \model not only gets the best performance on benchmark datasets, but also achieves  performance and efficiency superiority over existing scalable GNNs on datasets with millions of nodes.
In the future, we would like to explore more accurate approximation methods to accelerate GNNs. %We are also interested in exploring the potential of \model\ for active learning on large graphs.
%, and conduct detailed ablation studies and parameter analysis to better understand the effects of proposed techniques.

%\clearpage

\hide{\section{Conclusions}
\label{sec:conclusion}
% In this work, we study graph-based semi-supervised learning with both scalability and generalization capability. 
We propose \model, a scalable and high-performance GNN framework for graph-based semi-supervised learning.
\model\ follows the idea of Graph Random Neural Network (\grand), while significantly improves it in terms of flexibility, scalability and generalization capability.
%while significantly improves its defects with several novel techniques. 
To achieve that, \model\ utilizes a generalized mixed-order matrix for feature propagation, and uses a novel approximation method called GFPush to calculate it efficiently. \model also adopts a new confidence-aware consistency loss to achieve better consistency training. Extensive experiments show that \model not only gets the best performance on benchmarks, but also achieves better performance and efficiency over existing scalable GNNs on realistic dataset with millions of nodes. %We also conduct detailed analyses to better understand the effects of proposed techniques on generalization improvement and .
%We demonstrate the advantages of \model\ on seven public graph datasets of different scales and categories. The results show that \model\ owns flexibility, scalability and better generalization capability over \grand\ and other generic GNNs. 

In the future, we would like to explore more accurate approximation methods to accelerate GNNs. We are also interested in exploring the potential of \model\ for active learning on graphs. 
}

\hide{
In this work, we propose a highly effective method for for semi-supervised learning on graphs. 
%propose a general framework for semi-supervised learning on graphs. 
Unlike previous models following a recursive deterministic aggregation process, we explore graph-structured data in a random way and the random propagation method can generate data augmentations efficiently and achieve a model by implicit ensemble with a much lower generalization loss. In each epoch, we utilize consistency regularization to minimize the differences of prediction distributions among multiple augmented unlabeled nodes. We further reveal the effects of random propagation and consistency regularization in theory. 
\model\ achieves the best results on several benchmark datasets of semi-supervised node classification.
Extensive experiments  also suggest that \model\ has better robustness and generalization,  as well as lower risk of over-smoothing. 
Our work may be able to draw more academic attention to the randomness in graphs, rethinking what makes GNNs work and perform better.

}

\hide{

Importantly, these refinements can be concisely concluded as the diagonal matrix and adaptive encoders, resulting in our \rank\ model.  
Second, we empirically reveal information redundancy  in graph-structured data and propose a graph dropout-based sampling mechanism for the ensemble of exponential subgraph models, resulting in our \nsgcn\ model. 

Excitingly, our node ranking-aware propagation can cover many network attention mechanisms, including node attention, (multiple-hop) edge attention, and path attention, and especially, some existing attention-based GCNs, e.g., GAT, can be reformulated as special cases. Some previous network sampling-based GCNs, e.g., GraphSAGE and FastGCN, may also benefit from our network ensemble and disagreement minimization ideas. 
The graph convolutional modules proposed in this work, including the diagonal matrix module, node-wise non-linear activation encoders, and the training and ensemble framework of sampling-based GCNs, can be reused in other graph neural architectures, and we would like to explore these applications of our model for future work.

Our primary contribution to graph convolutional networks originates from neighborhood aggregation and network sampling. 
There has also been a lot of progress in improving GCNs in various ways. 
For example, taking adversarial attack on graphs into account can make GCNs more robust~\cite{zugner2018adversarial, dai2018adversarial}; %using generative adversarial nets (GANs) to generate fake graph nodes for better node classification~\cite{ding2018semi};
leveraging sub-graph selection algorithms can address the memory and computational resource problems on much larger-scale graph data~\cite{gao2018large}. 
%active learning query strategy can be utilized in GCNs for alleviating the cost of manually labelling network data. 
%We think more powerful GCNs can be designed if these ideas from other researchers are explored when building upon our framework.
} %end of hide =====================

\begin{acks}
The work is supported by the NSFC for Distinguished Young Scholar (61825602) and 
Tsinghua-Bosch Joint ML Center. 
\end{acks}

\clearpage

%\section{Rights Information}
%
%Authors of any work published by ACM will need to complete a rights form. Depending on the kind of work, and the rights management choice made by the author, this may be copyright transfer, permission, license, or an OA (open access) agreement.
%
%Regardless of the rights management choice, the author will receive a copy of the completed rights form once it has been submitted. This form contains \LaTeX\ commands that must be copied into the source document. When the document source is compiled, these commands and their parameters add formatted text to several areas of the final document:
%\begin{itemize}
%\item the ``ACM Reference Format'' text on the first page.
%\item the ``rights management'' text on the first page.
%\item the conference information in the page header(s).
%\end{itemize}
%
%Rights information is unique to the work; if you are preparing several works for an event, make sure to use the correct set of commands with each of the works.

%\balance
%
% The next two lines define the bibliography style to be used, and the bibliography file.
\bibliographystyle{ACM-Reference-Format}
\bibliography{sample-base}
\clearpage
\appendix
%\newpage

%\yd{put Reproducibility first}

\section{Appendix}
\setcounter{thm}{0}
\setcounter{lemma}{0}
\setcounter{footnote}{4}

\subsection{Implementation Note}
\label{sec:imp}
\subsubsection{Running environment.} The experiments are conducted on a single Linux server with Intel(R) Xeon(R) CPU Gold 6420 @ 2.60GHz, 376G RAM and 10 NVIDIA GeForce RTX 3090TI-24GB. The Python version is 3.8.5.

\subsubsection{Implementation details.} We implement GFPush with C++, and use OpenMP to perform parallelization. We use Pytorch to implement the training process of \model, and use pybind\footnote{\url{https://github.com/pybind/pybind11}} to create Python binding for approximation module. In \model\ and other baselines, we use BatchNorm~\cite{ioffe2015batch} and gradient clipping~\cite{pascanu2013difficulty} to stabilize the model training, and adopt Adam~\cite{kingma2014adam} for optimization.

\subsubsection{Dataset details.} 
\label{sec:dataset_details}
There are totally 7 datasets used in this paper, that is, Cora, Citeseer, Pubmed, AMiner-CS, Reddit, Amazon2M and MAG-Scholar-C. Our preprocessing scripts for Cora, Citeseer and Pubmed are implemented with reference to the codes of Planetoid\footnote{\url{https://github.com/kimiyoung/planetoid}}. Following the experimental setup used in~\cite{kipf2016semi,yang2016revisiting,Velickovic:17GAT}, we run 100 trials with random seeds for the results on Cora, Citeseer and Pubmed reported in Section 4.2.
AMiner-CS is constructed by Feng et al.~\cite{feng2020grand} based on the AMiner citation network~\cite{tang2008arnetminer}. In AMiner-CS, each node represents a paper, the edges are citation relations, labels are research topics of papers.
Reddit is published by Hamilton et al.~\cite{hamilton2017inductive}, in which each node represents a post in the Reddit community, a graph link represents the two posts have been commented by the same user. The task is to predict the category of each post. Amazon2M is published by Chiang et al.~\cite{Chiang2019ClusterGCN}, where each node is a product, each edge denotes the two products are
purchased together, labels represent the categories of products.
MAG-Scholar-C is constructed by Bojchevski~\cite{bojchevski2020scaling} based on Microsoft Academic Graph (MAG)~\cite{sinha2015an}, in which nodes refer to papers, edges represent citation relations among papers and features are bag-of-words  of paper abstracts.

For AMiner-CS, Reddit, Amazon2M and MAG-Scholar-C, we use 20$\times$\#classes for training, 30$\times$\#classes nodes for validation and the remaining nodes for test. For Aminer, Reddit and  MAG-Scholar-C, we randomly sample the same number of nodes for each class---20 nodes per class for training and 30 nodes per class for validation. For Amazon2M, we uniformly sample all the training and validation nodes from the whole datasets, as the node counts of some classes are less than 20. 
For these datasets, we report the average results of 10 trails with random splits. 
%For MAG-Scholar-C, we report the average results of 10 trails (10 random splits $\times$ 1 run per split).

%We obtain this dataset from\footnote{\url{}}

%We also evaluate our methods on six relatively large datasets,  Cora-Full is proposed in~\cite{Bojchevski2017DeepGE}. Coauthor CS, Coauthor Physics, Amazon Computers and Amazon Photo are proposed in~\cite{shchur2018pitfalls}. We download the processed versions of the five datasets here\footnote{\url{https://github.com/shchur/gnn-benchmark}}.\AMiner-CS CS is conducted by ourselves based on AMiner-CS citation network\cite{tang2008arnetminer}. In AMiner-CS CS, each node corresponds to a paper in computer science, and edges represent citation relations between papers.These papers are manually categorized into 18 topics based on their publication venues.We use averaged GLOVE-100~\cite{pennington-etal-2014-glove} word vector of paper abstract as the node feature vector. Our goal is to predict the corresponding topic of each paper based on feature matrix and citation graph structure.The corresponding results on the six datasets are introduced in Appendix \ref{exp:large_data}. 

\subsubsection{Hyperparameter Selection.} 
\label{sec:hyper_selection}
For results in Table 2, we adjust hyperparameters of \model\ on validation set, and use the best configuration for prediction, and the results of other baseline methods are taken from previous works~\cite{Velickovic:17GAT, chen2020simple,feng2020grand}. For results in Table 3-5, we conduct detailed hyperparameter search for both \model\ and other GNN baselines (i.e., FastGCN, GraphSAINT, SGC, GBP and PPRGo). %The detailed hyperparameter configuration of each method is reported below. 
For each search, we run 3 experiments with random seeds, and select the hyperparameter configuration which gets the best average accuracy on validation set. Then we train model with the selected configuration. 

The hyperparameter selection for \model consists of two stages: We first conduct search for basic hyperparameters of neural network. Specifically, we search learning rate $lr$ from \{$10^{-2},10^{-3},10^{-4}$\}, weight decay rate $wr$ from \{$0,10^{-5},10^{-3},10^{-2}$\}, number of hidden layer $L_m$ from \{1,2\}  and dimension of hidden layer $d_h$ from \{32, 64, 128, 256, 512, 1024\}. 
%he search space of these parameters are listed in Table~\ref{tab:hyper1}. T

In the second stage, we fix these basic hyperparameters as best configurations and search the following specific hyperparameters:
%\model\ introduces several specific hyperparameters: 
DropNode rate $\delta$, augmentation times per batch $M$, threshold $r_{max}$ in GFPush, maximum neighborhood size $k$, propagation order $N$, confidence threshold $\gamma$, maximum consistency loss weight $\lambda_{max}$, size of unlabeled subset $|U'|$ and consistency loss function $\mathcal{D}$. To reduce searching cost, we keep some hyperparameters fixed.  Specifically, we fix $\delta=0.5$, $M=2$ and $\gamma=\frac{2}{\text{\#classes}}$ across all datasets. We set $|U'|=|U|$ for Cora, Pubmed and Citeseer, and set $|U'|=10000$ for other datasets. We also provide an analysis for the effect of $|U'|$ in Appendix~\ref{sec:add_exp}.
We adopt $KL$ divergence as the consistency loss function for AMiner-CS, Reddit and Amazon2M, and use $L_2$ distance for other datasets. This is because $L_2$ distance is easily to suffer from gradient vanishing problem when dealing with datasets with a large number of classes. 
%are not sensitive to performance and fix them to 0.5 and 2 respectively. We also fix $\gamma$ to $2/\#classes$, and find it works well across all datasets. 
We then conduct hyperparameter selection for $r_{max}$, $k$, $N$ and $\lambda_{max}$. Specifically, we search $r_{max}$ from \{$10^{-5},10^{-6},10^{-7}$\}, $k$ from \{16, 32, 64, 128\}, $N$ from \{2, 4, 6, 8, 10, 20\} and $\lambda_{max}$ from \{0.5, 0.8, 1.0, 1.2, 1.5\}.
%the configurations reported in Table~\ref{tab:hyper2}. 
The selected best hyperparameter configurations of \model\ are reported in Table~\ref{tab:grand+p}.
\hide{
\begin{table}[h!]
    \centering
    \small
% \footnotesize
 \caption{Search Space of Basic Hyperparameters.}
 \vspace{-0.1in}
    \begin{tabular}{c|c}
    \toprule
    Hyperparameter & Search Space\\
 \midrule   
 $lr$  & \{$10^{-2},10^{-3},10^{-4}$\} \\
 $wr$ & \{$0,10^{-1},10^{-2},10^{-3}$\} \\
 $L$ & \{1,2\}        \\
 $h$ & \{32, 64, 128, 256, 512, 1024\} \\
\bottomrule
 \end{tabular}
 \label{tab:hyper1}
\end{table}

\begin{table}[h!]
    \centering
    \small
% \footnotesize
 \caption{Search Space of \model's Hyperparameters.}
 \vspace{-0.1in}
    \begin{tabular}{c|c}
    \toprule
    Hyperparameter & Search Space\\
 \midrule   
 $r_{max}$  & \{$10^{-5},10^{-6},10^{-7}$\} \\
 $k$ & \{16, 32, 64, 128\} \\
 $N$ & \{2, 4, 6, 8, 10\}        \\
 $\lambda_{max}$ & \{0.5, 0.8, 1.0, 1.2, 1.5\} \\
\bottomrule
 \end{tabular}
 \label{tab:hyper2}
\end{table}
}
 
\begin{table}[t]
\small
\caption{Hyperparameter configuration of \model.}
\vspace{-5px}
\label{tab:grand+p}
\setlength{\tabcolsep}{0.8mm}
\begin{tabular}{c c|c c c c c c c c c c c}
\toprule
& & lr & wr & $L_m$ & $d_h$ & $r_{max}$ & $k$ & $N$ & $\lambda_{max}$ \\
\hline
%  & & \multicolumn{2}{c|}{Alibaba} & \multicolumn{2}{c|}{Yelp2018} & \multicolumn{2}{c}{Amazon} \\ 
%  & & Recall & NDCG & Recall & NDCG & Recall & NDCG \\ \hline\hline

\multirow{3}*{Cora} & \model\ (P) & $10^{-2}$ & $10^{-3}$ & 2 & 64 & $10^{-7}$ & 32 & 20 & 1.5 \\
 & \model\ (A) & $10^{-2}$ & $10^{-3}$ & 2 & 64 & $10^{-7}$ & 32 & 4 & 1.5\\
 & \model\ (S) & $10^{-2}$ & $10^{-3}$ & 2 & 64 & $10^{-7}$ & 32 & 2 & 1.5 \\
\midrule
\multirow{3}*{Citeseer} & \model\ (P) & $10^{-3}$ & $10^{-3}$ & 2 & 256 & $10^{-7}$ & 32 & 10 & 0.8 \\
 & \model\ (A) & $10^{-3}$ & $10^{-3}$  & 2 & 256 & $10^{-7}$ & 32 & 2 & 0.8 \\
 & \model\ (S) & $10^{-3}$ & $10^{-3}$  & 2 & 256 & $10^{-7}$ & 32 & 2 & 0.8 \\
\midrule
\multirow{3}*{Pubmed} & \model\ (P) & $10^{-2}$ & $10^{-2}$ & 1 & -  & $10^{-5}$ & 16 & 6 & 1.0 &\\
 & \model\ (A) & $10^{-2}$ & $10^{-2}$ & 1 & - & $10^{-5}$ & 16 & 4 & 1.0 \\
 & \model\ (S) & $10^{-2}$ & $10^{-2}$ & 1 & - & $10^{-5}$ & 16 & 2 & 1.0 \\
\midrule
\multirow{3}*{AMiner-CS} & \model\ (P) & $10^{-2}$ & $10^{-2}$ & 1 & -  & $10^{-5}$ & 64 & 6 & 1.5 \\
 & \model\ (A) & $10^{-2}$ & $10^{-2}$ & 1 & -  & $10^{-5}$ & 64 & 4 & 1.5 \\
& \model\ (S) & $10^{-2}$ & $10^{-2}$ & 1 & - & $10^{-5}$ & 64 & 2 & 1.5 \\
\midrule
\multirow{3}*{Reddit} & \model\ (P) & $10^{-4}$ & $0$ & 2 & 512 & $10^{-5}$ & 64 & 6 & 1.5\\
 & \model\ (A) & $10^{-4}$ & $0$ & 2 & 512 & $10^{-5}$ & 64 & 6 & 1.5 \\
& \model\ (S) & $10^{-4}$ & $0$ & 2 & 512 & $10^{-7}$ & 64 & 2 & 1.5\\
\midrule
\multirow{3}*{Amazon2M} & \model\ (P) & $10^{-3}$ & $10^{-5}$ & 2 & 1024 & $10^{-6}$ & 64 & 6 & 0.8  \\
 & \model\ (A) &  $10^{-3}$ & $10^{-5}$ & 2 & 1024 & $10^{-6}$ & 64 & 4 & 0.8 \\
 & \model\ (S) & $10^{-3}$ & $10^{-5}$& 2 & 1024 & $10^{-6}$ & 32 & 2 & 0.8 \\
\midrule
\multirow{3}*{MAG-Scholar-C} & \model\ (P) & $10^{-2}$ & $0$ & 2 & 32 & $10^{-5}$ & 32 & 10 & 1.0 \\
 & \model\ (A) & $10^{-2}$ & $0$ & 2 & 32 & $10^{-5}$ & 32 & 10 & 1.0 \\
 & \model\ (S) & $10^{-2}$ & $0$ & 2 & 32 & $10^{-5}$ & 32 & 2 & 1.0 \\
\bottomrule
\end{tabular}
\vspace{-0.15in}
\end{table}

\hide{
\vpara{\model.}
\model\ adopts following hyperparameters: learning rate (lr), weight decay rate (wr), the number of MLP layers $L$, hidden size of MLP $h$, DropNode rate $\delta$, augmentation times per batch $M$, threshold $r_{max}$ in GFPush, maximum neighborhood size $k$, propagation order $T$, confidence threshold $\gamma$, maximum consistency loss weight $\lambda_{min}$, maximum consistency loss weight $\lambda_{max}$, minimum \#batches for dynamic loss weight scheduling $n_{max}$, batch size of labeled nodes $b_l$,  batch size of unlabeled nodes $b_u$, distance function $\mathcal{D}(\cdot, \cdot)$ used in consistency loss ($L_2$ norm or KL divergence) and decay factor for ppr matrix $\alpha$. In our implementation, we set  $\lambda_{min}$ to $0$, set $\lambda_{max}$ to $1$, and DropNode rate $\delta=0.5$. Other hyperparameters of the three variants of \model\ on each dataset are reported in Table~\ref{tab:grand+p}.
\begin{table}[t]
\scriptsize
\caption{Hyperparameter configuration of \model.}
\vspace{-10px}
\label{tab:grand+p}
\setlength{\tabcolsep}{0.2mm}
\resizebox{0.48\textwidth}{!}{
\begin{tabular}{c c|c c c c c c c c c c c c c c c}
\hline
& & lr & wr & $L$ & $h$ & $M$ & $r_{max}$ & $k$ & $T$ & $\gamma$ & $n_{max}$ & $\tau$ & $\alpha$ & $b_l$ & $b_u$ & $\mathcal{D}$\\
\hline
%  & & \multicolumn{2}{c|}{Alibaba} & \multicolumn{2}{c|}{Yelp2018} & \multicolumn{2}{c}{Amazon} \\ 
%  & & Recall & NDCG & Recall & NDCG & Recall & NDCG \\ \hline\hline

\multirow{3}*{Cora} & \model\ (ppr) & $10^{-2}$ & $10^{-3}$ & 2 & 64 & 2 & $10^{-7}$ & 32 & 20 & 0.3 & 1000 & 0.1 & 0.2 & 50 & 100 & $L_2$\\
 & \model\ (avg) & $10^{-2}$ & $10^{-3}$ & 2 & 64 & 2 & $10^{-7}$ & 32 & 4 & 0.3 & 1000 & 0.1 & - & 50 & 100 & $L_2$\\
 & \model\ (single) & $10^{-2}$ & $10^{-3}$ & 2 & 64 & 2 & $10^{-7}$ & 32 & 2 & 0.3 & 1000 & 0.1 & - & 50 & 100 & $L_2$\\
\hline
\multirow{3}*{Citeseer} & \model\ (ppr) & $10^{-2}$ & $10^{-3}$ & 2 & 256 & 4 & $10^{-6}$ & 32 & 6 & 0.4 & 500 & 0.1 & 0.4 & 50 & 100 & $L_2$\\
 & \model\ (avg) & $10^{-2}$ & $10^{-3}$ & 2 & 256 & 4 & $10^{-6}$ & 32 & 2 & 0.3 & 1000 & 0.1 & - & 50 & 100 & $L_2$\\
 & \model\ (single) & $10^{-2}$ & $10^{-3}$ & 2 & 256 & 4 & $10^{-6}$ & 32 & 2 & 0.4 & 1000 & 0.1 & - & 50 & 100 & $L_2$\\
\hline
\multirow{3}*{Pubmed} & \model\ (ppr) & $10^{-2}$ & $10^{-2}$ & 1 & - & 2 & $10^{-5}$ & 16 & 6 & 0.6 & 100 & 0.1 & 0.5 & 5 & 100 & $L_2$\\
 & \model\ (avg) & $10^{-2}$ & $10^{-2}$ & 1 & - & 2 & $10^{-5}$ & 16 & 4 & 0.6 & 100 & 0.1 & - & 5 & 100 & $L_2$\\
 & \model\ (single) & $10^{-2}$ & $10^{-2}$ & 1 & - & 2 & $10^{-5}$ & 16 & 2 & 0.6 & 100 & 0.1 & - & 5 & 100 & $L_2$\\
\hline
\multirow{3}*{AMiner-CS} & \model\ (ppr) & $10^{-2}$ & $10^{-2}$ & 1 & - & 2 & $10^{-4}$ & 64 & 6 & 0.1 & 100 & 0.1 & 0.1 & 20 & 100 & $KL$\\
 & \model\ (avg) & $10^{-2}$ & $10^{-2}$ & 1 & - & 2 & $10^{-4}$ & 64 & 5 & 0.1 & 100 & 0.1 & - & 20 & 100 & $KL$\\
& \model\ (single) & $10^{-2}$ & $10^{-2}$ & 1 & - & 2 & $10^{-4}$ & 64 & 2 & 0.1 & 100 & 0.1 & - & 20 & 100 & $KL$\\
\hline
\multirow{3}*{Reddit} & \model\ (ppr) & $10^{-4}$ & $0$ & 2 & 512 & 2 & $10^{-5}$ & 64 & 6 & 0.1 & 100 & 0.5 & 0.05 & 50 & 50 & $KL$\\
 & \model\ (avg) & $10^{-4}$ & $0$ & 2 & 512 & 2 & $10^{-5}$ & 64 & 6 & 0.05 & 100 & 0.5 & -& 50 & 50 & $KL$\\
& \model\ (single) & $10^{-4}$ & $0$ & 2 & 512 & 2 & $10^{-5}$ & 64 & 2 & 0.05 & 100 & 0.5 & -& 50 & 50 & $KL$\\
\hline
\multirow{3}*{Amazon2M} & \model\ (ppr) & $10^{-3}$ & $0$ & 2 & 1024 & 2 & $10^{-6}$ & 64 & 4 & 0.5 & 1000 & 0.5 & 0.2& 50 & 50 & $KL$ \\
 & \model\ (avg) &  $10^{-3}$ & $0$ & 2 & 1024 & 2 & $10^{-6}$ & 64 & 4 & 0.5 & 1000 & 0.5 & - & 50 & 50 & $KL$\\
 & \model\ (single) & $10^{-3}$ & $0$ & 2 & 1024 & 2 & $10^{-6}$ & 32 & 4 & 0.5 & 1000 & 0.5 & - & 50 & 50 & $KL$\\
\hline
\multirow{3}*{MAG-Scholar-C} & \model\ (ppr) & $10^{-2}$ & $0$ & 2 & 32 & 2 & $10^{-5}$ & 32 & 10 & 0.5 & 1000 & 0.1 & 0.2 & 20 & 20 & $L_2$\\
 & \model\ (avg) & $10^{-2}$ & $0$ & 2 & 32 & 2 & $10^{-5}$ & 32 & 10 & 0.5 & 1000 & 0.1 & - & 20 & 20 & $L_2$\\
 & \model\ (single) & $10^{-2}$ & $0$ & 2 & 32 & 2 & $10^{-5}$ & 32 & 2 & 0.5 & 1000 & 0.1 & - & 20 & 20 & $L_2$\\
\hline
\end{tabular}}
\vspace{-10px}
\end{table}

\begin{table}[ht]
\scriptsize
\caption{Hyperparameter configuration of GraphSAGE.}
\vspace{-10px}
\label{tab:graphsagep}
\resizebox{0.48\textwidth}{!}{
\begin{tabular}{c| c c c c c c }
\hline
Dataset & lr & wr & $L$ & $h$ & neighborhood size & batch size \\
\hline
AMiner-CS &$10^{-2}$ & $10^{-4}$ & 2 & 64 & 10 & 20\\
Reddit &$10^{-4}$ & $0$ & 2 & 256 & 10 & 20 \\
Amazon2M & $10^{-3}$ & 0 & 2 & 1024 & 10 & 150 \\
% MAG-Scholar-C &$10^{-2}$ & $0$ & 2 & 128 & 10 & 40\\
\hline
\end{tabular}}
\end{table}

\vpara{GraphSAGE.} For GraphSAGE, we use Mean Aggregator in neighborhood aggregation. The hyperparameters mainly include: learning rate (lr), weight decay rate (wr), the number of layers $L$, hidden size $h$, batch size and neighborhood size in node sampling. Table~\ref{tab:graphsagep} shows the corresponding hyperparameter configurations  on AMiner-CS, Reddit and Amazon2M.% and MAG-Scholar-C.

\begin{table}[ht]
\scriptsize
\caption{Hyperparameter configuration of FastGCN.}
\vspace{-10px}
\label{tab:fastgcnp}
\resizebox{0.48\textwidth}{!}{
\begin{tabular}{c| c c c c c c }
\hline
Dataset & lr & wr & $L$ & $h$ & count of sampled nodes & batch size \\
\hline
AMiner-CS &$10^{-2}$ & $5\times 10^{-4}$ & 2 & 64 & 200 & 40\\
Reddit &$10^{-4}$ & $0$ & 2 & 512 & 600 & 100 \\
Amazon2M & $10^{-3}$ & 0 & 2 & 256 & 500 & 150 \\
MAG-Scholar-C &$10^{-2}$ & $0$ & 2 & 128 & 400 & 40\\
\hline
\end{tabular}}
\vspace{-10px}
\end{table}

\vpara{FastGCN.} FastGCN uses the following hyperparameters: learning rate (lr), weight decay rate (wr), the number of layers $L$, hidden size $h$, batch size and count of sampled nodes in each layer. Table~\ref{tab:fastgcnp} shows the corresponding hyperparameter configurations  on AMiner-CS, Reddit, Amazon2M and MAG-Scholar-C.
\begin{table}[ht]
\scriptsize
\caption{Hyperparameter configuration of GraphSAINT.}
\vspace{-10px}
\label{tab:graphsaintp}
\resizebox{0.48\textwidth}{!}{
\begin{tabular}{c| c c c c c c }
\hline
Dataset & lr & weight decay & $L$ & $h$ & walk length & batch size \\
\hline
AMiner-CS &$10^{-3}$ & $0$ & 2 & 128 & 3 & 360\\
Reddit &$10^{-3}$ & $0$ & 2 & 128 & 3 & 820 \\
Amazon2M & $10^{-3}$ & 0 & 2 & 128 & 3 & 940 \\
MAG-Scholar-C &$10^{-2}$ & $0$ & 2 & 128 & 3 & 160\\
\hline
\end{tabular}}
\vspace{-10px}
\end{table}

\vpara{GraphSAINT.} For GraphSAINT, we use Random Walk Sampler for sub-graph sampling. The hyperparameters include: learning rate (lr), weight decay rate (wr), the number of layers $L$, hidden size $h$, batch size and random walk length in sub-graph sampling. Table~\ref{tab:graphsaintp} shows the corresponding hyperparameter configurations  on AMiner-CS, Reddit, Amazon2M and MAG-Scholar-C.

\begin{table}[ht]
\scriptsize
\caption{Hyperparameter configuration of SGC.}
\vspace{-10px}
\label{tab:sgcp}
\resizebox{0.48\textwidth}{!}{
\begin{tabular}{c| c c c c c c }
\hline
Dataset & lr & weight decay & $L$ & $h$ & propagation order & batch size \\
\hline
AMiner-CS &$10^{-2}$ & $5\times10^{-4}$ & 2 & 1024 & 2 & 150\\
Reddit &$10^{-3}$ & $0$ & 2 & 1024 & 2 & 150 \\
Amazon2M & $10^{-4}$ & 0 & 2 & 1024 & 2 & 150 \\
% MAG-Scholar-C &$10^{-3}$ & $0$ & 2 & 128 & 400 & 40\\
\hline
\end{tabular}}
\vspace{-10px}
\end{table}
\vpara{SGC.} The hyperparameters of SGC include: learning rate (lr), weight decay rate (wr), the number of MLP layers $L$, hidden size $h$, batch size and propagation order. Table~\ref{tab:sgcp} shows the corresponding hyperparameter configurations  on AMiner-CS, Reddit, Amazon2M.

\begin{table}[ht]
\scriptsize
\caption{Hyperparameter configuration of GBP.}
\vspace{-10px}
\label{tab:gbpp}
\resizebox{0.48\textwidth}{!}{
\begin{tabular}{c| c c c c c c c}
\hline
Dataset & lr & wr & $L$ & $h$ & $\alpha$ &  $r_{max}$ & batch size\\
\hline
AMiner-CS &$10^{-2}$ & $10^{-4}$ & 2 & 64 & 0.1 & $10^{-6}$ & 30\\
Reddit &$10^{-4}$ & $10^{-5}$ & 2 & 1024 & 0.1 & $10^{-6}$ & 100 \\
Amazon2M & $10^{-3}$ & 0 & 2 & 1024 & 0.2 & $10^{-7}$ & 100 \\
% MAG-Scholar-C &$10^{-3}$ & $0$ & 2 & 128 & 400 & 40\\
\hline
\end{tabular}}
\vspace{-10px}
\end{table}

\vpara{GBP.} For GBP, we adopt personalized pagerank matrix for propagation. The hyperparameters consist of  learning rate (lr), weight decay rate (wr), the number of MLP layers $L$, hidden size $h$, batch size, decay factor $\alpha$ and matrix approximation threshold $r_{max}$. Table~\ref{tab:gbpp} shows the corresponding hyperparameter configurations  on AMiner-CS, Reddit, Amazon2M.

\begin{table}[ht]
\scriptsize
\caption{Hyperparameter configuration of PPRGo.}
\vspace{-10px}
\label{tab:pprgop}
\resizebox{0.48\textwidth}{!}{
\begin{tabular}{c| c c c c c c c}
\hline
Dataset & lr &  wr & $L$ & $h$ & $\alpha$ & $\epsilon$ & batch size \\
\hline
AMiner-CS &$10^{-2}$ & $10^{-2}$ & 1 & - & 0.1 & $10^{-4}$ & 40\\
Reddit &$10^{-4}$ & $0$ & 2 & 1024 & 0.1 & $10^{-5}$& 100 \\
Amazon2M & $10^{-3}$ & 0 & 2 & 1024 & 0.1 & $10^{-7}$ &100 \\
MAG-Scholar-C &$10^{-2}$ & $0$ & 2 & 32 & 0.2 & $10^{-5}$ & 40\\
\hline
\end{tabular}}
\vspace{-10px}
\end{table}
\vpara{PPRGo.} The hyperparameters of PPRGo consist of  learning rate (lr), weight decay rate (wr), the number of MLP layers $L$, hidden size $h$, batch size, decay factor $\alpha$ and matrix approximation threshold $\epsilon$. Table~\ref{tab:pprgop} shows the corresponding hyperparameter configurations  on AMiner-CS, Reddit, Amazon2M and MAG-Scholar-C.
}

%\subsubsection{Details of Ablation Study in Section \ref{sec:ablation}}
\subsection{Theorem Proofs}
\label{sec:proof}
%In this section, we provide the detailed proofs for the Theorem \ref{thm1} and Theorem \ref{thm2} in Section \ref{sec:theory}.
To prove Theorem \ref{thm1}, we first introduce the following lemma:
\begin{lemma}
	\label{lemma1_appendix}
For any reserve vector $\mathbf{q}^{(n)}$, residue vector $\mathbf{r}^{(n)}$ and random walk transition vector $\mathbf{P}^{n}_s=  (\widetilde{\mathbf{D}}^{-1}\widetilde{\mathbf{A}})^n_s$ ($0 \leq n \leq N$), we have:
\begin{equation}
\label{equ:lema}
\small
\mathbf{P}^{n}_s = \mathbf{q}^{(n)} + \sum_{i=1}^{n}  (\mathbf{P}^i)^\mathsf{T} \cdot \mathbf{r}^{(n-i)}
\end{equation}
\end{lemma}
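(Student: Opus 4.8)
The plan is to prove the stated identity as a \emph{loop invariant} of Algorithm~\ref{alg:GFPush}: I will show that
\[
\mathbf{P}^{n}_s = \mathbf{q}^{(n)} + \sum_{i=1}^{n} (\mathbf{P}^i)^\mathsf{T} \cdot \mathbf{r}^{(n-i)}
\]
holds for every $0 \le n \le N$ at initialization and is preserved by each individual push operation, so that by induction on the number of pushes it holds throughout the execution and in particular at termination. The single algebraic fact I rely on is that a push at $v$ distributes mass along the $v$-th row of $\mathbf{P}$: since $\mathbf{P}(v,u)=\widetilde{\mathbf{A}}(v,u)/\mathbf{d}_v$ equals $1/\mathbf{d}_v$ for $u\in\mathcal{N}_v$, adding $\mathbf{r}^{(n-1)}_v/\mathbf{d}_v$ to each neighbor is exactly adding the vector $\mathbf{r}^{(n-1)}_v\,\mathbf{P}^\mathsf{T}\mathbf{e}^{(v)}$ to $\mathbf{r}^{(n)}$, and simultaneously the same amount to $\mathbf{q}^{(n)}$.

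For the base case, at initialization $\mathbf{q}^{(0)}=\mathbf{r}^{(0)}=\mathbf{e}^{(s)}$ and all higher-order vectors are $\vec 0$. For $n=0$ the sum is empty and $\mathbf{q}^{(0)}=\mathbf{e}^{(s)}=\mathbf{P}^0_s$; for $n\ge 1$ the only surviving summand is $i=n$, giving $(\mathbf{P}^n)^\mathsf{T}\mathbf{r}^{(0)}=(\mathbf{P}^n)^\mathsf{T}\mathbf{e}^{(s)}=\mathbf{P}^n_s$, since the $s$-th column of $(\mathbf{P}^n)^\mathsf{T}$ is the $s$-th row of $\mathbf{P}^n$. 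Hence all $N+1$ instances of the invariant hold initially.

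For the inductive step I would fix a push at node $v$ occurring in the $n$-th outer iteration, set $\Delta=\mathbf{r}^{(n-1)}_v$, and record the three simultaneous changes: $\mathbf{r}^{(n-1)}$ decreases by $\Delta\,\mathbf{e}^{(v)}$, while both $\mathbf{q}^{(n)}$ and $\mathbf{r}^{(n)}$ increase by $\Delta\,\mathbf{P}^\mathsf{T}\mathbf{e}^{(v)}$. I then check that the right-hand side of each level-$m$ invariant is unchanged; only levels $m\ge n$ can be touched. For $m=n$, the increase of $\mathbf{q}^{(n)}$ by $\Delta\,\mathbf{P}^\mathsf{T}\mathbf{e}^{(v)}$ is cancelled exactly by the $i=1$ term $(\mathbf{P}^1)^\mathsf{T}\mathbf{r}^{(n-1)}$, which drops by $\Delta\,\mathbf{P}^\mathsf{T}\mathbf{e}^{(v)}$. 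For $m>n$, $\mathbf{q}^{(m)}$ is untouched and the net change of the sum comes from the $i=m-n$ term (through $\mathbf{r}^{(n)}$) and the $i=m-n+1$ term (through $\mathbf{r}^{(n-1)}$); using $(\mathbf{P}^{m-n})^\mathsf{T}\mathbf{P}^\mathsf{T}=(\mathbf{P}^{m-n+1})^\mathsf{T}$, the gain $\Delta\,(\mathbf{P}^{m-n+1})^\mathsf{T}\mathbf{e}^{(v)}$ and the loss $\Delta\,(\mathbf{P}^{m-n+1})^\mathsf{T}\mathbf{e}^{(v)}$ cancel. Levels $m<n$ involve none of $\mathbf{q}^{(n)}$, $\mathbf{r}^{(n)}$, $\mathbf{r}^{(n-1)}$, so they are trivially unchanged.

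The main obstacle is bookkeeping rather than depth: I must track exactly which of the $N+1$ invariants a single push touches and pair up the contributions, the crucial point being that $\mathbf{q}^{(n)}$ and $\mathbf{r}^{(n)}$ receive the \emph{same} increment during a push but subsequently diverge (only $\mathbf{r}^{(n)}$ is later drained by pushes into level $n+1$, whereas $\mathbf{q}^{(n)}$ is a running total), and that the transpose identity $(\mathbf{P}^{j})^\mathsf{T}\mathbf{P}^\mathsf{T}=(\mathbf{P}^{j+1})^\mathsf{T}$ shifts the residue contribution leaving level $n$ up one order to match the one leaving level $n-1$. Once these cancellations are verified, induction on the total number of push operations yields the invariant at termination, which is exactly the claimed identity.
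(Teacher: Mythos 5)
Your proposal is correct and follows essentially the same route as the paper's proof: induction over individual push operations, with the base case checked at initialization and the invariant at level $m$ verified in the same three cases ($m<n$, $m=n$, $m>n$) via the same cancellations, including the key identity $(\mathbf{P}^{m-n})^\mathsf{T}\mathbf{P}^\mathsf{T}=(\mathbf{P}^{m-n+1})^\mathsf{T}$. If anything, your bookkeeping is slightly more careful than the paper's, since you record all three simultaneous changes of a push at once and explicitly note that $\mathbf{q}^{(n)}$ and $\mathbf{r}^{(n)}$ receive identical increments during iteration $n$ (the assignment $\mathbf{q}^{(n)}_u \leftarrow \mathbf{r}^{(n)}_u$ acting as an increment because the two vectors coincide until level $n$ is later drained), a point the paper leaves implicit.
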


\begin{proof}
\small
We prove the Lemma by induction. For brevity, we use $\mathcal{RHS}^{(n)}$ to denote the right hand side of Equation~\ref{equ:lema}. In Algorithm 1, $\mathbf{q}^{(n)}$ and $\mathbf{r}^{(n)}$ are initialized as $\vec{0}$ for $1 \leq n\leq N$, $\mathbf{r}^{(0)}$ and $\mathbf{q}^{(0)}$ are initialized as $\mathbf{e}^{(s)}$. Thus, Equation~\ref{equ:lema} holds at the algorithm beginning based on the following facts:
$$\mathcal{RHS}^{(0)}
= \mathbf{e}^{(s)} = \mathbf{P}^{0}_s,
$$
$$
\mathcal{RHS}^{(n)} = (\mathbf{P}^{n})^\mathsf{T}\cdot \mathbf{e}^{(s)} = \mathbf{P}^n_s, \quad 1\leq n \leq N.
$$
Then we assume Equation~\ref{equ:lema} holds at beginning of the $n'-$th iteration, we will show that the equation is still correct after a push operation on node $v$.
%with $\mathbf{R}^{(t')}(s,v)>\mathbf{d}_v\cdot r_{max}$. 
We have three cases with different values of $n$:

\textbf{1)} When $n < n'$, the push operation does not change $\mathbf{q}^{(n)}$ and $\mathbf{r}^{(n-i)}, 1\leq i \leq n$. Thus Equation~\ref{equ:lema} holds for $n < n'$.

\textbf{2)} When $n = n'$, the push operation decrements $\mathbf{r}^{(n-1)}$ by $\mathbf{r}^{(n-1)}_v\cdot \mathbf{e}^{(v)}$ and increments $\mathbf{q}^{(n)}$ by $\sum_{u \in \mathcal{N}_v}\mathbf{r}^{(n-1)}_v/\mathbf{d}_v \cdot \mathbf{e}^{(u)}$. Consequently, we have
$$    \mathcal{RHS}^{(n)} = \mathbf{P}^n_s + \sum_{u\in \mathcal{N}_v}\mathbf{r}^{(n-1)}_v/\mathbf{d}_v \cdot \mathbf{e}^{(u)} -\mathbf{r}^{(n-1)}_v \cdot \mathbf{P}_v
 =  \mathbf{P}^n_s + \vec{0} = \mathbf{P}^n_s.
$$
Thus Equation~\ref{equ:lema} holds for $n = n' + 1$.
%For $u\notin  \mathcal{N}(v)$, $\mathcal{RHS}_u$ will not change, thus Equation~\ref{equ:lema} holds for $t = t' +1$.

\textbf{3)} When $n > n'$, the push operation will decrease $\mathbf{r}^{(n')}$ by $\mathbf{r}^{(n')}_v \cdot \mathbf{e}^{(v)}$ and increase $\mathbf{r}^{(n'+1)}$ by $\sum_{u \in \mathcal{N}_v}\mathbf{r}^{(n')}_v/\mathbf{d}_v \cdot \mathbf{e}^{(u)}$. Thus we have
\begin{align*}
\small
    \mathcal{RHS}^{(n)} &=  \mathbf{P}^n_s +  (\mathbf{P}^{n-n'-1})^\mathsf{T} \sum_{u\in \mathcal{N}_v} \mathbf{r}^{(n')}_v/\mathbf{d}_v \cdot \mathbf{e}^{(u)} - (\mathbf{P}^{n-n'})^\mathsf{T} (\mathbf{r}^{(n')}_v \cdot \mathbf{e}^{(v)})\\
    & =  \mathbf{P}^n_s + (\mathbf{P}^{n-n'-1})^\mathsf{T}\cdot \left(\sum_{u\in \mathcal{N}_v} \mathbf{r}^{(n')}_v/\mathbf{d}_v \cdot \mathbf{e}^{(u)} - (\mathbf{r}^{(n')}_v \cdot \mathbf{P}_v)^\mathsf{T}\right)\\
     & =  \mathbf{P}^n_s + (\mathbf{P}^{n-n'-1})^\mathsf{T}\cdot\vec{0} = \mathbf{P}^n_s.
\end{align*}
Thus Equation~\ref{equ:lema} holds for $n > n' + 1$. 

Hence the induction holds, and the lemma is proved.

\end{proof}
Then, we could prove Theorem~\ref{thm1} as following.
 \begin{thm}
	Algorithm 1 has  $\mathcal{O}(N/r_{max})$  time complexity and $\mathcal{O}(N/r_{max})$ memory complexity, and returns $\widetilde{\mathbf{\Pi}}_s$ as an approximation of $\mathbf{\Pi}_s$ with the $L_1$ error bound: $\parallel\mathbf{\Pi}_s - \widetilde{\mathbf{\Pi}}_s\parallel_1 \leq N\cdot (2|E| +|V|) \cdot r_{max}$.
 \end{thm}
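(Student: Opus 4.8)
The plan is to establish the three claims in sequence, using Lemma~\ref{lemma1_appendix} as the backbone for the error analysis. That lemma already supplies the exact identity $\mathbf{P}^n_s = \mathbf{q}^{(n)} + \sum_{i=1}^n (\mathbf{P}^i)^\mathsf{T}\mathbf{r}^{(n-i)}$ relating the true transition vector, the reserve, and the residues, so the remaining work is a careful accounting of residue mass together with the termination condition of the \textbf{while} loop.

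For the \textbf{time and memory bounds}, I would track the total probability mass held in the residue vectors. Since $\mathbf{P}=\widetilde{\mathbf{D}}^{-1}\widetilde{\mathbf{A}}$ is row-stochastic, each push operation in step $n$ conserves mass: the amount $\mathbf{r}^{(n-1)}_v$ removed from $\mathbf{r}^{(n-1)}$ is redistributed intact into $\mathbf{r}^{(n)}$. A short induction then yields $\norm{\mathbf{r}^{(n)}}_1 \leq 1$ for every $n$. The decisive accounting observation is that a push on $v$ fires only when $\mathbf{r}^{(n-1)}_v > \mathbf{d}_v\, r_{max}$, so it removes more than $\mathbf{d}_v\, r_{max}$ mass while costing exactly $\mathbf{d}_v$ work (one update per neighbor). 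Summing over all pushes in step $n$, the removed mass is bounded by $\norm{\mathbf{r}^{(n-1)}}_1\leq 1$, hence $\sum_{\text{pushes in step } n}\mathbf{d}_v \leq 1/r_{max}$. Summing over the $N$ steps gives total work $\mathcal{O}(N/r_{max})$; since each unit of work creates at most one new nonzero entry and all vectors are stored sparsely, the stored nonzeros are also $\mathcal{O}(N/r_{max})$, giving the memory bound.

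For the \textbf{error bound}, I would subtract to get $\mathbf{P}^n_s - \mathbf{q}^{(n)} = \sum_{i=1}^n (\mathbf{P}^i)^\mathsf{T}\mathbf{r}^{(n-i)}$ from the lemma and take $L_1$ norms. The key structural fact is that $(\mathbf{P}^i)^\mathsf{T}$ is column-stochastic (its columns are the rows of the row-stochastic $\mathbf{P}^i$), so it exactly preserves the $L_1$ norm of nonnegative vectors; as every residue stays nonnegative, $\norm{(\mathbf{P}^i)^\mathsf{T}\mathbf{r}^{(n-i)}}_1 = \norm{\mathbf{r}^{(n-i)}}_1$. This collapses the error to $\norm{\mathbf{P}^n_s-\mathbf{q}^{(n)}}_1 \leq \sum_{j=0}^{n-1}\norm{\mathbf{r}^{(j)}}_1$. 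Then I invoke termination: once step $j+1$ finishes (for $j\leq N-1$) no node satisfies $\mathbf{r}^{(j)}_v > \mathbf{d}_v\, r_{max}$, so $\norm{\mathbf{r}^{(j)}}_1 \leq r_{max}\sum_v \mathbf{d}_v = r_{max}(2|E|+|V|)$, the last equality using $\mathbf{d}_v = \deg(v)+1$ from the self-loop augmentation. Thus $\norm{\mathbf{P}^n_s-\mathbf{q}^{(n)}}_1 \leq n(2|E|+|V|)r_{max}$, and combining over orders through $\mathbf{\Pi}_s - \widetilde{\mathbf{\Pi}}_s = \sum_{n=0}^N w_n(\mathbf{P}^n_s - \mathbf{q}^{(n)})$ with $n\leq N$ and $\sum_n w_n = 1$ yields $\norm{\mathbf{\Pi}_s - \widetilde{\mathbf{\Pi}}_s}_1 \leq N(2|E|+|V|)r_{max}$.

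The step I expect to be the main obstacle is pinning down exactly which residue vectors obey the termination inequality: $\mathbf{r}^{(N)}$ is written in the last iteration but is never pushed, so it need not be small, and a careless argument could wrongly fold it into the bound. The estimate closes only because the decomposition involves residues $\mathbf{r}^{(n-i)}$ with indices $0\le n-i\le n-1\le N-1$, all of which have already been fully drained below threshold. The second delicate point is justifying the exact norm preservation of $(\mathbf{P}^i)^\mathsf{T}$, which I would be careful to derive from both column-stochasticity and the nonnegativity of the residues maintained throughout the run.
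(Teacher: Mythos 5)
Your proposal is correct and follows essentially the same route as the paper's own proof: the same backbone identity from Lemma~\ref{lemma1_appendix}, the same per-step mass-accounting argument yielding $\sum_{v}\mathbf{d}_v \leq 1/r_{max}$ for the time and memory bounds, and the same residue-based $L_1$ estimate combined with $n \leq N$ and $\sum_{n=0}^N w_n = 1$. Your two cautionary remarks---that only the residues $\mathbf{r}^{(0)},\dots,\mathbf{r}^{(N-1)}$ (never the undrained $\mathbf{r}^{(N)}$) enter the decomposition, and that $\norm{(\mathbf{P}^i)^\mathsf{T}\mathbf{r}^{(n-i)}}_1 = \norm{\mathbf{r}^{(n-i)}}_1$ rests on column-stochasticity together with nonnegativity of the residues---are precisely the steps the paper leaves implicit, here made explicit.
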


\begin{proof}
\small
Let $\mathcal{V}_n$ be the set of nodes to be pushed in step $n$. When the push operation is performed on $v\in \mathcal{V}_n$, the value of $\parallel \mathbf{r}^{(n-1)}\parallel_1$ will be decreased by at least $r_{max} \cdot \mathbf{d}_{v}$. Since $\parallel\mathbf{r}^{(n-1)}\parallel_1 \leq 1$, we must have $\sum_{v \in \mathcal{V}_n}  \mathbf{d}_{v} \cdot r_{max} \leq 1$, thus:
\begin{equation}
\label{equ:tbound}
    \sum_{v\in\mathcal{V}_n} \mathbf{d}_{v} \leq 1/r_{max}.
\end{equation}
\vpara{Time Complexity.} For the push operation on $v$ in step $n$, we need to perform $\mathbf{d}_{v}$ times of updates for $\mathbf{r}^{(n)}$ . So the total time of push operations in step $n$ is bounded by $\sum_{v \in \mathcal{V}_n} \mathbf{d}_{v}$. %Similarly, when the $t$-th step iteration finishes, the count of non-zero elements of $\mathbf{r}^{(t+1)}$ is no more than $\sum_{v \in \mathcal{V}_t} \mathbf{d}_{v}$. 
%Thus the time of copy operation $\mathbf{q}^{(t+1)}_u \leftarrow \mathbf{r}^{(t)}_u$ (Line 4 of Algorithm 1) is also bounded by $\sum_{i=1}^{\mathcal{T}_t} \mathbf{d}_{v_i}$. 
Therefore, based on Equation~\ref{equ:tbound}, the time complexity of each step is bounded by $\mathcal{O}(1/r_{max})$  and the total time complexity of Algorithm 1 has a  $\mathcal{O}(N/r_{max})$ bound.
 
\vpara{Memory Complexity.}
When the $n$-th step iteration finishes, the count of non-zero elements of $\mathbf{r}^{(n)}$ is no more than $\sum_{v \in \mathcal{V}_n} \mathbf{d}_{v}$, as the push operation on $v$ only performs $\mathbf{d}_v$ times of updates for $\mathbf{r}^{(n)}$. 
%We have shown that $\mathbf{R}^{(t+1)}_s$ consists of at most $\sum_{i=1}^{\mathcal{T}_t} \mathbf{d}_{v_i}$ non-zero elements at th beginning of step $t+1$. 
Thus the count of non-zero elements of $\mathbf{q}^{(n)}$ is also less than $\sum_{v \in \mathcal{V}_n} \mathbf{d}_{v}$. 
According to Equation~\ref{equ:tbound}, we can conclude that  $\mathbf{\widetilde{\Pi}}_s$ has at most $N/r_{max}$ non-zero elements.
In implementation, all the vectors are stored as sparse structures. Thus Algorithm 1 has a memory complexity of $\mathcal{O}(N/r_{max})$.

\vpara{Error Bound.} According to Lemma~\ref{lemma1_appendix}, we can conclude the following equations:
\begin{equation}
\label{equ:l1bound1}
\begin{aligned}
    \norm{ \mathbf{P}_s^{(n)} - \mathbf{q}^{(n)} }_1 
    &= \norm{\sum_{i=1}^{n}  (\mathbf{P}^i)^\mathsf{T} \cdot \mathbf{r}^{(n-i)} }_1  \\
    &=  \norm{\sum_{i=1}^{n}\mathbf{r}^{(n-i)} }_1 \\
    & \leq \sum_{i=1}^{n} \norm{\mathbf{r}^{(n-i)}}_1.
\end{aligned}
\end{equation}
After algorithm termination, we have $0 \leq \mathbf{r}^{(n)}_v \leq \mathbf{d}_v \cdot r_{max}$ for all $v \in V$.
%and $0 \leq t \leq T-1$. 
Hence,
\begin{equation}
\norm{\mathbf{r}^{(n)}}_1 = \sum_{v\in V} \mathbf{r}^{(n)}_v \leq \sum_{v\in V}\mathbf{d}_v \cdot r_{max} = (2|E| +|V|) \cdot r_{max}.
\end{equation}
According to Equation~\ref{equ:l1bound1}, we can conclude that $\norm{\mathbf{P}_s^{(n)} - \mathbf{q}^{(n)} }_1 \leq n\cdot (2|E| +|V|) \cdot r_{max}$. Further more, we have:
\begin{equation}
    \begin{aligned}
        \norm{\mathbf{\Pi}_s - \widetilde{\mathbf{\Pi}}_s}_1  \leq \sum_{n=0}^N w_n \norm{\mathbf{P}_s^{(n)} - \mathbf{q}^{(n)}}_1
         \leq \sum_{n=0}^N w_n \cdot n \cdot (2|E| +|V|) \cdot r_{max} 
    \end{aligned}
\end{equation}
As for $0 \leq w_n $ and $\sum_{n=0}^N w_n=1$, hence: 
\begin{equation}
    \sum_{n=0}^N w_n \cdot n \cdot (2|E| +|V|) \cdot r_{max} \leq N \cdot (2|E| +|V|) \cdot r_{max},
\end{equation}    
which indicates $\parallel\mathbf{\Pi}_s - \widetilde{\mathbf{\Pi}}_s\parallel_1 \leq N\cdot (2|E| +|V|) \cdot r_{max}$.
\end{proof}

\subsection{Additional Experiments}
\label{sec:add_exp}
\begin{table}[h]
 	\scriptsize
	\caption{Effects of unlabeled subset size ($|U'|$).}
	\vspace{-0.1in}
	\label{tab:unlabelsize}
	\setlength{\tabcolsep}{1.mm}\begin{tabular}{c|ccc|ccc|ccc}
% 		\toprule
		\toprule
		\multirow{2}{*}{|U'|}  & \multicolumn{3}{c|}{Aminer} & \multicolumn{3}{c|}{Reddit} & \multicolumn{3}{c}{Amazon2M} \\
		& Acc (\%) & RT (s) & AT (ms) & Acc (\%) & RT (s) & AT (ms) & Acc (\%) & RT (s) & AT (ms) \\
		\midrule
		 0 & 51.1 $\pm$ 1.4 & 10& 149 & 92.3 $\pm$ 0.2 & 53 & 717 & 75.0 $\pm$ 0.7 & 63 & 2356 \\
%		 100& 52.2 $\pm$ 1.7 & 8.1 & 151.4 & 92.4 $\pm$ 0.3 & 56.9 & 725.6  &  75.1 $\pm$ 0.7 & 57.5 & 2454.5 \\
         $10^3$ & 53.6 $\pm$ 1.6  & 9 & 153 & 92.6 $\pm$ 1.2 & 58 & 882 & 75.2 $\pm$ 0.5 & 62 &  2630\\       
         $10^4$ & 54.2 $\pm$ 1.6  & 10 & 250 & 92.8 $\pm$ 0.2 & 62 & 2407 &  76.1 $\pm$ 0.6 & 69 & 3649 \\
	    $10^5$ & 54.4 $\pm$ 1.2  & 13 & 1121 &  92.9 $\pm$ 0.2 & 78 & 17670 & 76.3 $\pm$ 0.7 & 86 & 14250 \\
% 		\bottomrule
		\bottomrule
	\end{tabular}
	\vspace{-0.1in}
\end{table}

\vpara{Analysis for the size of $U'$.} In \model, a subset of unlabeled nodes $U'$ are sampled from $U$ for consistency regularization. To this end, we need to pre-compute the sparsified approximation $\widetilde{\mathbf{\Pi}}_v$ of row vector $\mathbf{\Pi}_v$ for each node $v \in U'$.
%$\mathbf{\Pi}_v$ with $v \in U'$.
%order to perform consistency regularization, we need to sample a subset of unlabeled nodes $U'$ from $U$ and approximate the transition vectors for nodes in $U'$ before training. 
Here we empirically analyze how the size of $U'$ affects the classification accuracy (Acc), running time (RT) and approximation time (AT) of \model. %by varying $|U'|$ from 0 to $10^5$.
Table~\ref{tab:unlabelsize} presents the results of \model\ (S) when we vary $|U'|$ from 0 to $10^5$ on AMiner-CS, Reddit and Amazon2M. We have the two observations: First, 
%The results are presented in Table~\ref{tab:unlabelsize}, where we report the  of \model\ (S) on Aminer, Reddit and Amazon2M under different settings of $|U'|$.
%The results are presented in Table~\ref{tab:unlabelsize}. 
as $|U'|$ changes from 0 (meaning the consistency loss degenerates to 0) to $10^3$, the classification performances are improved significantly with little changes on running time, which indicates the consistency regularization serves as an economic way for improving \model's generalization performance under semi-supervised setting. Second, when $|U'|$ exceeds $10^4$, the increase rate of the accuracy will slow down, while the running time and approximation time increase more faster. This observation indicates the sampling procedure on unlabeled nodes is important for ensuring model's efficiency, which also enables us to explicitly control the trade-off between effectiveness and efficiency of \model\ through the sampling size $|U'|$.
%in practice.
%that we can choose an appropriate value for $|U'|$ to balance the trade-off between effectiveness and efficiency of \model\ in practice.

% \input{reproducibility}
%% 
%% If your work has an appendix, this is the place to put it.
%\appendix
%
%\section{Research Methods}
%
%\subsection{Part One}
%
%Lorem ipsum dolor sit amet, consectetur adipiscing elit. Morbi malesuada, quam in pulvinar varius, metus nunc fermentum urna, id sollicitudin purus odio sit amet enim. Aliquam ullamcorper eu ipsum vel mollis. Curabitur quis dictum nisl. Phasellus vel semper risus, et lacinia dolor. Integer ultricies commodo sem nec semper. 
%
%\subsection{Part Two}
%
%Etiam commodo feugiat nisl pulvinar pellentesque. Etiam auctor sodales ligula, non varius nibh pulvinar semper. Suspendisse nec lectus non ipsum convallis congue hendrerit vitae sapien. Donec at laoreet eros. Vivamus non purus placerat, scelerisque diam eu, cursus ante. Etiam aliquam tortor auctor efficitur mattis. 
%
%\section{Online Resources}
%
%Nam id fermentum dui. Suspendisse sagittis tortor a nulla mollis, in pulvinar ex pretium. Sed interdum orci quis metus euismod, et sagittis enim maximus. Vestibulum gravida massa ut felis suscipit congue. Quisque mattis elit a risus ultrices commodo venenatis eget dui. Etiam sagittis eleifend elementum. 
%
%Nam interdum magna at lectus dignissim, ac dignissim lorem rhoncus. Maecenas eu arcu ac neque placerat aliquam. Nunc pulvinar massa et mattis lacinia.

\end{document}